\theoremstyle{plain}
\newtheorem{theorem}{Theorem}[section]
\newtheorem{lemma}[theorem]{Lemma}
\theoremstyle{definition}
\theoremstyle{remark}
\def\dd{\text{d}}
\def\prox{{\rm prox}}
\def\out{{\rm out}}
\def\bo{{\rm bo}}
\def\erm{{\rm erm}}
\def\amp{{\rm amp}}
\def\sign{{\rm sign}}
\def\erf{{\rm erf}}
\def\erfc{{\rm erfc}}
\def\noisestr{\tau}
\def\noisevar{\noisestr^2}
\def\wstar{\vec{w}_{\star}}
\def\yteacher{y^*}
\def\prederm{\hat{f}_{\text{ERM}}}
\def\qbo{q}
\def\mbo{m}
\def\Vbo{V}
\def\hatqbo{\hat{\qbo}}
\def\hatVbo{\hat{\Vbo}}
\def\hatmbo{\hat{\mbo}}
\def\merm{\Tilde{m}}
\def\Verm{\Tilde{V}}
\def\hatVerm{\hat{\Verm}}
\def\hatmerm{\hat{\merm}}
\def\Q{Q}
\def\hatQ{\hat{\Q}}
\def\lambdaerror{\lambda_{\text{error}}}
\def\lambdaloss{\lambda_{\text{loss}}}
\def\mat#1{\text{#1}}
\renewcommand{\vec}[1]{\bm{#1}}
\title{Theoretical characterization of uncertainty \\ in high-dimensional linear classification}
\author[1]{Lucas Clart\'e}
\author[2]{Bruno Loureiro}
\author[2]{\\Florent Krzakala}
\author[1]{Lenka Zdeborov\'a}
\affil[1]{
\'Ecole Polytechnique F\'ed\'erale de Lausanne (EPFL)\\
Statistical Physics of Computation lab.\\
CH-1015 Lausanne, Switzerland
}
\affil[2]{
\'Ecole Polytechnique F\'ed\'erale de Lausanne (EPFL)\\
Information, Learning and Physics lab.\\
CH-1015 Lausanne, Switzerland
}
\date{}
\begin{document}

\maketitle

\vskip 0.3in

\begin{abstract}
Being able to reliably assess not only the \emph{accuracy} but also the \emph{uncertainty} of models' predictions is an important endeavour in modern machine learning. Even if the model generating the data and labels is known, computing the intrinsic uncertainty after learning the model from a limited number of samples amounts to sampling the corresponding posterior probability measure. Such sampling is computationally challenging in high-dimensional problems and  theoretical results on heuristic uncertainty estimators in high-dimensions are thus scarce. 
In this manuscript, we characterise uncertainty for learning from limited number of samples of high-dimensional Gaussian input data and labels generated by the probit model. 
In this setting, the Bayesian uncertainty (i.e. the posterior marginals) can be asymptotically obtained by the approximate message passing algorithm, bypassing the canonical but costly Monte Carlo sampling of the posterior.
We then provide a closed-form formula for the joint statistics between the logistic classifier, the uncertainty of the statistically optimal Bayesian classifier and the ground-truth probit uncertainty. 
The formula allows us to investigate calibration of the logistic classifier learning from limited amount of samples. 
We discuss how over-confidence can be mitigated by appropriately regularising.
\end{abstract}

\section{Introduction}
\label{sec:intro}
%%%%%%%%%%%%%%%%%%%%%%%%
An important part of statistics is concerned with assessing the \emph{uncertainty} associated with a prediction based on data. Indeed, in many sensitive fields where statistical methods are widely used, trustworthiness can be as important as accuracy. The same holds true for modern applications of machine learning where liability is important, e.g. self-driving cars and facial recognition. Yet, assessing uncertainty of machine learning methods comes with many questions. Measuring uncertainty in complex architectures such as deep neural networks is a challenging problem, with a rich literature proposing different strategies, e.g. \cite{pmlr-v48-gal16, NIPS2017_9ef2ed4b, guo_calibration_2017, ritter_scalable_2018, ABDAR2021243, kristiadi_being_2020, Wilson2020, gupta_distribution_2020}.

On the side of theoretical control of the uncertainty estimators there is an extended work in the context of Gaussian processes \cite{Mackay1995, seeger2004gaussian, 10.5555/3023638.3023667} that offer Bayesian estimates of uncertainties based on a Gaussian approximation over the predictor class \cite{MacKay1992, ritter_scalable_2018, daxberger_laplace_2021}. Essentially when the posterior measure is a high-dimensional Gaussian then computation of the marginals is possible and well controlled. Beyond the setting of Gaussian posterior measures, well-established mathematical guarantees fall short in the high-dimensional regime where the number of data samples is of the same order as the number of dimensions even for the simplest models \cite{sur_modern_2018}. Sharp theoretical results on uncertainty quantification in high-dimensional models where posterior distributions are not Gaussian are consequently scarce.

In this manuscript we provide an exact characterisation of uncertainty for high-dimensional classification of data with Gaussian covariates and probit labels. There are two main sources of uncertainty in this model -- the more explicit is the noise level parametrizing the probit function, then there is the uncertainty coming from the fact that learning is done from a limited number of samples.  
Uncertainty estimation in classification problems aims to compute the probability that a given new sample has one of the labels. The most likely label is then typically chosen for prediction of the new labels, but the probability itself is of our interest here. We stress that we are interested in the uncertainty sample-wise, i.e. for every given sample, not on average. 
We address questions such as: a) How does the uncertainty of the logistic classifier compares with the actual Bayesian uncertainty when learning with a limited amount of data? b) How do these two uncertainty measures compare with the intrinsic model uncertainty due to the noise in the data-generating process? 

The key player in our analysis will be the Bayesian estimator of uncertainty corresponding to the probabilities of labels for new samples computed by averaging over the posterior distribution.  
Although in general computing the Bayesian estimator from posterior sampling can be prohibitively computationally costly in high-dimensions, we show that in the present model it can be efficiently done using a tailored approximate message passing (AMP) algorithm \cite{bayati2015lasso,rangan2011generalized}. Leveraging tools from the GAMP literature and its state evolution, we provide an asymptotic characterisation of the joint statistics between the minimiser of the logistic loss, the optimal Bayesian estimator over the data and the oracle estimator. This allows us to provide quantitative answers to questions a) \& b) above, and to study how uncertainty estimation depends on the parameters of the model, such as the regularisation, size of the training set and noise.

% In this manuscript we are interested in investigating the following question: how confidently can we learn with empirical risk minimisation with finite amount of data in high-dimensions? We provide an exact answer to this question in the context of high-dimensional logistic regression trained on a finite dataset composed of Gaussian covariates and probit labels. Our analysis rely on an asymptotic characterisation of the joint statistics between the empirical risk minimiser, the optimal Bayesian estimator over the data and the oracle estimator.  

\paragraph{Main contributions --}
The main contributions in this manuscript are:
\begin{itemize}
    \item It is well known that the optimal Bayesian classifier for a data model with Gaussian i.i.d. covariates and probit labels is well approximated by the generalized approximate message passing (AMP) algorithm \cite{barbier_optimal_2019,javanmard2013state}. We extend these results by showing that GAMP also provides an exact sample-wise estimation of the Bayesian uncertainty when $d\to\infty$.
    \item We provide an exact asymptotic description of the joint statistics between the uncertainty of the oracle, and the one estimated by the Bayes-optimal and logistic classifiers for the aforementioned data model. This allows us to  compare these uncertainties to each other. Comparing the oracle and Bayes optimal we quantify the uncertainty coming from the limited size of the dataset. Comparing Bayesian and logistic classifiers allows us to quantify the under- or overconfidence of the later.  
    \item  %\st{We prove that the logistic classifier has the same calibration with respect to both the oracle and Bayesian classifiers.}
    We derive an asymptotic expression of the calibration for the Bayesian and logistic classifiers. In particular, we show that the Bayesian estimator is calibrated. For the logistic classifier, our expression allows us characterize the influence of various parameters on under- or overconfidence of the logistic classifier. 
    \item We quantify the role played by the $\ell_{2}$-regularization on uncertainty estimation. In particular, we compare  cross-validation with respect to the optimisation loss (logistic) with cross-validation with respect to the $0/1$ error.  
\end{itemize}
\paragraph{Related work -- }
\textbf{Measures of uncertainty:} Measuring uncertainty in neural networks is a challenging problem with a vast literature proposing both frequentist and Bayesian approaches \cite{ABDAR2021243}. On the frequentist side, various algorithms have been introduced to evaluate and improve the calibration of machine learning models. Some of them, such as isotonic regression \cite{zadrozny_isotonic_2002}, histogram binning \cite{zadrozny_binning_2001}, Platt scaling \cite{platt_2000} or temperature scaling \cite{guo_calibration_2017} are applied to previously trained models. Other approaches aim to calibrate models during training, using well-chosen metrics \cite{mukhoti_focal_loss_2020, liu_distance_awareness_2020}, through data augmentation \cite{thulasidasan_mixup_2020} or using the iterates of the optimiser \cite{maddox_simple_2019}. Alternatively, different authors have proposed uncertainty measures based on Bayesian estimates \cite{Mattei2019, Wilson2020}. This includes popular methods such as Bayesian dropout \cite{pmlr-v48-gal16, 10.5555/3295222.3295309}, deep ensembles \cite{NIPS2017_9ef2ed4b, malinin_ensemble_distillation_2019, liu_distance_awareness_2020} and variational inference \cite{posch2019variational}, Laplace approximation \cite{kristiadi_being_2020, daxberger_laplace_2021} and tempered posteriors \cite{Adlam2020, aitchison2021a, Kapoor2022} to cite a few. Finally, some works based on conformal inference \cite{shafer_tutorial_2008} are concerned with providing non-asymptotic and distribution-free guarantees for the uncertainty \cite{angelopoulos_learn_2021, gupta_distribution_2020}.

\noindent \textbf{Exact asymptotics: } Our theoretical analysis builds on series of developments on the study of exact asymptotics in high-dimensions. The generalised approximate message passing (GAMP) algorithm and the corresponding state evolution equations appeared in \cite{rangan2011generalized,javanmard2013state}. Exact asymptotics for Bayesian estimation in generalised linear models was rigorously established in \cite{barbier_optimal_2019}. On the empirical risk minimisation side, exact asymptotics based on different techniques, such as Convex Gaussian Min-Max Theorem (GMMT) \cite{candes_phase_2018, 9053524, pmlr-v108-taheri20a, aubin_generalization_2020, pmlr-v119-mignacco20a, montanari2020generalization, loureiro_learning_2021, liang_precise_2020}, Random Matrix Theory \cite{8683376}, GAMP \cite{pmlr-v125-gerbelot20a, loureiro2021learning} and first order expansions \cite{NEURIPS20190609154f} have been used to study high-dimensional logistic regression and max-margin estimation.

\noindent \textbf{Uncertainty \& exact asymptotics:} An early discussion on the variance of high-dimensional Bayesian linear regression has been appeared in \cite{NIPS1994_e6cb2a3c, Marion_1995, Bruce_1994}. Calibration has been studied in the context of high-dimensional unregularised logistic regression in \cite{bai_dont_2021}, where it was shown that the logistic classifier is systematically overconfident in the regime where number of samples is proportional to the dimension. An equivalent result for regression was discussed in \cite{bai2021understanding}, where it was shown that quantile regression suffers from an under-coverage bias in high-dimensions. While \cite{bai_dont_2021} is the closest to the present paper, we differ from their setting in three major ways. First, they consider the behavior of unpenalized logistic regression, while we study the effect of $\ell_2$ regularization on uncertainty. Second, we compute the full joint distribution of the prediction for the oracle, the empirical risk minimizer and the Bayes optimal estimator, while \cite{bai_dont_2021} focus the discussion on the calibration of the empirical risk minimizer with respect to the oracle only. Lastly (and less importantly), \cite{bai_dont_2021} considers logit data, while we consider a probit data model.

\paragraph{Notation --} Vectors are denoted in bold. $\mathcal{N}(\vec{x}|\vec{\mu}, \Sigma)$ denotes the Gaussian density. $\odot$ denotes the (component-wise) Hadamard product. $\mathbf{1}(A)$ denotes the indicator on the set $A$.\looseness=-1
%%%%%%%%%%%%%%%%%%%%%%%%
\section{Setting}
\label{sec:setting}
%%%%%%%%%%%%%%%%%%%%%%%%
\noindent \textbf{The data model --}
Consider a binary classification problem where $n$ samples $(\vec{x}^{\mu}, y^{\mu})\in\mathbb{R}^{d}\times\{-1,1\}$, $\mu=1,\cdots, n$ are independently drawn from the following probit model:
\begin{align}
    f_{\star}(\vec{x}) \coloneqq\mathbb{P}(y^{\mu}=1|\vec{x}^\mu) = \sigma_{\star}\left(\frac{\wstar^{\top}\vec{x}^{\mu}}{\noisestr}\right), \\
    %\quad 
\vec{x}^{\mu} \sim\mathcal{N}(\vec{0},\sfrac{1}{d}\mat{I}_{d}), \quad 
\wstar\sim\mathcal{N}(\vec{0},\mat{I}_{d})\label{eq:def:data}
\end{align}
where $\sigma_{\star}(x)=\sfrac{1}{2}~\erfc(-\sfrac{x}{\sqrt{2}})$ and $\noisestr \geq 0$ parametrises the noise level. 
Note that the probit model is equivalent to generating the labels via $y^\mu = f_{0}( \wstar^{\top}\vec{x}^{\mu} + \noisestr \xi^\mu )$ with $\xi^\mu \sim \mathcal{N}(0,1)$ and $f_0(x) := \sign(x)$. In the following we will be referring to the function $f_{\star}(\vec{x})$ or to its parameters $\wstar$ as the \textit{teacher}, having in mind the teacher-student setting from neural networks. We will refer to $f_{\star}(\vec{x})$ as the \textit{oracle uncertainty} as it takes into account only the noise in the label-generating process, but it does not take into account uncertainty coming from the limited size of the training dataset. 

Note that our discussion could be straighforwardly generalized to a generic prior distribution $\wstar\sim P_{\wstar}$. However, our goal in this work is to provide a fair comparison between Bayesian estimation and empirical risk minimization. Indeed, ERM does not assume any information on the components of $\wstar$, and a fair comparison is to consider the maximum entropy Gaussian prior.

Given the training data $\mathcal{D}=\{(\vec{x}^{\mu},y^{\mu})\}_{\mu=1}^{n}$ and a test sample $\vec{x}\sim\mathcal{N}(\vec{0},\sfrac{1}{d}\mat{I}_{d})$, the goal is to find a (probabilistic) classifier $\vec{x} \mapsto  \hat{y}(\vec{x})$ minimizing the $0/1$ test error 
\begin{equation}
    \varepsilon_g = \mathbb{E}_{(\vec{x}, y)} \mathbb{P}\left(\hat{y}(\vec{x})\neq y \right).
\end{equation}

\noindent \textbf{Considered classifiers --}
We will focus on comparing two probabilistic classifiers $\hat{f}(\vec{x}) = \mathbb{P}(y=1|\vec{x})$. The first is the widely used logistic classifier: $\hat{f}_{\erm}(\vec{x}) = \sigma(\hat{\vec{w}}_{\erm}^{\top}\vec{x})$ where $\sigma(x)=(1+e^{-x})^{-1}$ is the logistic function and the weights $\hat{\vec{w}}\in\mathbb{R}^{d}$ are obtained by minimising the following (regularised) empirical risk:
\begin{align}
\hat{\mathcal{R}}_{n}(\vec{w}) = \frac{1}{n}\sum\limits_{\mu=1}^{n}\log\left(1+e^{-y^{\mu}\vec{w}^{\top}\vec{x}^{\mu}}\right)+\frac{\lambda}{2}||\vec{w}||^{2}_{2}.\label{eq:def:erm}
\end{align}
Using $\hat{f}_{\erm}(\vec{x})$ as a measure of uncertainty is not considered very principled. Never-the-less it is arguably the most commonly used measure to give a rough idea of how confident is the neural network prediction for a given sample.  

The second estimator we investigate is the statistically optimal Bayesian estimator for the problem, which is given by:\looseness=-1
\begin{align}
   \hat{f}_{\bo}(\vec{x}) &= \mathbb{P}_{\text{BO}}(y=1|\vec{x}) =\!\! \int_{\mathbb{R}^{d}}\!\!\dd\vec{w} ~ P(y=1| \vec{x}^{\top}\vec{w})P(\vec{w}|\mathcal{D})\notag\\
        &= \int_{\mathbb{R}^{d}}\dd\vec{w} ~\sigma_{\star}\left(\frac{\vec{w}^{\top}\vec{x}}{\noisestr}\right)P(\vec{w}|\mathcal{D})\, ,
           \label{def:fbo}
\end{align}
\noindent where the posterior distribution $P(\vec{w}|\mathcal{D})$ given the training data $\mathcal{D}=\{(\vec{x}^{\mu},y^{\mu})\}_{\mu=1}^{n}$ is explicitly given by:
\begin{align}
    P(\vec{w}|\mathcal{D}) = \frac{1}{\mathcal{Z}(\noisestr)}\prod\limits_{\mu=1}^{n}\sigma_{\star}\left(y^{\mu}\frac{\vec{w}^{\top}\vec{x}^{\mu}}{\noisestr}\right)\mathcal{N}(\vec{w}|\vec{0},\mat{I}_{d}),\label{eq:def:posterior}
\end{align}
\noindent for a normalisation constant $\mathcal{Z}(\noisestr)\in\mathbb{R}$. 
The Bayes-optimal (BO) estimator $\hat{f}_{\bo}(\vec{x})$ provides the perfect measure of uncertainty that takes into account both the noise in the data generation and the finite number of samples in the training set. The traditional drawback of course is that it assumes the knowledge of the value $\noisestr$ and other details of the data-generating model.

%If $\noisestr$ was not known, under the probit model, it could be learned with expectation maximization as $\noisestr = \argmax_{\tilde \noisestr} \mathcal{Z}(\tilde \noisestr)$, where the normalization $\mathcal{Z}(\tilde \noisestr)$ can be estimated asymptotically accurately with the GAMP algorithm. 

\noindent \textbf{Uncertainty and calibration --}
The main purpose of this manuscript is to characterise how the intrinsic uncertainty of the probit model compares to both the Bayesian and logistic confidences/uncertainties in the high-dimensional setting where the number of samples $n$ is comparable to the dimension $d$. In this case, the limited number of samples is a sources of uncertainty comparable in magnitude to the noise level $\tau$. To define what is uncertainty in our context, note that the \textit{confidence functions} $\hat f(\vec{x}) = {\mathbb P}(y=1|\vec{x})$ defined above give the probability that the label is $y=1$ (with the label prediction commonly given by thresholding this function). In mathematical terms, we aim at characterising the correlation between the oracle, Bayesian and logistic confidences, as parametrised by the joint probability density:
\begin{align}
    \rho(a,b,c) \!=\! \mathbb{P}_{\mathcal{D}, \vec{x}}\big({f}_{\star}(\vec{x})\!=\!a, \hat{f}_{\bo}(\vec{x}) \!=\! b,  \hat{f}_{\erm}(\vec{x})\!=\!c \big)\, . \label{eq:def:jointdensity}
\end{align}
Similarly, we will note $\rho_{\star, \erm}(a, c) = \mathbb{P}(f_{\star} = a, \hat{f}_{\erm} = c)$, $\rho_{\bo, \erm}(b, c) = \mathbb{P}(\hat{f}_{\bo} = b, \hat{f}_{\erm} = c)$ and $\rho_{\star, \bo}(a, b) = \mathbb{P}( f_{\star} = a, \hat{f}_{\bo} = b)$. These densities correspond to $\rho$ summed over $\hat{f}_{\bo}$, $f_{\star}$ and $\hat{f}_{\erm}$ respectively. 
Here the sample $\vec{x}$ is understood as any sample from the test set, on which the confidence/uncertainty is evaluated. It is important that Eq.~\eqref{eq:def:jointdensity} is defined for the same sample $\vec{x}$ in all the 3 arguments. 
%Let us motivate the study of these joint densities, and in particular of $\rho_{\bo, \erm}$. Indeed, while
Not that $\rho_{*, erm}$ allows to compare the ERM uncertainty with the oracle uncertainty (the best we could do if we had infinite data), while $\rho_{bo, erm}$ quantifies the ERM uncertainty with respect to the best statistical estimate under a finite amount of data. 

In the next Section, we provide a characterisation of this joint density in the high-dimensional limit where $n,d\to\infty$ with fixed sample complexity $\alpha = \sfrac{n}{d}$, as a function of the noise level $\noisestr$ and regularization $\lambda$. To obtain this result we leverage recent works on approximate message passing algorithms and their state evolution. 

Some of our results will be conveniently formulated in terms of so-called calibration of a probabilistic classifier $\hat{f}:\mathbb{R}^{d}\to [0,1]$ defined as:\looseness=-1
\begin{equation}
    \Delta_p(\hat{f}) := p - \mathbb{E}_{\vec{x}, \yteacher}( f_{\star}(\vec{x}) | \hat{f}(\vec{x}) = p) \label{eq:def_calibration} 
\end{equation}
where $\hat{f}$ can be the logistic classifier or the Bayes-optimal one. 
Intuitively, the calibration quantifies how well the predictor assigns probabilities to events. If $\Delta_{p} = 0$ the predictor is said to be \textit{calibrated at level p}. Instead, if for $p > \sfrac{1}{2}, \Delta_p > 0$ (respectively $\Delta_p < 0$), then the predictor is said to be \textit{overconfident} (respectively \textit{underconfident}) Note, however, that the calibration is an average notion, while the above joint probability distribution \eqref{eq:def:jointdensity} captures more detailed information about the point-wise confidence and its reliability. In this work, we will also consider the calibration of ERM with respect to Bayes
\begin{equation}
     \tilde{\Delta}_p := p - \mathbb{E}_{\vec{x}, \yteacher}(\hat{f}_{\bo}(\vec{x}) | \hat{f}_{\erm}(\vec{x}) = p)
     \label{eq:calibration_erm_bayes}
\end{equation}
Finally, while our discussion focuses in the calibration for concreteness, note that many other uncertainty metrics could be studied from the joint density eq.~\eqref{eq:def:jointdensity}.

%%%%%%%%%%%%%%%%%%%%%%%%%%%%%%%%%%%%
\section{Technical theorems}
\label{sec:mainres}
%%%%%%%%%%%%%%%%%%%%%%%%%%%%%%%%%%%%
Our first technical result is the existence of an efficient algorithm (Algorithm \ref{alg:gamp}), called Generalized Approximate Message Passing (GAMP))  \cite{rangan2011generalized,javanmard2013state} that is able to accurately estimate $\hat{f}_{\bo}(\vec{x})$ in high-dimensions. The asymptotic accuracy of GAMP for the Bayes-optimal average (over the samples) test error is know from \cite{barbier_optimal_2019}. In order to formulate our results we also need to prove that the probabilities estimated by GAMP are also accurate \textit{sample-wise}, this relatively straightforward extension of the results of \cite{barbier_optimal_2019} is covered by the following lemma: 
\begin{lemma}[Sample-wise GAMP-Optimality]
\label{thm:gamp}
  For a sequence of problems given by eq.~\eqref{eq:def:data}, and given the estimator $\hat{\vec{w}}_{\amp}$ from Algorithm 1, the predictor
 \begin{equation}
 %\hat{f}_{\amp} (\vec{x}) = {\mathbb P}(y=1|\vec{x}) = \frac{1}{2}{\rm erfc}\left(-\frac{\hat{\vec{w}}_{\amp}^{\top} \vec{x}}{\sqrt{2(\noisevar + 1 - q)}}\right)
 \hat{f}_{\amp} (\vec{x}) = {\mathbb P}(y=1|\vec{x}) = \sigma_{\star}\left(\frac{\hat{\vec{w}}_{\amp}^{\top} \vec{x}}{\sqrt{\noisevar + \hat{\vec{c}}_{\amp}^{\top} (\vec{x} \odot \vec{x})}}\right)
 \label{def:amp}
 \end{equation}
 is such that, with high probability over a new sample $\bf x$ the classifier above is asymptotically equal to the Bayesian estimator $\hat{f}_{\bo}(\vec{x}) = {\mathbb P}(y=1|\vec{x})=\hat{f}_{\amp} (\vec{x})$ in eq.~\eqref{def:fbo}. More precisely:
\begin{equation}
   %\lim_{d \to \infty} 
   %{\mathbb E}_{\vec{x},\mathcal D}
   %\left[|\hat f_{\rm amp}(\vec{x})-\hat f_{\bo}(\vec{x})|^2\right]    \to 0\, .
   \forall \varepsilon > 0, \lim_{d \to \infty} \mathbb{P}_{\vec{x},\mathcal D} \left( |\hat f_{\rm amp}(\vec{x})-\hat f_{\bo}(\vec{x})|^2 \leqslant \varepsilon \right) \to 1
\end{equation}
In particular, the predictor $\hat{f}_{\amp}$ asymptotically achieves the best possible test performance (the one achieved by the Bayes-optimal estimator)
\end{lemma}
\begin{algorithm}[bt]
   \label{alg:gamp}
   \caption{GAMP}
\begin{algorithmic}
   \STATE {\bfseries Input:} Data $\mat{X}\in\mathbb{R}^{n\times d}$, $\vec{y}\in\{-1,1\}^{n}$ 
   
   \STATE Define $\mat{X}^2 = \mat{X}\odot \mat{X} \in\mathbb{R}^{n\times d}$ and Initialize $\hat{\vec{w}}^{t=0} = \mathcal{N}(\vec{0}, \sigma_{w}^2\mat{I}_{d})$, $\hat{\vec{c}}^{t=0} = \vec{1}_{d}$, $\vec{g}^{t=0} = \vec{0}_{n}$.
   \FOR{$t\leq t_{\text{max}}$}
   
   \STATE $\vec{V}^{t} = \mat{X}^{2} \hat{\vec{c}}^{t}$ ; $\vec{\omega}^{t} = \mat{X} \hat{\vec{w}}^{t} - \vec{V}^{t}\odot \vec{g}^{t-1}$ ; \qquad\textit{/* Update channel mean and variance}
    
   \STATE $\vec{g}^{t} = f_{\text{out}}(\vec{y}, \vec{w}^{t}, \vec{V}^{t})$ ; $\partial\vec{g}^{t} = \partial_{\omega}f_{\text{out}}(\vec{y}, \vec{w}^{t}, \vec{V}^{t})$ ; \qquad\textit{/* Update channel}
   
   \STATE $\vec{A}^{t} = -{\mat{X}^{2}}^{\top} \partial \vec{g}^{t}$ ; $\vec{b}^{t} = \mat{X}^{\top} \vec{g}^{t} + \vec{A}^{t}\odot \hat{\vec{w}}^{t}$ ; \qquad\textit{/* Update prior mean and variance } 
   \STATE \textit{/* Update marginals */}
   \STATE $\hat{\vec{w}}^{t+1} = f_w(\vec{b}^{t}, \vec{A}^{t}) \coloneqq (\mat{I}_{d} + \vec{A}^{t})^{-1}\vec{b}^{t}$ ;\qquad $\hat{\vec{c}}^{t+1} = \partial_{b}f_w(\vec{b}^{t}, \vec{A}^{t}) \coloneqq (\mat{I}_{d}+\vec{A}^{t})^{-1}$
   
   \ENDFOR
   \STATE {\bfseries Return:} Estimators $\hat{\vec{w}}_{\amp}, \hat{\vec{c}}_{\amp}\in\mathbb{R}^{d}$
\end{algorithmic}
\end{algorithm}

\begin{comment}
\begin{proof}
First, let us establish the expression of the estimator. The GAMP posterior estimation on the weights $\vec w$ is Gaussian, with mean $\hat{{\vec w}}_{\rm amp}$ and variance $\hat{\vec{c}}_{\amp}$. 
%Note that, from state evolution, we have  
%$\v $ $1-q_{\amp}$, where $q_{\amp} = \sfrac{1}{d}||\hat{\vec{w}}_{\amp}||_{2}^{2}$ follows the so-called state evolution Equation \cite{rangan2011generalized,javanmard2013state}, which is nothing but eq.~\eqref{def:eq-bo}. 
Given this estimate on the weights, the estimate on the pre-activation for a new sample $\bf x$ is a Gaussian with mean $\hat{\vec{w}}_{\amp}^{\top} {\bf x}$ and variance %$1-q_{\amp}$. 
$ \hat{\vec{c}}_{\amp}^{\top} (\vec{x} \odot \vec{x})$.
The label being generated by adding a noise of variance $\noisestr^2$ and taking the sign, the AMP estimate that the label takes value one is thus given by (denoting $\xi \sim {\mathcal N}(0,1)$ a Normal random variable):
\begin{equation}
     \hat{f}_{\amp} (\vec{x}) = 
     {\mathbb P}\left( \xi \ge -\frac{\hat{\vec{w}}_{\amp}^{\top} \vec{x}}{\sqrt{\hat{\vec{c}}_{\amp}^{\top} (\vec{x} \odot \vec{x})+\noisestr^2}}\right)\,.  
\end{equation}
Using the definition of the error function, a change of variables then leads to \eqref{def:amp}.

The theorem uses results concerning the test error of AMP developed in \cite{barbier_optimal_2019}. Indeed, the performance of AMP (predicted by state evolution) is given by the very same equation that predicts the Bayes minimal mean-squared error on new samples. Since this is given by the posterior estimate that ${\mathbb P}(y=1)$, this implies that the AMP estimation of this probability is asymptotically sharp. More details and the full proof are given in Appendix~\ref{sec:app:proofs}. 
\end{proof}
\end{comment}
    
The proof of Lemma \ref{thm:gamp} is provided in Appendix~\ref{sec:app:proofs}. As mentioned above, the lemma does not require the prior on $\wstar$ to be Gaussian. Changing the prior of $\wstar$ amount to changing the denoising functions $(f_{w},\partial_{b}f_{w})$ in Algorithm~\ref{alg:gamp}. Similarly, the probit likelihood defined in equation \eqref{eq:def:data} is not required for our analysis. In fact, the equations hold for any probabilistic generalized linear model, and in particular for the logit data model studied in \cite{bai_dont_2021}, reproduced in Appendix \ref{app:logit}. This choice of likelihood function only changes the denoising \textit{channel} functions $(f_{out}, \partial_{\omega} f_{out})$. The motivation behind the use of the GAMP Algorithm is twofold. First, it allows us to characterize the posterior mean needed to express the probability $\hat{f}_{\amp} (\vec{x}) $ for a given new sample $\vec{x}$ in polynomial time in $d$. Indeed, each iteration of the loop in Algorithm \ref{alg:gamp} is $O(d^2)$. Second, the asymptotic performance of GAMP is conveniently tracked by low-dimensional \textit{state evolution} equations which can be easily solved in a computer.%\looseness=-1

Our second technical result is a formula for the joint distribution of the teacher label, its Bayes estimate, and the estimate from empirical risk minimisation defined in eq.~(\ref{eq:def:jointdensity}), described in the following theorem:
\begin{theorem}
\label{thm:jointstats}
Consider training data $\mathcal{D}=\{(\vec{x}^{\mu},y^{\mu})\}_{\mu=1}^{n}$ sampled from the model defined in eq.~\eqref{eq:def:data}. Let $\hat{\vec{w}}_{\erm}\in\mathbb{R}^{d}$ be the solution of the empirical risk minimisation \eqref{eq:def:erm} and $\hat{\vec{w}}_{\amp}$ denote the estimator returned by running algorithm \ref{alg:gamp} on the data $\mathcal{D}$. Then in the high-dimensional limit where $n,d\to\infty$ with $\alpha = \sfrac{n}{d}$ fixed, the asymptotic joint density \eqref{eq:def:jointdensity} is given by:
\begin{align}
    \rho(a, b, c) = {\noisestr}' \noisestr \frac{\mathcal{N}\left( \begin{bmatrix} \noisestr \cdot \sigma_{\star}^{-1}(a) \\ {\noisestr}' \cdot \sigma_{\star}^{-1}(b) \\ \sigma^{-1}(c)\end{bmatrix} \Big| \mathbf{0}_3, \Sigma \right)}{|\sigma_{\star}'(\sigma_{\star}^{-1}(a))||\sigma_{\star}'(\sigma_{\star}^{-1}(b))| |\sigma'(\sigma^{-1}(c))|}  \label{eq:res:jointdensity}
\end{align}
\noindent where we noted 
\begin{equation}
    {{\noisestr}'}^2 = \noisevar + 1 - q_{\bo}, \qquad \Sigma = \begin{bmatrix} 1 & q_{\bo} & m \\ q_{\bo} & q_{\bo} & m \\ m & m & q_{\rm erm}\end{bmatrix}
    \label{eq:def_sigma}
\end{equation}
and the so-called overlaps:
\begin{align}
    & q_{\bo} = \frac{1}{d}\hat{\vec{w}}_{\amp}^{\top}\wstar= \frac{1}{d}||\hat{\vec{w}}_{\amp}||^2_{2} \label{def-ov1} \\ & m = \frac{1}{d}\hat{\vec{w}}_{\erm}^{\top}\wstar, \quad q_{\erm} = \frac{1}{d}||\hat{\vec{w}}_{\erm}||^2_{2}
    \label{def-ov2}
\end{align}
solve the following set of self-consistent equations:
\begin{align} 
        %\mathbb{E}_{(w_{0}, \eta)}\left[f_{w}(\hat{q}_{\bo}w_{0}+\sqrt{\hat{q}_{\bo}}\xi, \hat{q}_{\bo})^2\right] \\
        % \hat{q}_{\bo} = \alpha \mathbb{E}_{(z, \omega)}\left[f_{\text{out}}(f_{0}(z), \omega, 1 - q_{\bo})^2\right], && q_{\bo} = \frac{1}{1+\hat{q}_{\bo}}
        \frac{1}{q_{\bo}}= 1 + \alpha ~\mathbb{E}_{(z, \eta), \xi}\left[f_{{\rm out}}(f_{0}(z + \noisestr \xi), \eta, 1 - q_{\bo})^2\right],
        % \frac{1}{q_{\bo}}-1 = \frac{2}{\pi}\frac{\alpha}{\Delta'(q_{\bo})}\int_{\mathbb{R}}\dd z~\mathcal{N}\left(z\Big| 0, \frac{q_{\bo}^{t}}{2\Delta'(q_{\bo})}\right)\frac{e^{-2z^2}}{\erfc(z)\erfc(-z)}
        \label{def:eq-bo}
\end{align}
and 
\begin{align}
\label{def:eq-erm-overlaps}
		V = \frac{1}{\lambda+\hat{V}}, &&
		%\mathbb{E}_{(w_{0}, \xi)}\left[\partial_{b}f^{\erm}_{w}(\hat{m}w_{0}+\sqrt{\hat{q}_{\erm}}\xi, \hat{V})\right]\\
		q_{\erm} = \frac{\hat{m}^2+\hat{q}}{(\lambda+\hat{V})^2},&&
		%\mathbb{E}_{(w_{0}, \xi)}\left[f^{\erm}_{w}(\hat{m} w_{0}+\sqrt{\hat{q}_{\erm}}\xi, \hat{V})^2\right]\\
		m = \frac{\hat{m}}{\lambda+\hat{V}}.
		%\mathbb{E}_{(w_{0}, \xi)}\left[f^{\erm}_{w}(\hat{m} w_{0}+\sqrt{\hat{q}_{\erm}}\xi, \hat{V})w_{0i}\right] 
\end{align}
\begin{align}
\label{def:eq-erm-hats}
\begin{cases}
		\hat{V} &=-\alpha \mathbb{E}_{(z, \omega), \xi}\left[\partial_{\omega}f_{\erm}(f_{0}(z + \noisestr \xi), \omega, V)\right]\\
		\hat{q} &=\alpha\mathbb{E}_{(z, \omega), \xi}\left[f_{\erm}(f_{0}(z + \noisestr \xi), \omega, V)^2\right]\\
		\hat{m} &=\alpha\mathbb{E}_{(z, \omega), \xi}\left[f_{\erm}(f_{0}(z + \noisestr \xi), \omega, V)\right]
	\end{cases}
\end{align}
\noindent where $(z,\eta, \omega) \sim\mathcal{N}\left(0_{3},\Sigma\right)$, $\xi \sim \mathcal{N}(0, 1)$ and the thresholding functions are defined as 
\begin{align}
    f_{{\rm out}}(y,\omega,V) &= \frac{2y~\mathcal{N}(\omega y|0, V+\noisevar)}{\erfc\left(-\frac{y\omega}{\sqrt{2(\noisevar+V)}}\right)}\notag\\
    f_{\erm}(y, w, V) &= V^{-1} \left( \prox_{V l(y, .)}(w) - w \right)
    \label{eq:definition_f_out}
\end{align}
with $\prox_{\tau f}(x) = {\rm argmin}_{z} \left( \sfrac{1}{2\tau}\|z-x \|_2^2 + f(z)\right)$ being the proximal operator. 
\end{theorem}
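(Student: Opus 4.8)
The plan is to combine a conditional Gaussianity argument on a fresh test point with the state evolution of the two algorithms, reducing everything to the concentration of a handful of scalar overlaps, and then to identify those overlaps with the stated fixed points.

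First, since the test point $\vec{x}\sim\mathcal{N}(\vec{0},\sfrac{1}{d}\mat{I}_{d})$ is drawn independently of the training set $\mathcal{D}$, conditionally on $\mathcal{D}$ the triple of pre-activations $(\wstar^{\top}\vec{x},\hat{\vec{w}}_{\amp}^{\top}\vec{x},\hat{\vec{w}}_{\erm}^{\top}\vec{x})$ is exactly a centred Gaussian vector whose covariance is $\sfrac{1}{d}$ times the Gram matrix of $(\wstar,\hat{\vec{w}}_{\amp},\hat{\vec{w}}_{\erm})$. I then rewrite the three confidences as functions of these pre-activations: by definition $a=f_{\star}(\vec{x})=\sigma_{\star}(\wstar^{\top}\vec{x}/\noisestr)$ and $c=\hat{f}_{\erm}(\vec{x})=\sigma(\hat{\vec{w}}_{\erm}^{\top}\vec{x})$, while for $b$ I invoke Theorem~\ref{thm:gamp} together with the fact that $\hat{\vec{c}}_{\amp}^{\top}(\vec{x}\odot\vec{x})$ concentrates on the deterministic scalar $1-q_{\bo}$ (the Bayes pre-activation MMSE, obtained from the state evolution control on $\sfrac{1}{d}\sum_{i}(\hat{\vec{c}}_{\amp})_{i}$ via the Nishimori identity $\sfrac{1}{d}\,\mathbb{E}\|\wstar-\hat{\vec{w}}_{\amp}\|^{2}\to 1-q_{\bo}$), so that $\hat{f}_{\bo}(\vec{x})=\hat{f}_{\amp}(\vec{x})=\sigma_{\star}(\hat{\vec{w}}_{\amp}^{\top}\vec{x}/{\noisestr}')$ with ${{\noisestr}'}^{2}=\noisevar+1-q_{\bo}$. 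Once the Gram matrix is shown to concentrate on $\Sigma$, the conditional law of the pre-activations converges to the deterministic $\mathcal{N}(\mathbf{0}_{3},\Sigma)$, hence so does the unconditional one, and eq.~\eqref{eq:res:jointdensity} follows from the componentwise change of variables $(u,v,w)\mapsto(\sigma_{\star}(u/\noisestr),\sigma_{\star}(v/{\noisestr}'),\sigma(w))$, whose Jacobian is exactly $\noisestr\,{\noisestr}'\,/\big(|\sigma_{\star}'(\sigma_{\star}^{-1}(a))|\,|\sigma_{\star}'(\sigma_{\star}^{-1}(b))|\,|\sigma'(\sigma^{-1}(c))|\big)$.

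Next I identify the limiting Gram matrix. The teacher diagonal term $\sfrac{1}{d}\|\wstar\|^{2}\to1$ is the law of large numbers. For the GAMP estimator, its state evolution (Theorem~\ref{thm:gamp}, based on \cite{barbier_optimal_2019}) gives $\sfrac{1}{d}\|\hat{\vec{w}}_{\amp}\|^{2}\to q_{\bo}$, and the Nishimori identity — valid because the algorithm asymptotically tracks the Bayes posterior — forces $\sfrac{1}{d}\wstar^{\top}\hat{\vec{w}}_{\amp}\to q_{\bo}$ as well. For the logistic estimator, the sharp asymptotics of $\ell_{2}$-regularised logistic regression \cite{loureiro2021learning,pmlr-v125-gerbelot20a} give $\sfrac{1}{d}\|\hat{\vec{w}}_{\erm}\|^{2}\to q_{\erm}$ and $\sfrac{1}{d}\wstar^{\top}\hat{\vec{w}}_{\erm}\to m$. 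The remaining cross-overlap $\sfrac{1}{d}\hat{\vec{w}}_{\amp}^{\top}\hat{\vec{w}}_{\erm}$ is the delicate one: using $\hat{\vec{w}}_{\amp}=\mathbb{E}[\wstar\mid\mathcal{D}]$ (asymptotically) and the $\mathcal{D}$-measurability of $\hat{\vec{w}}_{\erm}$, one has $\sfrac{1}{d}\hat{\vec{w}}_{\amp}^{\top}\hat{\vec{w}}_{\erm}=\mathbb{E}[\sfrac{1}{d}\wstar^{\top}\hat{\vec{w}}_{\erm}\mid\mathcal{D}]$, which concentrates on the same $m$; equivalently, one runs the Bayes GAMP and an AMP recursion for the logistic minimiser as a single Onsager-corrected system and reads this overlap off the joint state evolution. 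This produces precisely $\Sigma$ in eq.~\eqref{eq:def_sigma}.

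Finally, the scalar overlaps are fixed points of the corresponding state evolutions. Equation~\eqref{def:eq-bo} is the Bayes-optimal GAMP fixed point for a probit generalised linear model with Gaussian prior \cite{barbier_optimal_2019}, specialised by computing $f_{\out}=\partial_{\omega}\log Z_{\out}$ with $Z_{\out}(y,\omega,V)=\tfrac{1}{2}\erfc\!\big(-y\omega/\sqrt{2(\noisevar+V)}\big)$, which yields the closed form in eq.~\eqref{eq:definition_f_out}; equations~\eqref{def:eq-erm-overlaps}--\eqref{def:eq-erm-hats} are the ERM state evolution, whose prior side ($V=1/(\lambda+\hat{V})$, $m=\hat{m}/(\lambda+\hat{V})$, $q_{\erm}=(\hat{m}^{2}+\hat{q})/(\lambda+\hat{V})^{2}$) is the ridge proximal on the decoupled effective problem and whose channel side is the logistic-loss proximal $f_{\erm}$, as established in \cite{loureiro2021learning}. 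I expect the genuinely hard step to be the cross-overlap: making the Nishimori projection rigorous requires controlling the $\ell_{2}$ distance between the GAMP output and the true posterior mean and combining it with the concentration of $\sfrac{1}{d}\wstar^{\top}\hat{\vec{w}}_{\erm}$ around $m$, while the combined-AMP route requires showing that the logistic minimiser is the limit of a convergent AMP iteration that can be coupled to the Bayes GAMP on the same disorder — the concentration of $\hat{\vec{c}}_{\amp}^{\top}(\vec{x}\odot\vec{x})$ invoked in the first step is a milder instance of the same phenomenon.
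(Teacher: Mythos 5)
Your proposal follows essentially the same route as the paper's own proof: conditional Gaussianity of the triple of pre-activations with covariance equal to the normalised Gram matrix (Lemma~\ref{lemma3}), identification of the diagonal and teacher-cross entries with $q_{\bo}$, $q_{\erm}$, $m$ from the known ERM and Bayes state evolutions, the Nishimori projection $\sfrac{1}{d}\hat{\vec{w}}_{\amp}^{\top}\hat{\vec{w}}_{\erm}\to\mathbb{E}[\sfrac{1}{d}\wstar^{\top}\hat{\vec{w}}_{\erm}\mid\mathcal{D}]\to m$ (made rigorous in the appendix by first bounding $\sfrac{1}{d}\|\hat{\vec{w}}_{\amp}-\hat{\vec{w}}_{\bo}\|^{2}\to 0$ and then a Cauchy--Schwarz concentration argument), and the final change of variables with Jacobian $\noisestr\,\noisestr'/(|\sigma_{\star}'||\sigma_{\star}'||\sigma'|)$. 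You also correctly flag the genuinely delicate steps — control of $\|\hat{\vec{w}}_{\amp}-\hat{\vec{w}}_{\bo}\|$, concentration of the cross-overlap, and of $\hat{\vec{c}}_{\amp}^{\top}(\vec{x}\odot\vec{x})$ — which is exactly where the paper expends its effort, and you mention the alternative joint-AMP state-evolution route, which is the cavity derivation the paper presents as heuristic motivation in Appendix~\ref{sec:app:cavity}.
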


In Appendix \ref{sec:app:cavity} we show how this result can be deduced directly from the heuristic cavity method, and the analysis of the GAMP state evolution to compute the overlaps of ERM and BO estimators. To compute the correlation between the ERM and BO estimators, we use the  Nishimori identity \cite{iba1999nishimori,zdeborova2016statistical}. More details, as well as the formal proof, are given in Appendix~\ref{sec:app:proofs}.

Our third theorem is an asymptotic expression for the calibration error. 
\begin{theorem}
\label{thm:calibration}
The analytical expression of the joint density $\rho$ yields the following expression for the calibration $\Delta_p$:
    \begin{equation}
        \Delta_p(\hat{f}_{\erm}) = p - \sigma_{\star} \left( \frac{\sfrac{m}{q_{\erm}} \times \sigma^{-1}(p)}{\sqrt{1 - \sfrac{m^2}{q_{\erm}} + \noisevar}} \right)  \, . 
        \label{eq:thm_cal_erm}
    \end{equation}
Moreover, the Bayesian classifier is always well calibrated with respect to the teacher, meaning:
    \begin{equation}
        \forall p\in [0,1],\quad \Delta_p(\hat{f}_{\bo}) = 0 \, . 
        \label{eq:thm_cal_bo}
    \end{equation}
Additionally, the calibration of ERM with respect to the Bayesian classifier and the oracle are equal: 
    \begin{equation}
        \forall p\in[0,1],\quad \Delta_p(\hat{f}_{\erm}) = \tilde{\Delta}_p\, . 
        \label{eq:thm_cal_erm_bo}
    \end{equation}
\end{theorem}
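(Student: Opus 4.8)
The plan is to push every quantity through the joint Gaussian law of the three pre-activations supplied by Theorem~\ref{thm:jointstats}. Write $\nu = \vec{x}^{\top}\wstar$, $\lambda_{\amp}=\vec{x}^{\top}\hat{\vec{w}}_{\amp}$ and $\lambda_{\erm}=\vec{x}^{\top}\hat{\vec{w}}_{\erm}$. By Lemma~\ref{lemma3}, together with the state-evolution concentration $\hat{\vec{c}}_{\amp}^{\top}(\vec{x}\odot\vec{x})\to 1-q_{\bo}$, we have $(\nu,\lambda_{\amp},\lambda_{\erm})\sim\mathcal{N}(\vec{0}_{3},\Sigma)$ with $\Sigma$ as in \eqref{eq:def_sigma}, and asymptotically $f_{\star}(\vec{x})=\sigma_{\star}(\nu/\noisestr)$, $\hat{f}_{\bo}(\vec{x})=\sigma_{\star}(\lambda_{\amp}/{\noisestr}')$ and $\hat{f}_{\erm}(\vec{x})=\sigma(\lambda_{\erm})$. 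Since each classifier is a strictly increasing function of a single Gaussian coordinate, conditioning on $\{\hat{f}_{\erm}(\vec{x})=p\}$ is conditioning on $\{\lambda_{\erm}=\sigma^{-1}(p)\}$ (and similarly for $\hat{f}_{\bo}$), so \eqref{eq:def_calibration}--\eqref{eq:calibration_erm_bayes} become one-dimensional Gaussian conditional expectations. The only analytic ingredient used repeatedly is that $\sigma_{\star}$ is the standard normal CDF (indeed $\sigma_{\star}(x)=\tfrac{1}{2}\erfc(-x/\sqrt{2})=\Phi(x)$), together with the identity $\mathbb{E}_{N\sim\mathcal{N}(0,1)}[\Phi(aN+b)]=\Phi\big(b/\sqrt{1+a^{2}}\big)$.

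For \eqref{eq:thm_cal_erm} I would condition $(\nu,\lambda_{\erm})$: given $\lambda_{\erm}=\sigma^{-1}(p)$, Gaussian regression gives $\nu\sim\mathcal{N}\big((m/q_{\erm})\sigma^{-1}(p),\,1-m^{2}/q_{\erm}\big)$. Feeding this into $\mathbb{E}[\Phi(\nu/\noisestr)\mid\lambda_{\erm}=\sigma^{-1}(p)]$ and using the identity above, a short simplification of the $1/\noisestr$ prefactors (which is how $\noisevar$ enters under the square root) gives $\mathbb{E}_{\vec{x},\yteacher}(f_{\star}(\vec{x})\mid\hat{f}_{\erm}(\vec{x})=p)=\sigma_{\star}\!\big((m/q_{\erm})\sigma^{-1}(p)/\sqrt{1-m^{2}/q_{\erm}+\noisevar}\big)$; subtracting from $p$ yields \eqref{eq:thm_cal_erm}.

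For \eqref{eq:thm_cal_bo} and \eqref{eq:thm_cal_erm_bo} I would first give the structural reason and then check it against the Gaussian law. The structural reason is a Nishimori identity: by Bayes' rule the posterior \eqref{eq:def:posterior} is the conditional law of $\wstar$ given $\mathcal{D}$, hence $\hat{f}_{\bo}(\vec{x})=\int\sigma_{\star}(\vec{w}^{\top}\vec{x}/\noisestr)P(\vec{w}\mid\mathcal{D})\dd\vec{w}=\mathbb{E}[f_{\star}(\vec{x})\mid\mathcal{D},\vec{x}]$. As $\{\hat{f}_{\bo}(\vec{x})=p\}$ and $\{\hat{f}_{\erm}(\vec{x})=p\}$ are both $(\mathcal{D},\vec{x})$-measurable, the tower property gives $\mathbb{E}[f_{\star}(\vec{x})\mid\hat{f}_{\bo}(\vec{x})=p]=\mathbb{E}[\hat{f}_{\bo}(\vec{x})\mid\hat{f}_{\bo}(\vec{x})=p]=p$, which is \eqref{eq:thm_cal_bo}, and $\mathbb{E}[\hat{f}_{\bo}(\vec{x})\mid\hat{f}_{\erm}(\vec{x})=p]=\mathbb{E}[f_{\star}(\vec{x})\mid\hat{f}_{\erm}(\vec{x})=p]$, i.e. $\tilde{\Delta}_{p}=\Delta_{p}(\hat{f}_{\erm})$, which is \eqref{eq:thm_cal_erm_bo}. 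To confirm consistency with Theorem~\ref{thm:jointstats}, I would redo the one-dimensional integrals: conditioning on $\lambda_{\amp}={\noisestr}'\sigma_{\star}^{-1}(p)$ gives $\nu\sim\mathcal{N}\big(\lambda_{\amp},\,1-q_{\bo}\big)$ (using $\Sigma_{12}=\Sigma_{22}=q_{\bo}$ and $1-q_{\bo}={{\noisestr}'}^{2}-\noisevar$), so the $\Phi$-identity returns $\Phi(\lambda_{\amp}/{\noisestr}')=p$; and conditioning on $\lambda_{\erm}=\sigma^{-1}(p)$ gives $\lambda_{\amp}\sim\mathcal{N}\big((m/q_{\erm})\sigma^{-1}(p),\,q_{\bo}-m^{2}/q_{\erm}\big)$, so the identity together with ${{\noisestr}'}^{2}+q_{\bo}=1+\noisevar$ reproduces exactly the expression found for \eqref{eq:thm_cal_erm}.

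I do not expect a real obstacle once Theorem~\ref{thm:jointstats} is in hand: these are routine Gaussian conditioning computations. The points deserving a line of care are (i) reducing $\hat{f}_{\bo}$ to a function of the single pre-activation $\lambda_{\amp}$, which uses concentration of $\hat{\vec{c}}_{\amp}^{\top}(\vec{x}\odot\vec{x})$ onto the posterior variance $1-q_{\bo}$ predicted by state evolution; (ii) giving a precise meaning to conditioning on the measure-zero events $\{\hat{f}(\vec{x})=p\}$, which is exactly what the limiting density $\rho$ of Theorem~\ref{thm:jointstats} licenses; and (iii) recognising the Nishimori identity $\hat{f}_{\bo}(\vec{x})=\mathbb{E}[f_{\star}(\vec{x})\mid\mathcal{D},\vec{x}]$, which makes \eqref{eq:thm_cal_bo} and \eqref{eq:thm_cal_erm_bo} hold with no further computation and in fact non-asymptotically.
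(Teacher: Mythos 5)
Your route to \eqref{eq:thm_cal_erm} coincides with the paper's: reduce the conditional expectation of $f_{\star}$ given $\lambda_{\erm}=\sigma^{-1}(p)$ to a one-dimensional Gaussian integral using the covariance $\Sigma$ and the identity $\int \erf(ax+b)\,\mathcal{N}(x\,|\,\mu,v^{2})\,\dd x=\erf\!\left(\frac{a\mu+b}{\sqrt{1+2a^{2}v^{2}}}\right)$. (A small aside: your computation correctly produces $\sigma_{\star}\!\left(\frac{m}{q_{\erm}}\sigma^{-1}(p)\big/\sqrt{1-m^{2}/q_{\erm}+\noisevar}\right)$ with a plus sign; the minus sign appearing in \eqref{eq:thm_cal_erm} and in the paper's own proof looks like a typo, since $\tfrac12\erfc(-a)=\sigma_{\star}(a)$, not $\sigma_{\star}(-a)$.)

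For \eqref{eq:thm_cal_bo} and \eqref{eq:thm_cal_erm_bo} you take a genuinely different route. The paper proves both by the same one-dimensional Gaussian conditioning, exploiting $\Sigma_{12}=\Sigma_{22}=q_{\bo}$ and ${\noisestr'}^{2}=\noisevar+1-q_{\bo}$. You instead observe the exact, finite-$d$ identity $\hat{f}_{\bo}(\vec{x})=\mathbb{E}[f_{\star}(\vec{x})\mid\mathcal{D},\vec{x}]$ — a Nishimori identity that follows from the well-specified Bayesian model and the independence of $\vec{x}$ from $(\wstar,\mathcal{D})$ — and apply the tower property, since $\{\hat{f}_{\bo}(\vec{x})=p\}$ and $\{\hat{f}_{\erm}(\vec{x})=p\}$ are $(\mathcal{D},\vec{x})$-measurable events. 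This makes both equalities hold non-asymptotically and gives the conceptual reason behind them: the posterior predictive is, by construction, the conditional expectation of $f_{\star}$ on $(\mathcal{D},\vec{x})$, hence it is automatically calibrated and serves as an exact surrogate for $f_{\star}$ under any coarser conditioning. Your subsequent Gaussian verification then checks that the asymptotic density of Theorem~\ref{thm:jointstats} is consistent with these exact identities — which it must be, as the covariance $\Sigma$ encodes the Nishimori relations $\Sigma_{12}=\Sigma_{22}$ and $Q=m$. The one point to keep in sight, which you correctly flag, is that the exact Nishimori argument is for the true posterior mean $\hat{f}_{\bo}$, while Theorem~\ref{thm:jointstats} is stated through the GAMP surrogate; the $L^{2}$ convergence of Theorem~\ref{thm:gamp} is what transfers the identities to the asymptotic density.
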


%\begin{proof}
% While details of the proof of Theorem \ref{thm:calibration} are provided in appendix \ref{sec:proof_thm_calibration}, we shall give a brief sketch here.
%The general idea is the following: computing $\Delta_p$ and $\tilde{\Delta}_p$ amounts to computing the expectations $\mathbb{E}(f_{\star} | \hat{f}_{\erm})$ and $\mathbb{E}(\hat{f}_{\bo}| \hat{f}_{\erm})$. However, $f_{\star}, \hat{f}_{\erm}, \hat{f}_{\bo}$ are deterministic functions of the pre-activation, which as we have shown are jointly Gaussian. So, we deduce that the calibration can be expressed as the expectation of a function of a Gaussian variable. Therefore, the calibration is a function only of the covariance matrix $\Sigma$ previously introduced, and the theorem follows. 
%The results depend on some unique properties of the function $\sigma_{\star} = \erf$ and may not extend to other models such as the logit model. 
% LZ: We should definitely check that once we are done with this paper. I would not comment on it here. 
% \end{proof}

The proof of Theorem \ref{thm:calibration} is provided in appendix \ref{sec:proof_thm_calibration}. Equation \eqref{eq:thm_cal_erm} shows the different factors that influence $\Delta_p$: the aleatoric uncertainty represented by the noise $\noisestr^2$, the finiteness of data that appears through $\sfrac{m}{q_{\erm}}$ and $\sfrac{m^2}{q_{\erm}}$, and the mismatch in the model with the activations $\sigma_{\star}, \sigma$. Moreover, Equation \eqref{eq:thm_cal_erm_bo} provides a recipe to compute the calibration $\Delta_p$ in the high-dimensional limit from the knowledge of the data model \eqref{eq:def:data} only, but without knowing the specific realisation of the weights $\wstar$. This is because the quantities $q_{\bo}$, $q_{\erm}$ and $m$ self-average as $n, d \rightarrow \infty$, we then obtain the calibration $\Delta_p$ without knowing the realisation of $\wstar$. 

%Indeed, once AMP and logistic regression have been evaluated, $\tilde{\Delta}_p$ can be computed analytically using only 

%%%%%%%%%%%%%%%%%%%%%%%%%%%%%%%%%%%%%%%%%%%%%%%%%%
\section{Results for uncertainty estimation}
\label{sec:experiments}
%%%%%%%%%%%%%%%%%%%%%%%%%%%%%%%%%%%%%%%%%%%%%%%%%%

%%%%%%%%%%%%%%%%%%%%%%%%%%%%%%%%%%%%%%%%%%%%%%
\subsection{Bayes versus oracle uncertainty}
%%%%%%%%%%%%%%%%%%%%%%%%%%%%%%%%%%%%%%%%%%%%%%
We now discuss the consequences of the theorems from Section \ref{sec:mainres}.
Figure \ref{fig:teacher_bo_density} left panel depicts the theoretical prediction of the joint density $\rho_{\bo, \star}$, between the Bayes posterior confidence/uncertainty $\hat f_\bo$ (x-axes) and the oracle confidence/uncertainty $f_\star$ (y-axes). 
The theoretically derived density (Figure~\ref{fig:teacher_bo_density} left panel) is compared to its numerical estimation in Figure~\ref{fig:teacher_bo_density} right panel, computed numerically using the GAMP algorithm. To estimate the numerical density in the right panel, we proceed as follow: after fixing the dimension $d$ and the number of training samples $n = \alpha d$, GAMP is ran on the training set. Once GAMP estimators have been obtained, $n_{\rm test}$ test samples are drawn and for each of them we compute the confidence of the oracle/teacher $f_{\star}(\vec{x})$ from eq.~\eqref{eq:def:data}, and the Bayesian confidence $\hat{f}_{\bo}(\vec{x}) = \hat{f}_{\amp}(\vec{x})$ from Theorem \ref{thm:gamp}. Finally we plot the histogram of the thus obtained joint density $\rho_{\bo, \star}$ over the test samples. As the figure shows, there is a perfect agreement between theory and finite instance simulations.

We see that the density is positive in the whole support, it is peaked around $(0,0)$ and $(1,1)$, but has a notable weight around the diagonal as well. The relatively large spread of the joint density is a consequence of the fact that on top of the intrinsic uncertainty of the teacher, the learning is only done with $n=\alpha d$ samples which brings an additional source of uncertainty captured in the Bayes estimator. Fig.~\ref{fig:teacher_bo_density} thus quantifies this additional uncertainty due to finite $\alpha$. We are not aware of something like this being done analytically in previous literature. 

The blue curve is the mean of $f_{\star}$ conditioned on the values of $\hat{f}_{\bo}$. The difference between this and the diagonal is the calibration $\Delta_p$ defined in Equation \eqref{eq:def_calibration}. We see that the figure illustrates  $\Delta_p(\hat{f}_{\bo}) = 0$, i.e. the Bayesian prediction is well calibrated, as predicted by Theorem~\ref{thm:calibration}. 

\begin{figure}[t!]  
    \centering
    \subfigure[]{\includegraphics[width=0.448\columnwidth]{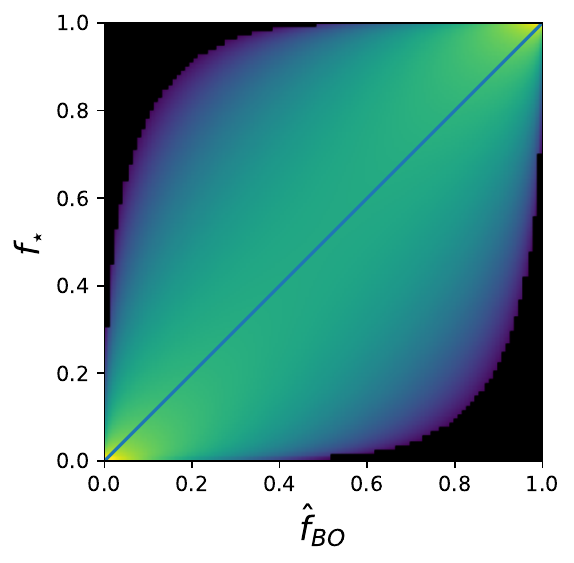}}
    \subfigure[]{\includegraphics[width=0.532\columnwidth]{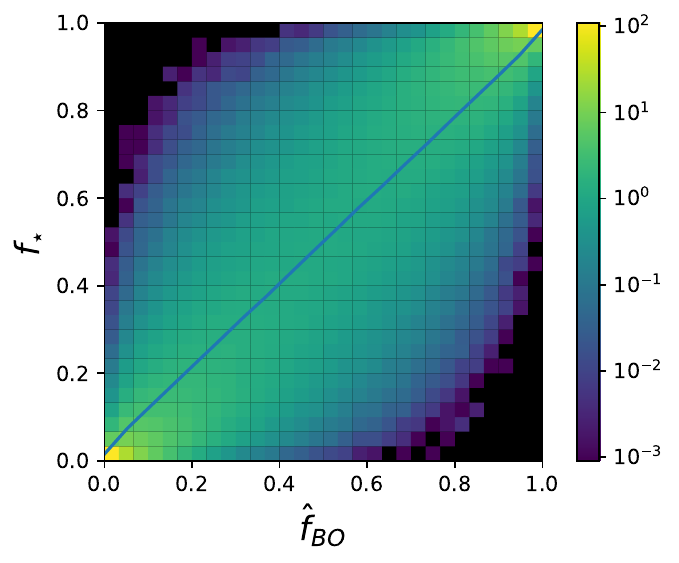}}
    \caption{Theoretical prediction (left panel) and numerical estimation (right panel) of the joint density $\rho_{\bo, \star}$ at $\alpha = 10$ and noise level $\noisestr = 0.5$. Numerical plot was done by running GAMP at dimension $d = 1000$, computing $(f_{\star}, \hat{f}_{\bo})$ on $n_{\rm test}=10^7$ test samples. The blue curve is the mean of $f_{\star}$ given $\hat{f}_{\bo}$. For these parameters, the test error of Bayes is $\varepsilon^{\bo}_g = 0.173$, the oracle test error $\varepsilon^{\star} = 0.148$.}
    \label{fig:teacher_bo_density}
\end{figure}

\begin{figure}[t!]
     \centering
     \includegraphics[width = \columnwidth]{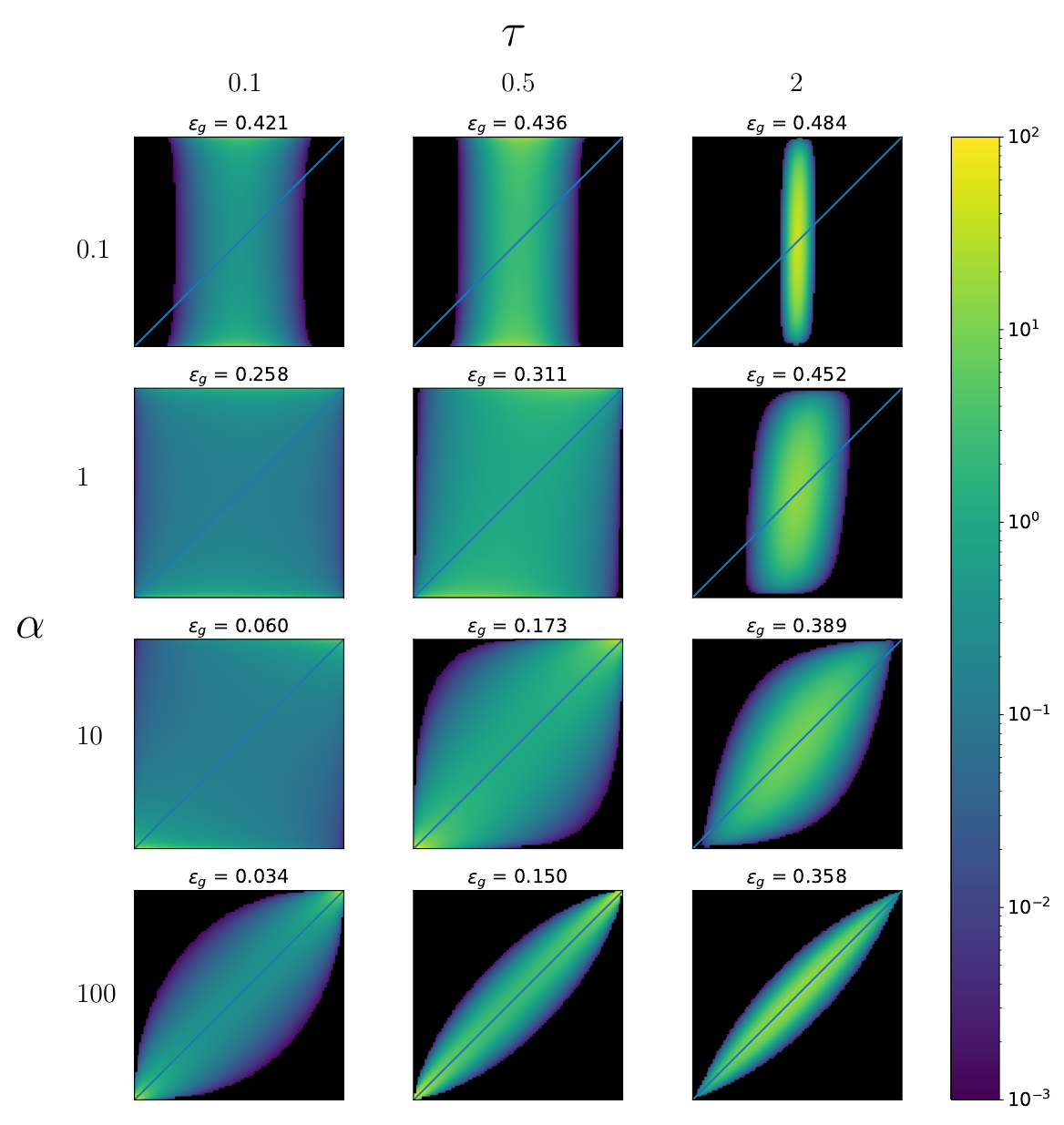}
     \caption{Density between Bayes confidence $\hat{f}_{\bo}$ (x-axis) and the oracle confidence $f_{\star}$ (y-axis) for multiple values of $\alpha, \noisestr$: the rows correspond  respectively to $\alpha = 0.1, 1, 10, 100$ from top to bottom, and the columns correspond respectively to $\noisestr = 0.1, 0.5, 2$. Generalisation errors of the Bayes estimator are in written on top of the corresponding plot. The best possible generalisation errors, achieved if the teacher weights are known, for $\noisestr = 0.1, 0.5, 2$ are respectively $\varepsilon^{\star}_g = 0.032, 0.148, 0.352$.}
     \label{fig:multiple_densities_main}
\end{figure}

Figure~\ref{fig:multiple_densities_main} then depicts the same densities as Figure~\ref{fig:teacher_bo_density} for several different values of the sample complexity $\alpha$ and noise $\noisestr$. The corresponding test error is given for information. We see, for instance, that at small $\alpha$ the BO confidence is low, close to $0.5$, because not much can be learned from very few samples. The oracle confidence does not depend on $\alpha$, and is low for growing $\noisestr$. At large $\alpha$, on the other hand, the BO confidence is getting well correlated with the oracle one. At larger $\alpha$ and small noise the BO test error is getting smaller and the corresponding confidence close to $1$ or $0$ (depending on the label). The trends seen in this figure are expected, but again here we quantify them in an analytic form of eq.~\eqref{eq:res:jointdensity} which as far as we know has not been done previously.

%%%%%%%%%%%%%%%%%%%%%%%%%%%%%%%%%%%%%%%%%%%%%%%%%%%%%%%%%%%%%%
\subsection{Logistic regression uncertainty and calibration}
%%%%%%%%%%%%%%%%%%%%%%%%%%%%%%%%%%%%%%%%%%%%%%%%%%%%%%%%%%%%%%
Having explicit access to the Bayesian confidence/uncertainty in a high-dimensional setting is a unique occasion to quantify the quality of the logistic classifier, which has its own natural measure of confidence induced by the logit. How accurate is this measure? We start with the logistic classifier at zero regularization and then move to the regularised case in the next section.   

Figure \ref{fig:erm_joint_density_lambda_0} compares the joint density of $(\hat{f}_{\erm}, f_{\star})$ (left panel), and $(\hat{f}_{\erm}, \hat{f}_{\bo})$ (right panel) with the same noise and number of samples as used in figure \ref{fig:teacher_bo_density}. The blue curves are the means of $f_{\star}$ (respectively $\hat{f}_{\bo}$) conditioned on $\hat{f}_{\erm}$, their shape is demonstrating that the (non-regularized) logistic classifier is on average overconfident, as is well known in practice. 

The equality between these two blue curves illustrates Theorem \ref{thm:calibration}, Equation \eqref{eq:thm_cal_erm_bo}: $\Delta_p(\hat{f}_{\erm}) = \tilde{\Delta}_p$.  Note, however, that while the calibrations of the ERM with respect to the oracle or the BO are equal, the conditional variances of $f_{\star}$ and $\hat{f}_{\bo}$ are very different. This shows how the calibration is only a very partial fix of the confidence estimation for ERM: when $\hat{f}_{\erm} = p$, both Bayes and the oracle's predictions will be $p - \Delta_p$ \textit{on average}, but for the considered parameters the predictions of the oracle are much more spread around this value than those of Bayes estimator. This means that the ERM still captures rather well some part of the uncertainty coming from the limited number of samples.  
Figure \ref{fig:multiple_densities_bo_erm} in the Appendix \ref{sec:additional_figures} complements Figure \ref{fig:erm_joint_density_lambda_0} by showing other values of $\alpha$ and $\tau$. 

\begin{figure}[t!]
    \centering
    \subfigure[]{\includegraphics[width=0.448\columnwidth]{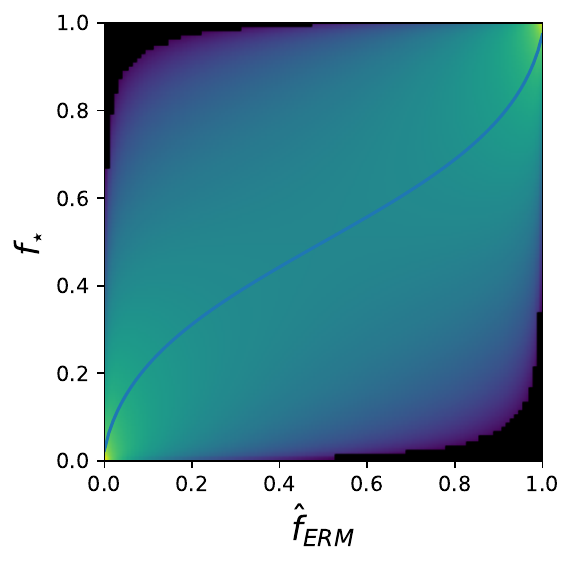}}
    \subfigure[]{\includegraphics[width=0.532\columnwidth]{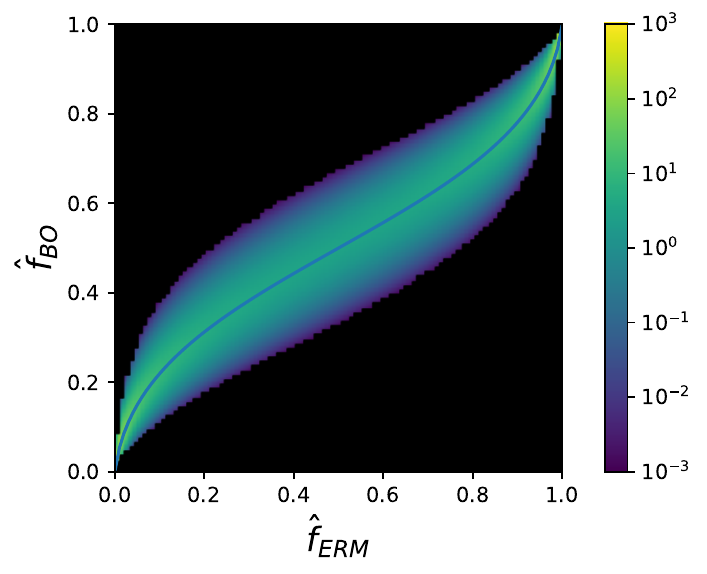}}
    \caption{The probability density $\rho_{\erm, \star}$ (left panel) and $\rho_{\erm, \bo}$ (right panel), at $\alpha = 10$, $\noisestr = 0.5$ and $\lambda = 0^{+}$. The blue curves are the mean of the marginal distribution of $f_{\star}$ and $\hat{f}_{\bo}$ respectively under fixed $\hat{f}_{\erm}$ , which are equal to $p - \Delta_p$ and $p - \tilde{\Delta}_p$. We observe overconfidence of the logistic classifier for these parameters. Test error of ERM is here $\varepsilon^{\erm}_g = 0.174$, very close to the one of BO $\varepsilon^{\rm bo}_g = 0.173$.}
    \label{fig:erm_joint_density_lambda_0}
\end{figure}

We now investigate the calibration as a function of the sample complexity $\alpha$. The plot (a) of Figure \ref{fig:lambda_0_calibration} shows the curve $\Delta_p$ at $\lambda = 0^{+}$ computed using the analytical expression \eqref{eq:thm_cal_erm}. The curve is compared to the numerical estimation of $\Delta_p$ (green crosses) and $\tilde{\Delta}_p$ (orange crosses). For a small $\dd{p}$, If we define $I_{p, \dd{p}} = \{ 1 \leqslant i \leqslant n_{\rm test} |  \hat{f}_{\erm}(x_i) \in [p, p + \dd{p}] \}$, $\Delta_p$ and $\tilde{\Delta}_p$ are estimated experimentally with the formulas
\begin{equation}
    \Delta_p \simeq p - \frac{ \sum_{i \in I_{p, \dd{p}}} f_{\star}(x_i)}{|I_{p, \dd{p}}|}, 
    \Tilde{\Delta}_p \simeq p - \frac{ \sum_{i \in I_{p, \dd{p}}} \hat{f}_{\bo}(x_i)}{|I_{p, \dd{p}}|}
\end{equation}
The calibrations $\Delta_p$ and $\tilde{\Delta}_p$ are both equal to the theoretical curve, further confirming the results of Equation \eqref{eq:thm_cal_erm_bo}. Note the transition at $\alpha_c \sim 2.4$: for $\alpha < \alpha_c$, the training data is linearly separable. Since $\lambda = 0^{+}$, the empirical risk has no minimum and the estimator $\vec{w}_{\erm}$ diverges in norm. As a consequence, $\Delta_p \rightarrow p - 0.5$, as we observe on the plot.% This amounts to writing $m = \eta m_0$ and $q = \eta^2 {q_{\erm}}_{0}$ and $\eta \rightarrow \infty$. Plugging this into Equation \eqref{eq:thm_cal_erm}, we see that as $\eta \rightarrow \infty$, $\Delta_p \rightarrow p - 0.5$, as we observe in the plot. 
In the inset of Fig.~\ref{fig:lambda_0_calibration} (left) we depict the theoretical curve evaluate up to larger values of $\alpha$. We see a saturation at about $\Delta_p \simeq 0.0011 \neq 0$. We note that in the work of \cite{bai_dont_2021} (partly reproduced in Appendix \ref{app:logit}) the calibration was observed to go to $0$  as $1/\alpha$. 
This difference is due to the mismatch between the function producing the data (probit) and the estimator (logit) in our case (whereas \cite{bai_dont_2021} used logit for both) which will generically be present in real data and thus the decay to zero observed in \cite{bai_dont_2021} is not expected to be seen generically.

Right panel of Figure \ref{fig:lambda_0_calibration} displays the variance of $f_{\star}$ and $\hat{f}_{\bo}$ at fixed $\hat{f}_{\erm}$ as a function of $\alpha$. This plot illustrates that the conditional variance of $f_{\star}$ is significantly higher than that of $\hat{f}_{\bo}$, as was previously noted in figure \ref{fig:erm_joint_density_lambda_0}. 
This shows that the (non-regularized) logistic uncertainty captures rather decently the uncertainty due to limited number of samples.

\begin{figure}[t!]
    \centering
    \subfigure[]{
    \includegraphics[width=0.47\columnwidth]{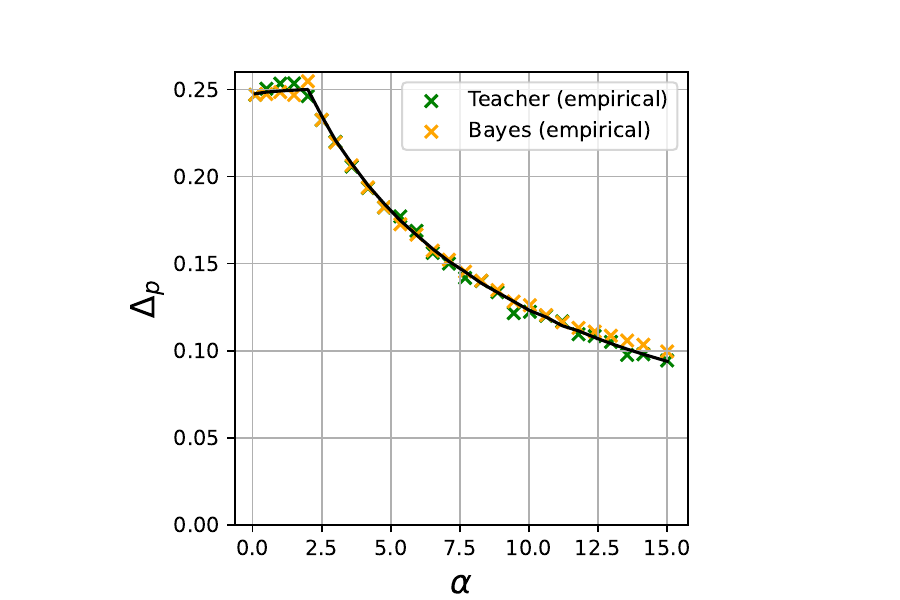}}
        \begin{picture}(0,0)
        \put(-175, 25){\includegraphics[height=1.5cm]{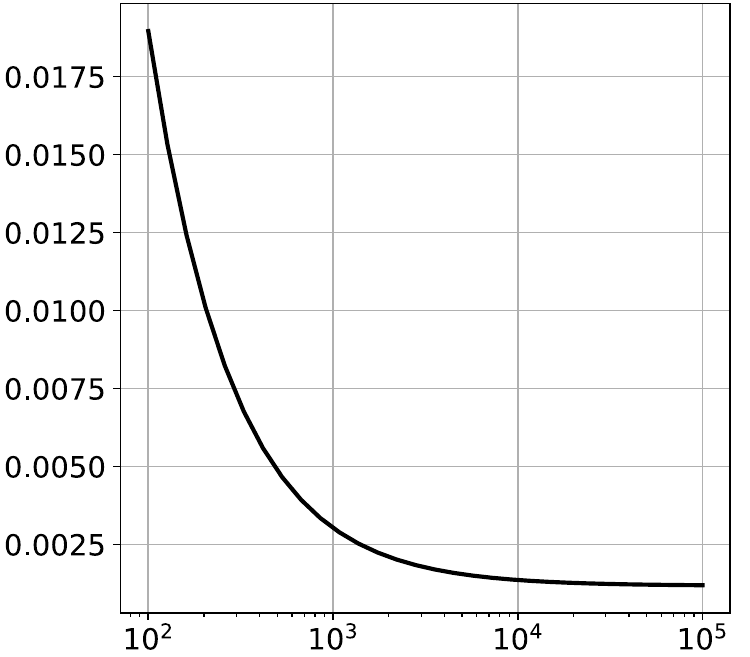}}
        \end{picture}
    \subfigure[]{\includegraphics[width=0.49\columnwidth]{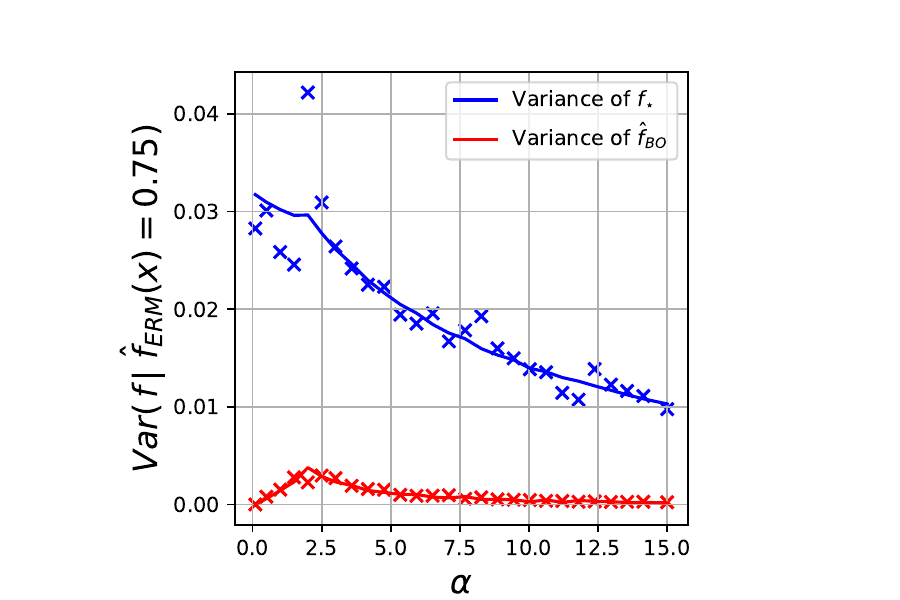}}
    \caption{(a) Calibration of the logistic regression with $\lambda = 0^{+}, \noisestr = 2, p=0.75$. Orange (respectively green) crosses are numerical estimation of $\tilde{\Delta}_p$ (respectively $\Delta_p$). Numerical values are obtained by averaging the calibration over $10$ test sets of size $n_{\rm test}=10^5$, at $d = 300$. Inset depicts the larger $\alpha$ behaviour. 
(b) 
Variance of $f_{\star}$ and $\hat{f}_{\bo}$ conditioned on $\prederm = p = 0.75$. Crosses are numerical values with the same parameters as figure (a). Though both $f_{\star}$ and $\hat{f}_{BO}$ have the same mean, their variance are significantly different.}
    \label{fig:lambda_0_calibration}
\end{figure}

\begin{figure}[t!]
    \centering
    \def\figwidth{\columnwidth}
    \def\figheight{\columnwidth}
    
    \subfigure[]{\includegraphics[width=0.47\columnwidth]{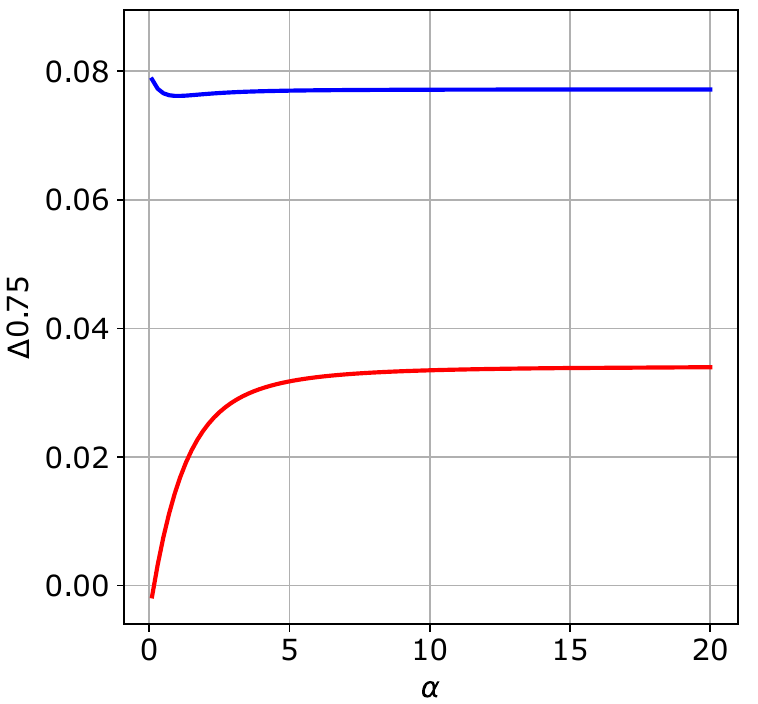}}
    \subfigure[]{\includegraphics[width=0.47\columnwidth]{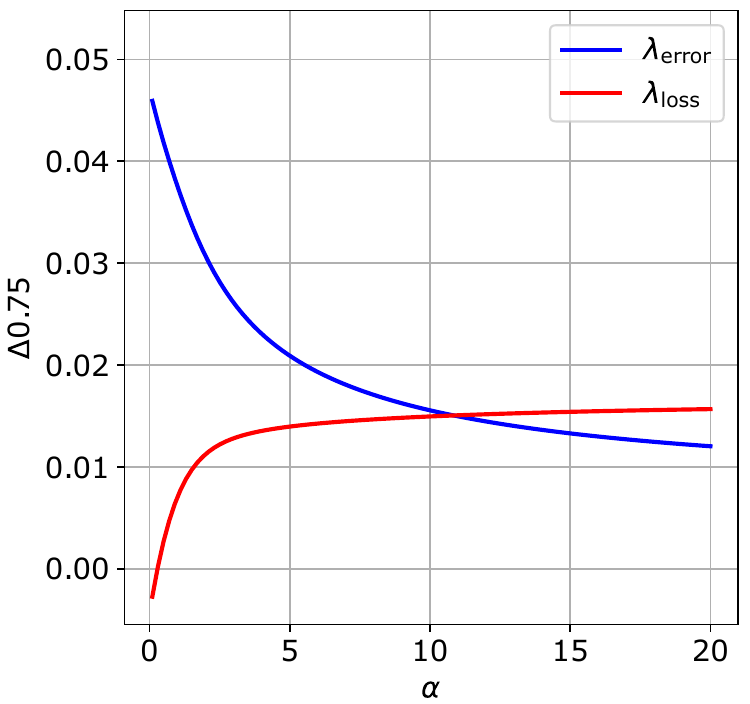}}
    \subfigure[]{\includegraphics[width=0.495\columnwidth]{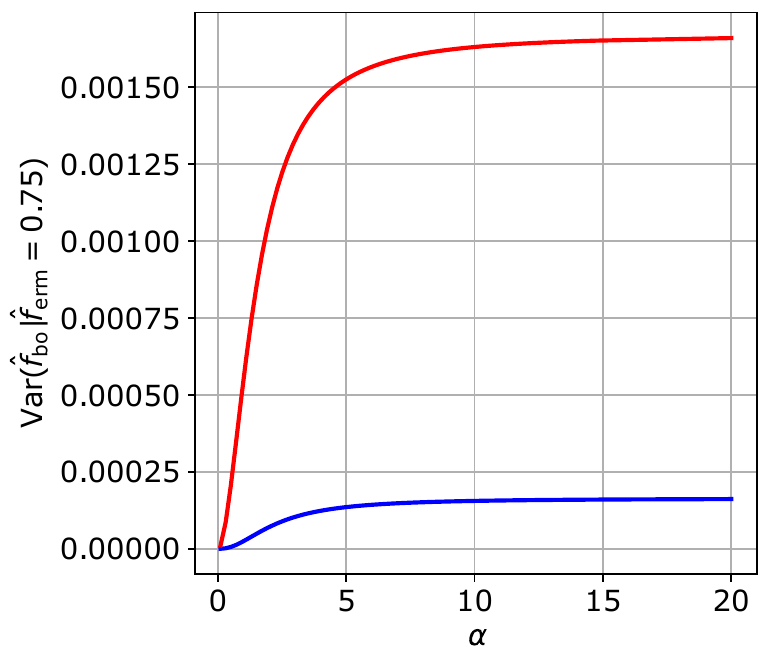}}
    \subfigure[]{\includegraphics[width=0.495\columnwidth]{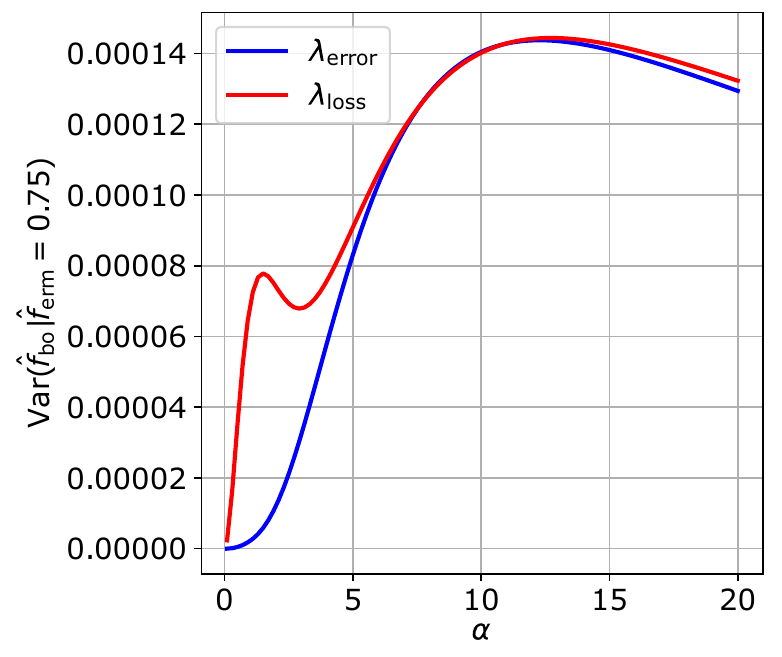}}
    % \subfigure[]{\includegraphics[width=0.49\columnwidth]{Figures/calibration_p=0.75_sigma=0.0.pdf}}
    % \subfigure[]{\includegraphics[width=0.49\columnwidth]{Figures/calibration_p=0.75_sigma=0.5.pdf}}
    \caption{(Top) The calibration $\Delta_{0.75}(\hat{f}_{\erm})$ as a function of $\alpha$  with $\lambda = \lambdaerror(\alpha, \noisestr)$ (blue curve) and $\lambda = \lambdaloss(\alpha, \noisestr)$ (red curve)s. (Bottom) Variance of $\hat{f}_{\bo}$ conditioned on $\hat{f}_{\erm}(\vec{x}) = 0.75$ with $\lambdaerror$ and $\lambdaloss$. In (a) and (c), $\tau = 0$ ; in (b) and (d), $\tau = 0.5$.
    }
    % To add eventually if we put the experimental dots : "Dots are experimental values with $d = 256$ and $n_{\rm test} = 10^4$ test points to compute the calibration, averaged over $30$ trial"
    \label{fig:calibration_optimal_lambda}
    \vspace{-0.5cm}
\end{figure}

%\begin{figure}
%    \centering
%    \caption{ }
%    \label{fig:my_label}
%\end{figure}

\begin{figure}[t!]
    \centering
    % \caption{Joint density $(f_{\star}, \hat{f}_{\erm})$ at $\alpha = 5, \sigma = 0, \lambda = 0$. Dashed white line is the conditional mean of $\hat{f}_{\bo}$. Because there is no regularization and no noise, logistic regression is overly confident: for most $p \in [0.5, 1]$, $\Delta_p \simeq p - 0.5 > 0$}
    \subfigure[$\lambdaerror = 0.0976$,  $\varepsilon_g = 0.1732 $]{\includegraphics[width=0.44\columnwidth]{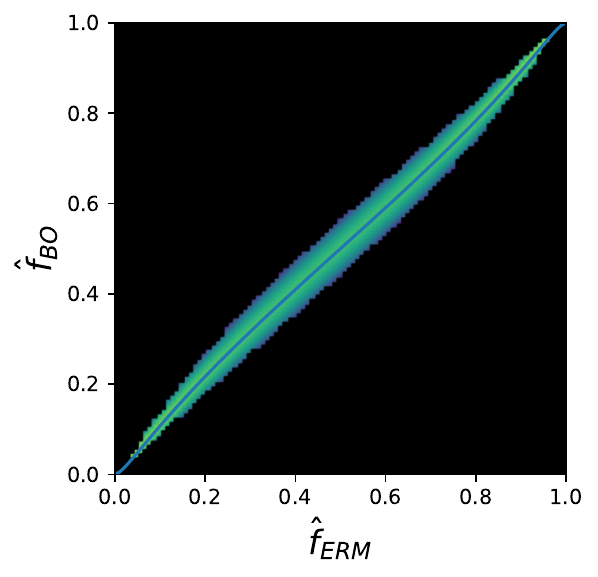}}
    \subfigure[$\lambdaloss = 0.0980$,  $\varepsilon_g = 0.1734 $]{\includegraphics[width=0.52\columnwidth]{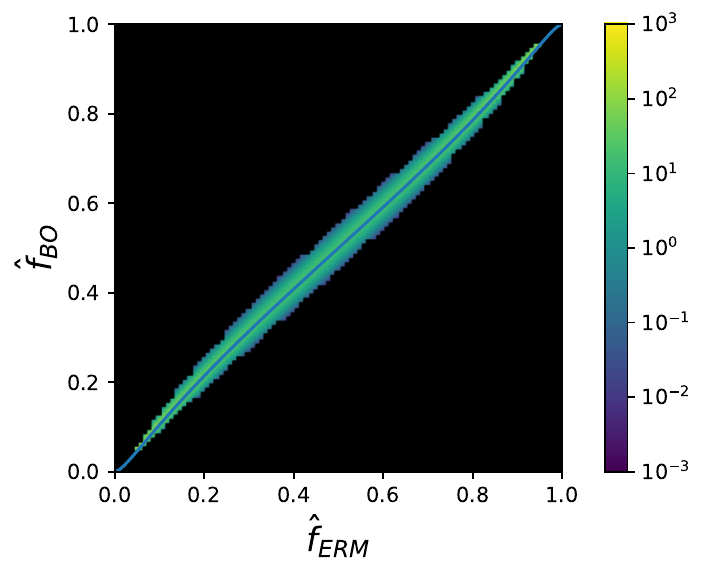}}
    \subfigure[$\lambdaerror = 0.0039$,  $\varepsilon_g = 0.0843$]{\includegraphics[width=0.44\columnwidth]{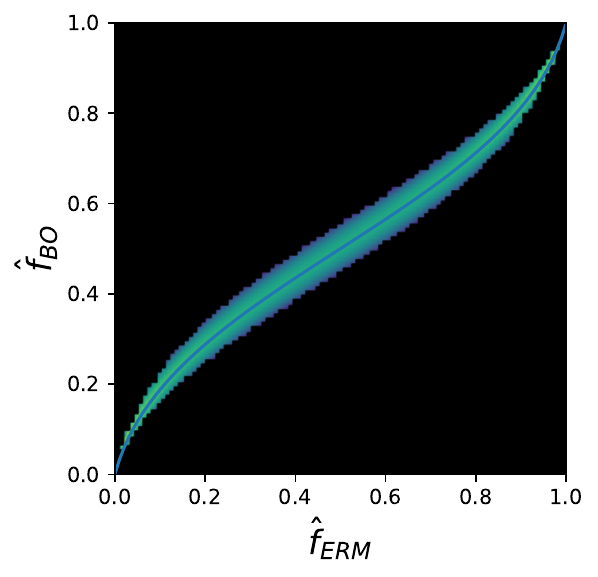}}
    \subfigure[$\lambdaloss=0.0096$,  $\varepsilon_g = 0.0847$]{\includegraphics[width=0.52\columnwidth]{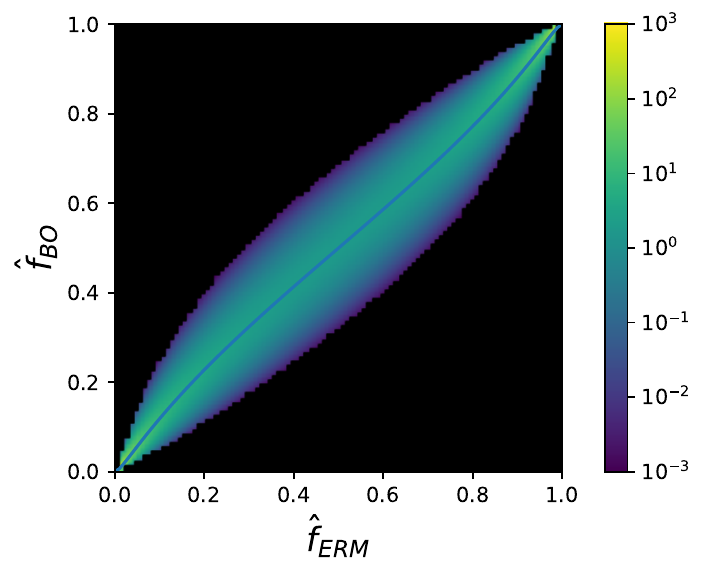}}
    \caption{Density $\rho_{\erm, \bo}$ for different $\alpha, \noisestr$. Top row: $\alpha = 10, \noisestr = 0.5$. Bayes test error is $\varepsilon_g^{\bo} = 0.1731$. Plot (a) (respectively (b)) is done at $\lambda = \lambdaerror$ (respectively $\lambda = \lambdaloss$). Bottom row: $\alpha = 5, \noisestr = 0$, $\varepsilon_g^{\bo} = 0.0839$. Plot (c) (respectively (d)) is done at $\lambda = \lambdaerror$ (respectively $\lambda = \lambdaloss$). On the bottom row, we can clearly see that the calibration is better for $\lambdaloss$. Generalization errors of ERM as well as the values of the regularizations are indicated below the plots.}
    \label{fig:density_alpha_10_optimal_lambda}
\end{figure}

% \begin{figure}[t!]
%    \centering
    
%    \caption{Left: Joint density of $(\hat{f}_{\bo}, \hat{f}_{\erm})$ at $\alpha = 5, \noisestr = 0, \lambda = \lambdaerror(\alpha, \noisestr)$. Right: Joint density with same parameters, at $\lambda = \lambdaloss$. We see that the calibration is better (i.e. the blue line closer to $y=x$) on the right hand side figure, illustrating that $\lambdaloss$ seems to yield a better-calibrated estimator.}
% \label{fig:density_alpha_5_lambda_0_sigma_0}
% \end{figure}

%%%%%%%%%%%%%%%%%%%%%%%%%%%%%%%%%%%%%%%%%%%%%%%%%
\subsection{Effect of regularization on uncertainty and calibration} 
\label{sec:choosing_lambda_optimally}
%%%%%%%%%%%%%%%%%%%%%%%%%%%%%%%%%%%%%%%%%%%%%%%%%

Logistic regression is rarely used in practice without regularization. In Figs.~\ref{fig:multiple_densities_lambda} and \ref{fig:calibration_lambda} in appendix \ref{sec:additional_figures} we depict the role of regularization on the density $\rho_{\rm erm,bo}$. As one would anticipate as the regularization strength grows the overconfidence of the logistic classifier at small $\lambda$ becomes under-confidence at large $\lambda$.

One usually optimizes the strength $\lambda$ of the $\ell_2$ penalty through cross-validation by minimizing the validation error. Ideally, we would choose $\lambda$ that gives a low validation error but also that yields a well-calibrated estimator. We will denote $\lambdaerror$ (respectively $\lambdaloss$) the parameter that minimises the $0/1$ classification error (respectively the logistic loss) on the validation set. 
In the setting of the present paper, these two values of regularisation lead to a very close test error/loss. In other words, choosing one or another of these $\lambda$ seems to have little effect on the test performance of logistic regression. %Heuristically, this claim is supported by looking at the loss landscapes on the right-hand side plot of figure \ref{fig:optimal_la}: both $\lambdaloss$ and $\lambdaerror$ lie in a flat portion of the landscapes of both the test error and test loss.

Figure \ref{fig:calibration_optimal_lambda} plots the calibration $\Delta_p$ in the noiseless (left panel) and noisy (right panel) settings. We observe that for most parameters ERM with $\lambdaloss$ is significantly less overconfident than with $\lambdaerror$. However, for larger values of $\alpha$ and $\tau$ we observe the opposite. %This also happens with data generated by a logit teacher, as one can observe in Figure~\ref{fig:optimal_lambda_logistic_calibration}.  
%In the range of parameters that we used, this choice of penalization $\lambdaloss$ leads to a classifiers that are better calibrated, in addition to having near-optimal performance. 
We also note that for small $\alpha$ the logistic regression at $\lambdaloss$ even gets mildly underconfident, $\Delta_p<0$. The bottom panels of the figure depict the corresponding variance. Interestingly we see that in both cases, despite a better calibration, $\lambdaloss$ yields a higher variance than $\lambdaerror$ hence its point-wise estimates of uncertainty are not necessarily better.

Figure \ref{fig:density_alpha_10_optimal_lambda} shows %\st{the density of Bayes and ERM uncertainty}
$\rho_{\bo, \erm}$ evaluated at $\lambdaerror$ and $\lambdaloss$. Comparing the upper panels to Figure~\ref{fig:erm_joint_density_lambda_0} (at $\lambda=0$), it is clear that choosing $\lambda$ to optimize the error (and the loss) improves calibration. In the lower panels of Figure \ref{fig:density_alpha_10_optimal_lambda} we can also see that the calibration at $\lambdaloss$ (right panel) is better, i.e. the blue line is closer to $y=x$, than the one at $\lambdaerror$ (left panel). We conclude that using optimal regularization is clearly advantageous to obtain better calibrated classification. However, we also note that the interplay between the mean of the distribution (the calibration) and its variance is subtle and more investigation is needed into designing a model-agnostic method where both are optimal simultaneously.    

%%%%%%%%%%%%%%%%%%%%%
\section{Discussion}
%%%%%%%%%%%%%%%%%%%%%
This paper leverages on the properties of the GAMP algorithm and associated closed-form control of the posterior marginals to provide detailed theoretical analysis of uncertainty in a simple probit model. We investigate the relations between the respective uncertainties of the oracle, Bayes and regularized logistic regression. We see this as a grounding step for a line of future work that will leverage recent extensions of the GAMP algorithm and its associated analysis to multi-layer neural networks \cite{Aubin_2019, gerbelot2021graphbased}, learning with random features and kernels \cite{mei2019generalization,gerace2020generalisation,dhifallah2020precise}, estimation under generative priors \cite{9240945, pmlr-v107-aubin20a}, classification on more realistic models of data \cite{goldt2020modeling, goldt2021gaussian, seddik2020random}, etc. 
%In future work we will study  more advanced uncertainty estimators, some of those listed in the introduction, in settings where the present methodology gives us a asymptotically exact access to the true Bayesian uncertainty. 
The present methodology is not restricted to classification and can be used for more thorough study of confidence intervals in high-dimensional regression, extending \cite{bai2021understanding}.  This is left for further studies. The code of this project is available at \url{https://github.com/lclarte/uncertainty-project}.

\paragraph{Acknowledgements--}
We thank C\'edric Gerbelot for useful discussion and Benjamin Aubin for his help on the numerical experiments. We acknowledge funding from the ERC under the European Union’s Horizon 2020 Research and Innovation Programme Grant Agreement 714608-SMiLe.

\printbibliography

@article{Aubin_2019,
	year = 2019,
	publisher = {{IOP} Publishing},
	volume = {2019},
	number = {12},
	author = {Aubin, Benjamin and Maillard, Antoine and Barbier, Jean and Krzakala, Florent and Macris, Nicolas and Zdeborov{\'{a}}, Lenka},
	title = {The committee machine: computational to statistical gaps in learning a two-layers neural network},
	journal = {Journal of Statistical Mechanics: Theory and Experiment},
}

@article{seeger2004gaussian,
  title={Gaussian processes for machine learning},
  author={Seeger, Matthias},
  journal={International journal of neural systems},
  number={02},
  year={2004},
  publisher={World Scientific}
}

@inproceedings{10.5555/3023638.3023667,
author = {Hensman, James and Fusi, Nicol\`{o} and Lawrence, Neil D.},
title = {Gaussian Processes for Big Data},
year = {2013},
publisher = {AUAI Press},
address = {Arlington, Virginia, USA},
booktitle = {Proceedings of the Twenty-Ninth Conference on Uncertainty in Artificial Intelligence},
series = {UAI'13}
}

@article{mei2019generalization,
  title={The Generalization Error of Random Features Regression: Precise Asymptotics and the Double Descent Curve},
  author={Mei, Song and Montanari, Andrea},
  journal={Communications on Pure and Applied Mathematics},
  year={2019},
  publisher={Wiley Online Library}
}

@inproceedings{gerace2020generalisation,
  title={Generalisation error in learning with random features and the hidden manifold model},
  author={Gerace, Federica and Loureiro, Bruno and Krzakala, Florent and M{\'e}zard, Marc and Zdeborov{\'a}, Lenka},
  booktitle={International Conference on Machine Learning},
  year={2020},
  organization={PMLR}
}

@article{goldt2020modeling,
  title={Modeling the influence of data structure on learning in neural networks: The hidden manifold model},
  author={Goldt, Sebastian and M{\'e}zard, Marc and Krzakala, Florent and Zdeborov{\'a}, Lenka},
  journal={Physical Review X},
  year={2020},
  publisher={APS}
}

@article{goldt2021gaussian,
      title={The Gaussian equivalence of generative models for learning with shallow neural networks}, 
      author={Sebastian Goldt and Bruno Loureiro and Galen Reeves and Florent Krzakala and Marc Mézard and Lenka Zdeborová},
      year={2021},
      journal={arXiv: 2006.14709 [stat.ML]},
}

@InProceedings{pmlr-v107-aubin20a,
  title = 	 {Exact asymptotics for phase retrieval and compressed sensing with random generative priors},
  author =       {Aubin, Benjamin and Loureiro, Bruno and Baker, Antoine and Krzakala, Florent and Zdeborov\'a, Lenka},
  booktitle = 	 {Proceedings of The First Mathematical and Scientific Machine Learning Conference},
  pages = 	 {55--73},
  year = 	 {2020},
  editor = 	 {Lu, Jianfeng and Ward, Rachel},
  volume = 	 {107},
  series = 	 {Proceedings of Machine Learning Research},
  month = 	 {07},
  publisher = {PMLR},
}

@article{iba1999nishimori,
  title={The Nishimori line and Bayesian statistics},
  author={Iba, Yukito},
  journal={Journal of Physics A: Mathematical and General},
  year={1999},
  publisher={IOP Publishing}
}

@article{zdeborova2016statistical,
  title={Statistical physics of inference: Thresholds and algorithms},
  author={Zdeborov{\'a}, Lenka and Krzakala, Florent},
  journal={Advances in Physics},
  year={2016},
  publisher={Taylor \& Francis}
}

@article{bayati2015lasso,
      title={The LASSO risk for gaussian matrices}, 
      author={Mohsen Bayati and Andrea Montanari},
      journal = {IEEE Transactions on Information Theory},
      year={2015},
}

@article{gerbelot2021graphbased,
      title={Graph-based Approximate Message Passing Iterations}, 
      author={C\'{e}dric Gerbelot and Rapha{\"e}l Berthier},
      journal={arXiv:2109.11905 [cs.IT]},
      year={2021},
}

@INPROCEEDINGS{8683376,
  author={Mai, Xiaoyi and Liao, Zhenyu and Couillet, Romain},
  booktitle={ICASSP 2019 - 2019 IEEE International Conference on Acoustics, Speech and Signal Processing (ICASSP)}, 
  title={A Large Scale Analysis of Logistic Regression: Asymptotic Performance and New Insights}, 
  year={2019},
 }

@InProceedings{pmlr-v125-gerbelot20a,
  title = 	 {Asymptotic Errors for High-Dimensional Convex Penalized Linear Regression beyond Gaussian Matrices},
  author =       {Gerbelot, C\'{e}dric and Abbara, Alia and Krzakala, Florent},
  booktitle = 	 {Proceedings of Thirty Third Conference on Learning Theory},
  year = 	 {2020},
  series = 	 {Proceedings of Machine Learning Research},
  publisher =    {PMLR},
}

@InProceedings{pmlr-v108-taheri20a,
  title = 	 {Sharp Asymptotics and Optimal Performance for Inference in Binary Models},
  author =       {Taheri, Hossein and Pedarsani, Ramtin and Thrampoulidis, Christos},
  booktitle = 	 {Proceedings of the Twenty Third International Conference on Artificial Intelligence and Statistics},
  year = 	 {2020},
  series = 	 {Proceedings of Machine Learning Research},
  publisher =    {PMLR},
}

@INPROCEEDINGS{9053524,  
    author={Deng, Zeyu and Kammoun, Abla and Thrampoulidis, Christos},  
    booktitle={ICASSP 2020 - 2020 IEEE International Conference on Acoustics, Speech and Signal Processing (ICASSP)},   
    title={A Model of Double Descent for High-Dimensional Logistic Regression},   
    year={2020},  
}

@article{montanari2020generalization,
      title={The generalization error of max-margin linear classifiers: High-dimensional asymptotics in the overparametrized regime}, 
      author={Andrea Montanari and Feng Ruan and Youngtak Sohn and Jun Yan},
      year={2020},
      journal={arXiv:1911.01544 [math.ST]},
}

@article{bai2021understanding,
      title={Understanding the Under-Coverage Bias in Uncertainty Estimation}, 
      author={Yu Bai and Song Mei and Huan Wang and Caiming Xiong},
      journal={arXiv:2106.05515 [cs.LG]},
      year={2021},
}

@article{loureiro2021learning,
      title={Learning Gaussian Mixtures with Generalised Linear Models: Precise Asymptotics in High-dimensions}, 
      author={Bruno Loureiro and Gabriele Sicuro and Cédric Gerbelot and Alessandro Pacco and Florent Krzakala and Lenka Zdeborová},
      year={2021},
      journal={arXiv:2106.03791 [stat.ML]},
}

@InProceedings{pmlr-v119-mignacco20a,
  title     =   {The Role of Regularization in Classification of High-dimensional Noisy {G}aussian Mixture},
  author    =   {Mignacco, Francesca and Krzakala, Florent and Lu, Yue and Urbani, Pierfrancesco and Zdeborova, Lenka},
  booktitle =   {Proceedings of the 37th International Conference on Machine Learning},
  year      =   {2020},
  series    =   {Proceedings of Machine Learning Research},
  publisher =   {PMLR},
}

@inproceedings{NEURIPS20190609154f,
    author = {Bellec, Pierre and Kuchibhotla, Arun},
    booktitle = {Advances in Neural Information Processing Systems},
    editor = {H. Wallach and H. Larochelle and A. Beygelzimer and F. d\textquotesingle Alch\'{e}-Buc and E. Fox and R. Garnett},
    publisher = {Curran Associates, Inc.},
    title = {First order expansion of convex regularized estimators},
    volume = {32},
    year = {2019}
}

@ARTICLE{9240945,
  author={Aubin, Benjamin and Loureiro, Bruno and Maillard, Antoine and Krzakala, Florent and Zdeborová, Lenka},
  journal={IEEE Transactions on Information Theory}, 
  title={The Spiked Matrix Model With Generative Priors}, 
  year={2021},
  number={2},
}

@inproceedings{rangan2011generalized,
  title={Generalized approximate message passing for estimation with random linear mixing},
  author={Rangan, Sundeep},
  booktitle={2011 IEEE International Symposium on Information Theory Proceedings},
  year={2011},
  organization={IEEE}
}

@book{mezard2009information,
  title={Information, physics, and computation},
  author={Mezard, Marc and Montanari, Andrea},
  year={2009},
  publisher={Oxford University Press}
}

@article{candes_phase_2018,
	title = {The phase transition for the existence of the maximum likelihood estimate in high-dimensional logistic regression},
	author = {Candes, Emmanuel J. and Sur, Pragya},
	year = {2018},
	journal={arXiv:1804.09753 [stat.ME]},
}

@inproceedings{ritter_scalable_2018,
    title={A Scalable Laplace Approximation for Neural Networks},
    author={Hippolyt Ritter and Aleksandar Botev and David Barber},
    booktitle={International Conference on Learning Representations},
    year={2018},
}

@article{MacKay1992,
    author = {MacKay, David J. C.},
    title = "{Bayesian Interpolation}",
    journal = {Neural Computation},
    volume = {4},
    number = {3},
    pages = {415-447},
    year = {1992},
    month = {05},
    issn = {0899-7667},
    doi = {10.1162/neco.1992.4.3.415},
    url = {https://doi.org/10.1162/neco.1992.4.3.415},
    eprint = {https://direct.mit.edu/neco/article-pdf/4/3/415/812340/neco.1992.4.3.415.pdf},
}

@article{barbier2019adaptive,
  title={The adaptive interpolation method: a simple scheme to prove replica formulas in Bayesian inference},
  author={Barbier, Jean and Macris, Nicolas},
  journal={Probability theory and related fields},
  volume={174},
  year={2019},
  publisher={Springer}
}

@article{cover1991elements,
  title={Elements of information theory},
  author={Cover Thomas, M and Thomas Joy, A},
  journal={New York: Wiley},
  volume={3},
  year={1991}
}

@article{aubin_generalization_2020,
    author = {Aubin, Benjamin and Krzakala, Florent and Lu, Yue and Zdeborov\'{a}, Lenka},
    journal = {Advances in Neural Information Processing Systems},
    editor = {H. Larochelle and M. Ranzato and R. Hadsell and M. F. Balcan and H. Lin},
    pages = {12199--12210},
    title = {Generalization error in high-dimensional perceptrons: Approaching Bayes error with convex optimization},
    year = {2020}
}

@article{thrampoulidis2018precise,
  title={Precise error analysis of regularized M-estimators in high dimensions},
  author={Thrampoulidis, Christos and Abbasi, Ehsan and Hassibi, Babak},
  journal={IEEE Transactions on Information Theory},
  number={8},
  year={2018},
  publisher={IEEE}
}

@article{loureiro_learning_2021,
	title = {Learning curves of generic features maps for realistic datasets with a teacher-student model},
	author = {Loureiro, Bruno and Gerbelot, Cédric and Cui, Hugo and Goldt, Sebastian and Krzakala, Florent and Mézard, Marc and Zdeborová, Lenka},
	year = {2021},
   journal={arXiv: 2102.08127 [stat.ML]},
}

@article{javanmard2013state,
  title={State evolution for general approximate message passing algorithms, with applications to spatial coupling},
  author={Javanmard, Adel and Montanari, Andrea},
  journal={Information and Inference: A Journal of the IMA},
  volume={2},
  number={2},
  year={2013},
  publisher={OUP}
}

@article{barbier_optimal_2019,
	title = {Optimal {Errors} and {Phase} {Transitions} in {High}-{Dimensional} {Generalized} {Linear} {Models}},
	volume = {116},
	number = {12},
	journal = {Proceedings of the National Academy of Sciences},
	author = {Barbier, Jean and Krzakala, Florent and Macris, Nicolas and Miolane, Léo and Zdeborová, Lenka},
	year = {2019},
}

@article{bai_dont_2021,
	title = {Don't {Just} {Blame} {Over}-parametrization for {Over}-confidence: {Theoretical} {Analysis} of {Calibration} in {Binary} {Classification}},
	shorttitle = {Don't {Just} {Blame} {Over}-parametrization for {Over}-confidence},
	author = {Bai, Yu and Mei, Song and Xiong, Caiming},
	year = {2021},
	journal={arXiv:2102.07856 [cs.LG]},
}

@article{guo_calibration_2017,
	title = {On {Calibration} of {Modern} {Neural} {Networks}},
	author = {Guo, Chuan and Pleiss, i and Sun, Yu and Weinberger, Kilian Q.},
	journal = {Proceedings of the 34th International Conference on Machine Learning},
	year = {2017},
}

@article{platt_2000,
    author = {Platt, John},
    year = {2000},
    title = {Probabilistic Outputs for Support Vector Machines and Comparisons to Regularized Likelihood Methods},
    volume = {10},
    journal = {Adv. Large Margin Classif.}
}

@book{mezard1987spin,
  title={Spin glass theory and beyond: An Introduction to the Replica Method and Its Applications},
  author={M{\'e}zard, Marc and Parisi, Giorgio and Virasoro, Miguel Angel},
  volume={9},
  year={1987},
  publisher={World Scientific Publishing Company}
}

@article{zadrozny_binning_2001,
author = {Zadrozny, Bianca and Elkan, Charles},
year = {2001},
pages = {},
title = {Obtaining Calibrated Probability Estimates from Decision Trees and Naive Bayesian Classifiers},
journal = {ICML}
}

@article{zadrozny_isotonic_2002,
author = {Zadrozny, Bianca and Elkan, Charles},
year = {2002},
title = {Transforming Classifier Scores into Accurate Multiclass Probability Estimates},
journal = {Proceedings of the ACM SIGKDD International Conference on Knowledge Discovery and Data Mining},
}

@article{mukhoti_focal_loss_2020,
      title={Calibrating Deep Neural Networks using Focal Loss}, 
      author={Jishnu Mukhoti and Viveka Kulharia and Amartya Sanyal and Stuart Golodetz and Philip H. S. Torr and Puneet K. Dokania},
      year={2020},
      journal = {Advances in Neural Information Processing Systems},
}

@article{liu_distance_awareness_2020,
      title={Simple and Principled Uncertainty Estimation with Deterministic Deep Learning via Distance Awareness}, 
      author={Jeremiah Zhe Liu and Zi Lin and Shreyas Padhy and Dustin Tran and Tania Bedrax-Weiss and Balaji Lakshminarayanan},
      journal = {Advances in Neural Information Processing Systems},
      year={2020},
}

@inproceedings{seddik2020random,
  title={Random matrix theory proves that deep learning representations of gan-data behave as gaussian mixtures},
  author={Seddik, Mohamed El Amine and Louart, Cosme and Tamaazousti, Mohamed and Couillet, Romain},
  booktitle={International Conference on Machine Learning},
  pages={8573--8582},
  year={2020},
  organization={PMLR}
}

@article{dhifallah2020precise,
  title={A precise performance analysis of learning with random features},
  author={Dhifallah, Oussama and Lu, Yue M},
  journal={arXiv:2008.11904 [cs.IT]},
  year={2020}
}

@article{thulasidasan_mixup_2020,
      title={On Mixup Training: Improved Calibration and Predictive Uncertainty for Deep Neural Networks}, 
      author={Sunil Thulasidasan and Gopinath Chennupati and Jeff Bilmes and Tanmoy Bhattacharya and Sarah Michalak},
      booktitle={Advances in Neural Information Processing Systems},
      year={2019},
      journal = {Advances in Neural Information Processing Systems},
}

@article{malinin_ensemble_distillation_2019,
      title={Ensemble Distribution Distillation}, 
      author={Andrey Malinin and Bruno Mlodozeniec and Mark Gales},
      year={2020},
      journal={International Conference on Learning Representations},
}

@inproceedings{10.5555/3295222.3295309,
    author = {Kendall, Alex and Gal, Yarin},
    title = {What Uncertainties Do We Need in Bayesian Deep Learning for Computer Vision?},
    year = {2017},
    publisher = {Curran Associates Inc.},
    booktitle = {Proceedings of the 31st International Conference on Neural Information Processing Systems},
    series = {NIPS'17}
}

@InProceedings{pmlr-v48-gal16,
    title = 	 {Dropout as a Bayesian Approximation: Representing Model Uncertainty in Deep Learning},
    author = 	 {Gal, Yarin and Ghahramani, Zoubin},
    booktitle = 	 {Proceedings of The 33rd International Conference on Machine Learning},
    year = 	 {2016},
    editor = 	 {Balcan, Maria Florina and Weinberger, Kilian Q.},
}

@inproceedings{NIPS2017_9ef2ed4b,
     author = {Lakshminarayanan, Balaji and Pritzel, Alexander and Blundell, Charles},
     booktitle = {Advances in Neural Information Processing Systems},
     pages = {},
     publisher = {Curran Associates, Inc.},
     title = {Simple and Scalable Predictive Uncertainty Estimation using Deep Ensembles},
     year = {2017}
}

@article{posch2019variational,
      title={Variational Inference to Measure Model Uncertainty in Deep Neural Networks}, 
      author={Konstantin Posch and Jan Steinbrener and Jürgen Pilz},
      year={2019},
      journal={arXiv:1902.10189 [stat.ML]},
}

@article {sur_modern_2018,
	author = {Sur, Pragya and Cand{\`e}s, Emmanuel J.},
	title = {A modern maximum-likelihood theory for high-dimensional logistic regression},
	volume = {116},
	number = {29},
	year = {2019},
	publisher = {National Academy of Sciences},
	journal = {Proceedings of the National Academy of Sciences}
}

@article{ABDAR2021243,
title = {A review of uncertainty quantification in deep learning: Techniques, applications and challenges},
journal = {Information Fusion},
volume = {76},
year = {2021},
author = {Moloud Abdar and Farhad Pourpanah and Sadiq Hussain and Dana Rezazadegan and Li Liu and Mohammad Ghavamzadeh and Paul Fieguth and Xiaochun Cao and Abbas Khosravi and U. Rajendra Acharya and Vladimir Makarenkov and Saeid Nahavandi},
}

@misc{liang_precise_2020,
  author = {Liang, Tengyuan and Sur, Pragya},
  title = {A Precise High-Dimensional Asymptotic Theory for Boosting and Minimum-$\ell_1$-Norm Interpolated Classifiers},
  journal = {arxiv:2002.01586 [math.ST]},
  year = {2020},
}

@misc{kristiadi_being_2020,
  doi = {10.48550/ARXIV.2002.10118},
  url = {https://arxiv.org/abs/2002.10118},
  author = {Kristiadi, Agustinus and Hein, Matthias and Hennig, Philipp},
  keywords = {Machine Learning (stat.ML), Machine Learning (cs.LG), FOS: Computer and information sciences, FOS: Computer and information sciences},
  title = {Being Bayesian, Even Just a Bit, Fixes Overconfidence in ReLU Networks},
  publisher = {arXiv},
  year = {2020},
  copyright = {arXiv.org perpetual, non-exclusive license}
}

@article{Mackay1995,
author = {David J C Mackay},
title = {Probable networks and plausible predictions -- a review of practical Bayesian methods for supervised neural networks},
journal = {Network: Computation in Neural Systems},
volume = {6},
number = {3},
pages = {469-505},
year  = {1995},
publisher = {Taylor & Francis},
doi = {10.1088/0954-898X\_6\_3\_011},

URL = { 
        https://doi.org/10.1088/0954-898X_6_3_011
    
},
eprint = { 
        https://doi.org/10.1088/0954-898X_6_3_011   
}
}

@misc{Wilson2020,
  doi = {10.48550/ARXIV.2001.10995},
  url = {https://arxiv.org/abs/2001.10995},
  author = {Wilson, Andrew Gordon},
  keywords = {Machine Learning (cs.LG), Machine Learning (stat.ML), FOS: Computer and information sciences, FOS: Computer and information sciences},
  title = {The Case for Bayesian Deep Learning},
  publisher = {arXiv},
  year = {2020},  
  copyright = {arXiv.org perpetual, non-exclusive license}
}

@inproceedings{maddox_simple_2019,
 author = {Maddox, Wesley J and Izmailov, Pavel and Garipov, Timur and Vetrov, Dmitry P and Wilson, Andrew Gordon},
 booktitle = {Advances in Neural Information Processing Systems},
 editor = {H. Wallach and H. Larochelle and A. Beygelzimer and F. d\textquotesingle Alch\'{e}-Buc and E. Fox and R. Garnett},
 pages = {},
 publisher = {Curran Associates, Inc.},
 title = {A Simple Baseline for Bayesian Uncertainty in Deep Learning},
 url = {https://proceedings.neurips.cc/paper/2019/file/118921efba23fc329e6560b27861f0c2-Paper.pdf},
 volume = {32},
 year = {2019}
}

@misc{Kapoor2022,
  doi = {10.48550/ARXIV.2203.16481},
  url = {https://arxiv.org/abs/2203.16481},
  author = {Kapoor, Sanyam and Maddox, Wesley J. and Izmailov, Pavel and Wilson, Andrew Gordon},
  keywords = {Machine Learning (cs.LG), Machine Learning (stat.ML), FOS: Computer and information sciences, FOS: Computer and information sciences},
  title = {On Uncertainty, Tempering, and Data Augmentation in Bayesian Classification},
  publisher = {arXiv},
  year = {2022},  
  copyright = {arXiv.org perpetual, non-exclusive license}
}

@article{Marion_1995,
	doi = {10.1088/0305-4470/28/8/010},
	url = {https://doi.org/10.1088/0305-4470/28/8/010},
	year = 1995,
	publisher = {{IOP} Publishing},
	volume = {28},
	number = {8},
	pages = {2159--2171},
	author = {G Marion and D Saad},
	title = {A statistical mechanical analysis of a Bayesian inference scheme for an unrealizable rule},
	journal = {Journal of Physics A: Mathematical and General},
}

@article{Bruce_1994,
	doi = {10.1088/0305-4470/27/10/010},
	url = {https://doi.org/10.1088/0305-4470/27/10/010},
	year = 1994,
	publisher = {{IOP} Publishing},
	volume = {27},
	number = {10},
	pages = {3355--3363},
	author = {A D Bruce and D Saad},
	title = {Statistical mechanics of hypothesis evaluation},
	journal = {Journal of Physics A: Mathematical and General},
}

@inproceedings{NIPS1994_e6cb2a3c,
 author = {Marion, Glenn and Saad, David},
 booktitle = {Advances in Neural Information Processing Systems},
 editor = {G. Tesauro and D. Touretzky and T. Leen},
 pages = {},
 publisher = {MIT Press},
 title = {Hyperparameters Evidence and Generalisation for an Unrealisable Rule},
 url = {https://proceedings.neurips.cc/paper/1994/file/e6cb2a3c14431b55aa50c06529eaa21b-Paper.pdf},
 volume = {7},
 year = {1994}
}

@misc{Mattei2019,
  doi = {10.48550/ARXIV.1902.05539},
  url = {https://arxiv.org/abs/1902.05539},
  author = {Mattei, Pierre-Alexandre},
  keywords = {Methodology (stat.ME), FOS: Computer and information sciences, FOS: Computer and information sciences, I.2.6, 62-01},
  title = {A Parsimonious Tour of Bayesian Model Uncertainty},
  publisher = {arXiv},
  year = {2019},
  copyright = {arXiv.org perpetual, non-exclusive license}
}

@inproceedings{
aitchison2021a,
title={A statistical theory of cold posteriors in deep neural networks},
author={Laurence Aitchison},
booktitle={International Conference on Learning Representations},
year={2021},
url={https://openreview.net/forum?id=Rd138pWXMvG}
}

@article{Adlam2020,
  doi = {10.48550/ARXIV.2008.00029},
  url = {https://arxiv.org/abs/2008.00029},
  author = {Adlam, Ben and Snoek, Jasper and Smith, Samuel L.},
  keywords = {Machine Learning (stat.ML), Machine Learning (cs.LG), FOS: Computer and information sciences, FOS: Computer and information sciences},
  title = {Cold Posteriors and Aleatoric Uncertainty},
  publisher = {arXiv},
  year = {2020},  
  copyright = {arXiv.org perpetual, non-exclusive license}
}

@conference{daxberger_laplace_2021,
    title = {Laplace Redux — Effortless Bayesian Deep Learning},
    author = {Daxberger, E. and Kristiadi*, A. and Immer*, A. and Eschenhagen*, R. and Bauer, M. and Hennig, P.},
    booktitle = {Advances in Neural Information Processing Systems 34 (NeurIPS 2021)},
    pages = {20089--20103},
    editors = {M. Ranzato and A. Beygelzimer and Y. Dauphin and P.S. Liang and J. Wortman Vaughan},
    publisher = {Curran Associates, Inc.},
    month = dec,
    year = {2021},
    note = {*equal contribution},
    month_numeric = {12}
}

@inproceedings{gupta_distribution_2020,
    author = {Gupta, Chirag and Podkopaev, Aleksandr and Ramdas, Aaditya},
    title = {Distribution-Free Binary Classification: Prediction Sets, Confidence Intervals and Calibration},
    year = {2020},
    isbn = {9781713829546},
    publisher = {Curran Associates Inc.},
    booktitle = {Proceedings of the 34th International Conference on Neural Information Processing Systems},
    articleno = {313},
    numpages = {13},
    location = {Vancouver, BC, Canada},
    series = {NIPS'20}
}

@article{angelopoulos_learn_2021,
    author    = {Anastasios N. Angelopoulos and Stephen Bates and Emmanuel J. Cand{\`{e}}s and Michael I. Jordan and Lihua Lei},
    title     = {Learn then Test: Calibrating Predictive Algorithms to Achieve Risk
               Control},
    journal   = {CoRR},
    volume    = {abs/2110.01052},
    year      = {2021},
    url       = {https://arxiv.org/abs/2110.01052},
    eprinttype = {arXiv},
    eprint    = {2110.01052},
}

@article{shafer_tutorial_2008,
author = {Shafer, Glenn and Vovk, Vladimir},
title = {A Tutorial on Conformal Prediction},
year = {2008},
issue_date = {6/1/2008},
publisher = {JMLR.org},
volume = {9},
issn = {1532-4435},
journal = {J. Mach. Learn. Res.},
pages = {371–421},
numpages = {51}
}

\newpage 

\balance

%%%%%%%%%%%%%%%%%%%%%%%%%%%%%%%%%%%%%%%%%%%%%%%%%%%%%%%%%%%%%%%%%%%%%%%%%%%%%%%
%%%%%%%%%%%%%%%%%%%%%%%%%%%%%%%%%%%%%%%%%%%%%%%%%%%%%%%%%%%%%%%%%%%%%%%%%%%%%%%
% APPENDIX
%%%%%%%%%%%%%%%%%%%%%%%%%%%%%%%%%%%%%%%%%%%%%%%%%%%%%%%%%%%%%%%%%%%%%%%%%%%%%%%
%%%%%%%%%%%%%%%%%%%%%%%%%%%%%%%%%%%%%%%%%%%%%%%%%%%%%%%%%%%%%%%%%%%%%%%%%%%%%%%
\newpage
\appendix
\onecolumn

\section{Cavity derivation of the analytical results}
\label{sec:app:cavity}
%%%%%%%%%%%%%%%%%%%%%%%%%%%%%%%%%%%%%%%%%%%%%%%
In this appendix we sketch how the self-consistent equations \eqref{def:eq-bo} and \eqref{def:eq-erm-hats} characterizing the sufficient statistics $(q_{\bo}, m, q_{\erm})$ can actually be derived via the heuristic cavity method \cite{mezard1987spin,mezard2009information} from statistical physics. 

We shall use the notation of Rangan's GAMP algorithm \cite{rangan2011generalized} and present our results as a derivation of GAMP algorithm from cavity, or beleif propagation, as in \cite{zdeborova2016statistical}. This allows to connect all our results as well as the  state evolution equations of the GAMP \ref{alg:gamp} algorithm in a single framework. Note that in its most general form, GAMP can be used both as an algorithm for estimating the marginals of the posterior distribution $\vec{w}_{\amp} = \mathbb{E}[w|\mathcal{D}]$ or to minimize the empirical risk in \ref{eq:def:erm} - the only difference between the two being the choice of denoising functions $(f_{{\rm out}}, f_{w})$. 

The novelty of our approach consists of running two GAMP algorithms in parallel \emph{on the same instance} of data $\mathcal{D} = \{(\vec{x}^{\mu},y^{\mu})\}_{\mu=1}^{n}$ drawn from the probit model introduced in eq.~\eqref{eq:def:data}. Although we run the two versions of GAMP independently, they are correlated through the data $\mathcal{D}$ - and our goal is to characterize exactly their joint distribution.  

%%%%%%%%%%%%%%%%%%%%%%%%%%%%%%
\subsection{Joint state evolution} 
%%%%%%%%%%%%%%%%%%%%%%%%%%%%%%
Consider we are running two AMPs in parallel, one for BO estimation and one for ERM. To distinguish both messages, we will denote ERM messages with a tilde: $\tilde{\omega}^{t}, \tilde{V}^{t}$, etc. To derive the asymptotic distribution of the estimators $(\hat{\vec{w}}_{\amp}, \hat{\vec{w}}_{\erm})$, it is more convenient to start from a close cousin of AMP: the reduced Belief Propagation equations (rBP). Note that in the high-dimensional limit that we are interested in this manuscript, rBP is equivalent to AMP, see for instance \cite{Aubin_2019} or \cite{9240945} for a detailed derivation. Written in coordinates, the rBP equations are given by:
\begin{align}
	&\begin{cases}
		\omega^{t}_{\mu\to i} = \sum\limits_{j\neq i} x^{\mu}_{j}\hat{w}_{j\to\mu}^{t}\\
		V^{t}_{\mu\to i} = \sum\limits_{j\neq i}(x^{\mu}_{j})^2 \hat{c}^{t}_{j\to \mu}
	\end{cases}, &&
	\begin{cases}
		g^{t}_{\mu\to i} = f_{\text{out}}(y^{\mu}, \omega_{\mu\to i}^{t}, V^{t}_{\mu\to i})\\
		\partial g^{t}_{\mu\to i} = \partial_{\omega}f_{\text{out}}(y^{\mu}, \omega_{\mu\to i}^{t}, V^{t}_{\mu\to i})\\
	\end{cases}\\
	&\begin{cases}
		b^{t}_{\mu\to i} = \sum\limits_{\nu\neq \mu} x^{\nu}_{i}g^{t}_{\nu\to i}\\
		A^{t}_{\mu\to i} = -\sum\limits_{\nu\neq \mu} (x^{\nu}_{i})^2 \partial g^{t}_{\nu\to i}\\
	\end{cases},	&&
	\begin{cases}
		\hat{w}^{t+1}_{i\to\mu} f_{w}(b^{t}_{i\to\mu}, A^{t}_{i\to \mu})	\\
		\hat{c}^{t+1}_{i\to\mu} \partial_{b}f_{w}(b^{t}_{\mu\to i}, A^{t}_{\mu\to i})	
	\end{cases}
\end{align}
\noindent where $(f_{\text{out}}, f_{w})$ denote the denoising functions that could be associated either to BO or ERM estimation, and that can be generically written in terms of an estimation likelihood $P_{\text{out}}$ and prior $P_{w}$ as:
\begin{align}
\begin{cases}
f_{\text{out}}(y,\omega, V) &= \partial_{\omega}\log \mathcal{Z}_{\text{out}}(y,\omega, V) \\
\mathcal{Z}_{\text{out}}(y,\omega, V) &= \int_{\mathbb{R}}\frac{\dd x}{\sqrt{2\pi V}}e^{-\frac{(x-\omega)^2}{2V}} P_{\text{\out}}(y|x)
\end{cases},&&
\begin{cases}
    f_{w}(b, A) &= \partial_{b}\log \mathcal{Z}_{w}(b,A)\\
    \mathcal{Z}_{w}(b, A) &= \int_{\mathbb{R}}\dd w~P_{w}(w) e^{-\frac{1}{2}Aw^2+bw}
\end{cases}.
\end{align}
By assumption, the rBP messages are independent from each other, and since we are running both BO and ERM independently, they are only coupled to each other through the data, which has been generated by the same data model:
\begin{align}
    y^{\mu} \sim P_{0}(\cdot|\wstar^{\top}\vec{x}^{\mu}), && \vec{x}^{\mu}\sim\mathcal{N}(0, \sfrac{1}{d}\mat{I}_{d}),&& \wstar\sim \prod\limits_{i=1}^{d}P_{0}(w_{\star i}).
\end{align}
Note that here we work in a more general setting than the one in the main manuscript \eqref{eq:def:data}. Indeed, the derivation presented here work for \emph{any} factorised distribution of teacher weights $\wstar$ and any likelihood $P_{0}$ (of which the probit is a particular case). For convenience, define the so-called \emph{teacher local field}:
\begin{align}
z_{\mu} = \sum\limits_{j=1}^{d}x^{\mu}_{j}w_{\star j}	
\end{align}

%%%%%%%%%%%%%%%%%%%%%%
\subsection*{Step 1: Asymptotic joint distribution of $(z_{\mu}, \omega_{\mu\to i}^{t}, \tilde{\omega}_{\mu\to i}^{t})$}
%%%%%%%%%%%%%%%%%%%%%%
Note that $(z_{\mu}, \omega_{\mu\to i}^{t}, \tilde{\omega}_{\mu\to i}^{t})$ are given by a sum of independent random variables with variance $d^{-1/2}$, and therefore by the Central Limit Theorem in the limit $d\to\infty$ they are asymptotically Gaussian. Therefore we only need to compute their means, variances and cross correlation. The means are straightforward, since $x^{\mu}_{i}$ have mean zero and therefore they will also have mean zero. The variances are given by:
\begin{align}
\mathbb{E}\left[z_{\mu}^2\right] &= \mathbb{E}\left[\sum\limits_{i=1}^{d}\sum\limits_{j=1}^{d}x^{\mu}_{i}x^{\mu}_{j}w_{\star i}w_{\star j}\right] = \sum\limits_{i=1}^{d}\sum\limits_{j=1}^{d}\mathbb{E}\left[x^{\mu}_{i}x^{\mu}_{j}\right]
w_{\star i}w_{\star j}\notag = \frac{1}{d}\sum\limits_{i=1}^{d}\sum\limits_{j=1}^{d}\delta_{ij} w_{\star i}w_{\star j}\notag\\
 &= \frac{||\wstar||^2_{2}}{d} \underset{d\to\infty}{\rightarrow} \rho \\
\mathbb{E}\left[\left(\omega^{t}_{\mu\to i}\right)^2\right] &= \mathbb{E}\left[\sum\limits_{j\neq i}^{d}\sum\limits_{k\neq i}^{d}x^{\mu}_{j}x^{\mu}_{k}\hat{w}^{t}_{j\to\mu}\hat{w}^{t}_{k\to\mu}\right] = \sum\limits_{j\neq i}^{d}\sum\limits_{k\neq i}^{d}\mathbb{E}\left[x^{\mu}_{j}x^{\mu}_{k}\right]
\hat{w}^{t}_{j\to\mu}\hat{w}^{t}_{k\to\mu} \notag\\
&= \frac{1}{d}\sum\limits_{j\neq i}^{d}\sum\limits_{k\neq i}^{d}\delta_{jk}\hat{w}^{t}_{j\to\mu}\hat{w}^{t}_{k\to\mu}
=\frac{1}{d}\sum\limits_{j\neq i}^{d}\left(\hat{w}^{t}_{j\to\mu}\right)^2 = \frac{||\hat{\vec{w}}^{t}||^{2}_{2}}{d}-\frac{1}{d}(\hat{w}^{t}_{i\to \mu})^2 \underset{d\to\infty}{\rightarrow} q^{t}\\
\mathbb{E}\left[z_{\mu}\omega^{t}_{\mu\to i}\right] &= \mathbb{E}\left[\sum\limits_{j\neq i}^{d}\sum\limits_{k=1}^{d}x^{\mu}_{j}x^{\mu}_{k}\hat{w}^{t}_{j\to\mu}w_{\star k}\right] = \sum\limits_{j\neq i}^{d}\sum\limits_{k=1}^{d}\mathbb{E}\left[x^{\mu}_{j}x^{\mu}_{k}\right]
\hat{w}^{t}_{j\to\mu}w_{\star k}\notag \\
&= \frac{1}{d}\sum\limits_{j\neq i}^{d}\sum\limits_{k=1}^{d}\delta_{jk}\hat{w}^{t}_{j\to\mu}w_{\star k}
=\frac{1}{d}\sum\limits_{j\neq i}^{d}\hat{w}^{t}_{j\to\mu}w_{\star j} = \frac{\hat{\vec{w}}^{t}\cdot \wstar}{d}-\frac{1}{d}\hat{w}^{t}_{i\to \mu}w_{\star i} \underset{d\to\infty}{\rightarrow} m^{t} \\
\mathbb{E}\left[\omega^{t}_{\mu\to i}\tilde{\omega}^{t}_{\mu\to i}\right] &= \mathbb{E}\left[\sum\limits_{j\neq i}^{d}\sum\limits_{k\neq i}^{d}x^{\mu}_{j}x^{\mu}_{k}\hat{w}^{t}_{j\to\mu}\tilde{\hat{w}}^{t}_{k\to\mu}\right] \notag \\
&= \sum\limits_{j\neq i}^{d}\sum\limits_{k\neq i}^{d}\mathbb{E}\left[x^{\mu}_{j}x^{\mu}_{k}\right]
\hat{w}^{t}_{j\to\mu}\tilde{\hat{w}}^{t}_{k\to\mu} = \frac{1}{d}\sum\limits_{j\neq i}^{d}\sum\limits_{k\neq i}^{d}\delta_{jk}\hat{w}^{t}_{j\to\mu}\tilde{\hat{w}}^{t}_{k\to\mu}\notag \\
&=\frac{1}{d}\sum\limits_{j\neq i}^{d}\hat{w}^{t}_{j\to\mu}\tilde{\hat{w}}^{t}_{j\to\mu} = \frac{\hat{\vec{w}}^{t}\cdot \tilde{\hat{\vec{w}}}^{t}}{d}-\frac{1}{d}\hat{w}^{t}_{i\to \mu}\tilde{\hat{w}}^{t}_{i\to \mu} \underset{d\to\infty}{\rightarrow} Q^{t}
\end{align}
\noindent where we have used that $\hat{w}_{i\to\mu}^{t} = O(d^{-1/2})$ to simplify the sums at large $d$. Summarising our findings:
\begin{align}
\label{eq:joint:fields}
(z_{\mu}, \omega^{t}_{\mu\to i}, \tilde{\omega}_{\mu\to i}^{t})\sim\mathcal{N}\left(\vec{0}_{3},\begin{bmatrix}
\rho & m^{t} & \tilde{m}^{t} \\ m^{t} & q^{t} & Q^{t} \\ \tilde{m}^{t} & Q^{t} & \tilde{q}^{t}	
\end{bmatrix}
\right)	
\end{align}
\noindent with:
\begin{align}
\rho \equiv \frac{1}{d}||\wstar||^2	, && q^{t} \equiv \frac{1}{d}||\hat{\vec{w}}_{\text{BO}}^{t}||^2, && \tilde{q}^{t} \equiv \frac{1}{d}||\hat{\vec{w}}_{\text{ERM}}^{t}||^2 \notag\\ 
m^{t} \equiv \frac{1}{d} \hat{\vec{w}}_{\text{BO}}\cdot \wstar, && \tilde{m}^{t} \equiv \frac{1}{d} \hat{\vec{w}}_{\text{ERM}}\cdot \wstar, && Q^{t} \equiv \frac{1}{d} \hat{\vec{w}}_{\text{BO}}\cdot \hat{\vec{w}}_{\text{ERM}}
\end{align}

%%%%%%%%%%%%%%%%%%%%%%
\subsection*{Step 2: Concentration of variances $V_{\mu\to i}^{t}, \tilde{V}_{\mu\to i}^{t}$}
%%%%%%%%%%%%%%%%%%%%%%
Since the variances $V_{\mu\to i}^{t}, \tilde{V}_{\mu\to i}^{t}$ depend on $(x_{i}^{\mu})^2$, in the asymptotic limit $d\to\infty$ they concentrate around their means:
\begin{align}
\mathbb{E}\left[V^{t}_{\mu\to i}\right]  = \sum\limits_{j\neq i}\mathbb{E}\left[\left(x^{\mu}_{i}\right)^2\right]\hat{c}^{t}_{j\to \mu}	= \frac{1}{d}\sum\limits_{j\neq i}\hat{c}^{t}_{j\to \mu} =  \frac{1}{d}\sum\limits_{j=1}^{d}\hat{c}^{t}_{j\to \mu} - \frac{1}{d}\hat{c}^{t}_{i\to \mu}\underset{d\to\infty}{\rightarrow} V^{t} \equiv \frac{1}{d}\sum\limits_{j=1}^{d}\hat{c}^{t}_{j}
\end{align}
\noindent where we have defined the variance overlap $V^{t}$. The same argument can be used for $\tilde{V}^{t}_{\mu\to in}$. Summarising, asymptotically we have:
\begin{align}
V_{\mu\to i}^{t} \to V^{t}, && \tilde{V}_{\mu\to i}^{t} \to \tilde{V}^{t}
\end{align}

%%%%%%%%%%%%%%%%%%%%%%
\subsection*{Step 3: Distribution of $b_{\mu\to i}^{t}, \tilde{b}_{\mu\to i}^{t}$}
%%%%%%%%%%%%%%%%%%%%%%
By definition, we have
\begin{align}
	b^{t}_{\mu\to i} &= \sum\limits_{\nu\neq \mu} x^{\nu}_{i}g^{t}_{\nu\to i} = \sum\limits_{\nu\neq \mu} x^{\nu}_{i}f_{\text{out}}(y^{\mu}, \omega_{\nu\to i}^{t}, V^{t}_{\nu\to i}) = \sum\limits_{\nu\neq \mu} x^{\nu}_{i}f_{\text{out}}(f_{0}(z_{\nu} + \noisestr \xi_{\nu}), \omega_{\nu\to i}^{t}, V^{t}_{\nu\to i})\notag\\
\end{align}
Note that in the sum $z_{\nu} = \sum\limits_{j=1}^{d}x^{\nu}_{j}w_{\star j}$ there is a term $i=j$, and therefore $z_{\mu}$ is correlated with $x^{\nu}_{i}$. To make this explicit, we split the teacher local field: 
\begin{align}
z_{\mu} = \sum\limits_{j=1}^{d}x^{\mu}_{j}w_{\star j}	 = \underbrace{\sum\limits_{j\neq i}x^{\mu}_{j}w_{\star j}}_{z_{\mu\to i}}+x^{\mu}_{i}w_{\star i}
\end{align}
\noindent and note that $z_{\mu\to i} = O(1)$ is independent from $x^{\nu}_{i}$. Since $x^{\mu}_{i}w_{\star i} = O(d^{-1/2})$, to take the average at leading order, we can expand the denoising function:
\begin{align}
f_{\text{out}}(f_{0}(z_\mu + \noisestr \xi_{\mu}), \omega^{t}_{\nu\to i}, V^{t}_{\nu\to i}) &= f_{\text{out}}(f_{0}(z_{\nu\to i} + \noisestr \xi_{\nu}), \omega^{t}_{\nu\to i}, V^{t}_{\nu\to i}) \\ &+ \partial_{z}f_{\text{out}}(f_{0}(z_{\nu\to i} + \noisestr \xi_{\nu}), \omega^{t}_{\nu\to i}, V^{t}_{\nu\to i})x^{\nu}_{i}w_{\star i} + O(d^{-1})\notag
\end{align}
Inserting in the expression for $b^{t}_{\mu\to i}$,
\begin{align}
	b^{t}_{\mu\to i} &= \sum\limits_{\nu\neq \mu} x^{\nu}_{i}f_{\text{out}}(f_{0}(z_{\nu\to i} + \noisestr \xi_{\nu}), \omega_{\nu\to i}^{t}, V^{t}_{\nu\to i})
	\\& + \sum\limits_{\nu \neq \mu} (x^{\nu}_{i})^2 \partial_{z}f_{\text{out}}(f_{0}(z_{\nu\to i} + \noisestr \xi_{\nu}), \omega_{\nu\to i}^{t}, V^{t}_{\nu\to i})w_{\star i} + O(d^{-3/2})\notag
\end{align}
Therefore:
\begin{align}
\mathbb{E}\left[b^{t}_{\mu\to i}\right] &= \frac{w_{\star i}}{d}\sum\limits_{\nu\neq\mu} \partial_{z}f_{\text{out}}(f_{0}(z_{\nu\to i} + \noisestr \xi_{\nu}), \omega_{\nu\to i}^{t}, V^{t}_{\nu\to i})+ O(d^{-3/2}) \notag\\
&= \frac{w_{\star i}}{d}\sum\limits_{\nu=1}^{n} \partial_{z}f_{\text{out}}(f_{0}(z_{\nu\to i} + \noisestr \xi_{\nu}), \omega_{\nu\to i}^{t}, V^{t}_{\nu\to i})  + O(d^{-3/2})
\end{align}
Note that as $d\to\infty$, for fixed $t$ and for all $\nu$, the fields $(z_{\nu\to i}, \omega^{t}_{\nu\to i})$ are identically distributed according to average in eq.~\eqref{eq:joint:fields}. Therefore, 
\begin{align}
\frac{1}{d}\sum\limits_{\nu=1}^{n} \partial_{z}f_{\text{out}}(f_{0}(z_{\nu\to i} + \noisestr \xi_{\nu}), \omega_{\nu\to i}^{t}, V^{t}_{\nu\to i}) \underset{d\to\infty}{\rightarrow} \alpha~\mathbb{E}_{(\omega, z), \xi}\left[\partial_{z}f_{\text{out}}(f_{0}(z + \noisestr \xi), \omega, V^{t})\right]	\equiv \hat{m}^{t}
\end{align}
\noindent so:
\begin{align}
\mathbb{E}\left[b^{t}_{\mu\to i}\right]  \underset{d\to\infty}{\rightarrow} w_{\star i}\hat{m}^{t}.
\end{align}
Similarly, the variance is given by:
\begin{align}
&\text{Var}\left[b^{t}_{\mu\to i}\right]\\
&= \sum\limits_{\nu\neq\mu}\sum\limits_{\kappa\neq\mu}\mathbb{E}\left[x^{\nu}_{i}x^{\kappa}_{i}\right] f_{\text{out}}(f_{0}(z_{\nu\to i} + \noisestr \xi_{\nu}), \omega_{\nu\to i}^{t}, V^{t}_{\nu\to i}) f_{\text{out}}(f_{0}(z_{\kappa\to i} + \noisestr \xi_{\kappa}), \omega_{\kappa\to i}^{t}, V^{t}_{\kappa\to i})+ O(d^{-2})\notag\\
&= \frac{1}{d}\sum\limits_{\nu\neq\mu}f_{\text{out}}(f_{0}(z_{\nu\to i} + \noisestr \xi_{\nu}), \omega_{\nu\to i}^{t}, V^{t}_{\nu\to i})^2 + O(d^{-2})\notag\\
&= \frac{1}{d}\sum\limits_{\nu=1}^{n}f_{\text{out}}(f_{0}(z_{\nu\to i} + \noisestr \xi_{\nu}), \omega_{\nu\to i}^{t}, V^{t}_{\nu\to i})^2 + O(d^{-2}) \notag\\
&\underset{d\to\infty}{\rightarrow} \alpha ~\mathbb{E}_{(z,\omega), \xi}\left[f_{\text{out}}(f_{0}(z + \noisestr \xi), \omega, V^{t})^2\right]\equiv \hat{q}^{t}
\end{align}
The same discussion holds for the ERM. We now just need to compute the correlation between both fields:
\begin{align}
&\text{Cov}\left[b^{t}_{\mu\to i}, \tilde{b}^{t}_{\mu\to i}\right]\\
&= \sum\limits_{\nu\neq\mu}\sum\limits_{\kappa\neq\mu}\mathbb{E}\left[x^{\nu}_{i}x^{\kappa}_{i}\right] f_{\text{out}}(f_{0}(z_{\nu\to i} + \noisestr \xi_{\nu}), \omega_{\nu\to i}^{t}, V^{t}_{\nu\to i}) \tilde{f}_{\text{out}}(f_{0}(z_{\kappa\to i} + \noisestr \xi_{\kappa}), \tilde{\omega}_{\kappa\to i}^{t}, \tilde{V}^{t}_{\kappa\to i})+ O(d^{-2})\notag\\
&= \frac{1}{d}\sum\limits_{\nu=1}^{n}f_{\text{out}}(f_{0}(z_{\nu\to i} + \noisestr \xi_{\nu}), \omega_{\nu\to i}^{t}, V^{t}_{\nu\to i})\tilde{f}_{\text{out}}(f_{0}(z_{\nu\to i} + \noisestr \xi_{\nu}), \tilde{\omega}_{\nu\to i}^{t}, \tilde{V}^{t}_{\nu\to i}) + O(d^{-2})\notag\\
&\underset{d\to\infty}{\rightarrow} \alpha ~\mathbb{E}_{(z,\omega, \tilde{\omega}), \xi}\left[f_{\text{out}}(f_{0}(z + \noisestr \xi), \omega, V^{t})\tilde{f}_{\text{out}}(f_{0}(z + \noisestr \xi), \tilde{\omega}, \tilde{V}^{t})\right]\equiv \hat{Q}^{t}
\end{align}
To summarise, we have:
\begin{align}
(b^{t}_{\mu\to i}, \tilde{b}^{t}_{\mu\to i})	\sim\mathcal{N}\left(w_{\star i}\begin{bmatrix}\hat{m}^{t}\\ \tilde{\hat{m}}^{t} \end{bmatrix}, \begin{bmatrix}\hat{q}^{t} & \hat{Q}^{t} \\ \hat{Q}^{t} & \tilde{\hat{q}}^{t}\end{bmatrix}\right)
\end{align}

%%%%%%%%%%%%%%%%%%%%%%
\subsection*{Step 4: Concentration of $A_{\mu\to i}^{t}, \tilde{A}_{\mu\to i}^{t}$}
%%%%%%%%%%%%%%%%%%%%%%
The only missing piece is to determine the distribution of the prior variances $A_{\mu\to i}^{t}, \tilde{A}_{\mu\to i}^{t}$. Similar to the previous variance, they concentrate:
\begin{align}
	A_{\mu\to i}^{t} &= -\sum\limits_{\nu\neq \mu}(x^{\nu}_{i})^2 \partial_{\omega}f_{\text{out}}(y^{\nu}, \omega_{\nu\to i}^{t}, V^{t}_{\nu\to i}) \\
	&= -\sum\limits_{\nu\neq \mu}(x^{\nu}_{i})^2 \partial_{\omega}f_{\text{out}}(f_{0}(z_{\nu\to i} + \noisestr \xi_{\nu}), \omega_{\nu\to i}^{t}, V^{t}_{\nu\to i}) + O(d^{-3/2})\notag\\
	&= -\frac{1}{d}\sum\limits_{\nu = 1} \partial_{\omega}f_{\text{out}}(f_{0}(z_{\nu\to i} + \noisestr \xi_{\nu}), \omega_{\nu\to i}^{t}, V^{t}_{\nu\to i}) + O(d^{-3/2})\notag\\
	&\underset{d\to\infty}{\rightarrow} - \alpha ~\mathbb{E}_{(z,\omega), \xi}\left[\partial_{\omega}f_{\text{out}}(f_{0}(z + \noisestr \xi), \omega, V^{t})\right]\equiv \hat{V}^{t}
\end{align}

\subsection*{Summary}
%%%%%%%%%%%%%%%%%%%%%%
We now have all the ingredients we need to characterise the asymptotic distribution of the estimators:
\begin{align}
\hat{\vec{w}}_{\text{BO}} &\sim f_{\text{out}}(\wstar\hat{m}^{t}+\sqrt{\hat{q}^{t}}\vec{\xi}, \hat{V}^{t})\\
\hat{\vec{w}}_{\text{ERM}} &\sim \tilde{f}_{\text{out}}(\wstar\tilde{\hat{m}}^{t}+\sqrt{\tilde{\hat{q}}^{t}}\vec{\eta}, \tilde{\hat{V}}^{t})
\end{align}
\noindent where $\vec{\eta}, \vec{\xi} \sim\mathcal{N}(\vec{0},\mat{I}_{d})$ are independent Gaussian variables. From that, we can recover the usual GAMP state evolution equations for the overlaps:
\begin{align}
	\begin{cases}
		V^{t+1} = \mathbb{E}_{(w_{\star}, \xi)}\left[\partial_{b}f_{w}(\hat{m}^{t}w_{\star}+\sqrt{\hat{q}^{t}}\xi, \hat{V}^{t})\right]\\
		q^{t+1} = \mathbb{E}_{(w_{\star}, \xi)}\left[f_{w}(\hat{m}^{t}w_{\star}+\sqrt{\hat{q}^{t}}\xi, \hat{V}^{t})^2\right]\\
		m^{t+1} = \mathbb{E}_{(w_{\star}, \xi)}\left[f_{w}(\hat{m}^{t}w_{\star}+\sqrt{\hat{q}^{t}}\xi, \hat{V}^{t})w_{\star i}\right]
	\end{cases}, && 	\begin{cases}
		\hat{V}^{t} =-\alpha \mathbb{E}_{(z, \omega), \xi}\left[\partial_{\omega}f_{\text{out}}(f_{0}(z + \noisestr \xi), \omega, V^{t})\right]\\
		\hat{q}^{t} =\alpha \mathbb{E}_{(z, \omega), \xi}\left[f_{\text{out}}(f_{0}(z + \noisestr \xi), \omega, V^{t})^2\right]\\
		\hat{m}^{t} =\alpha \mathbb{E}_{(z, \omega), \xi}\left[f_{\text{out}}(f_{0}(z + \noisestr \xi), \omega, V^{t})\right]
	\end{cases}
\end{align}
\noindent which is also valid for the tilde variables. But we can also get a set of equations for the correlations:
\begin{align}
\begin{cases}
	Q^{t} = \mathbb{E}_{w_{\star},(b, \tilde{b})}\left[f_{w}(b, \hat{V}^{t})\tilde{f}_{w}\left(\tilde{b}, \tilde{\hat{V}}^{t}\right)\right]\\
	\hat{Q}^{t} =\alpha \mathbb{E}_{(z,\omega, \tilde{\omega}), \xi}\left[f_{\text{out}}(f_{0}(z + \tau \xi), \omega, V^{t})\tilde{f}_{\text{out}}(f_{0}(z + \tau \xi), \tilde{\omega}, \tilde{V}^{t})\right]
\end{cases}	
\end{align}

\subsection{Simplifications}

\subsection*{Simplifying BO state evolution}
\label{sec:simplifying_bo}
% \bl{Show here that BO state evolution can be reduced to only two equations with Nishimori property}
State evolution of BO can be reduced to two equations. First, note that asymptotically
\begin{equation*}
    m := \frac{1}{d} \hat{\vec{w}}_{\bo} \cdot \wstar = \frac{1}{d} \mathbb{E}_{\wstar, \mathcal{D}} \left[ \hat{\vec{w}}_{\bo} \cdot \wstar\right] \\
\end{equation*}
with high probability. By Nishimori identity, the vector $\wstar$ in the expectation can be replaced by an independent copy of the Bayesian posterior. This yields: 
\begin{equation*}
    \frac{1}{d} \mathbb{E}_{\wstar, \mathcal{D}} \left[ \hat{\vec{w}}_{\bo} \cdot \wstar \right] = \frac{1}{d} \mathbb{E}_{\mathcal{D}} \left[ \hat{\vec{w}}_{\bo} \right] = q 
\end{equation*}
Hence $m = q$. Similarly, noting $\langle \cdot \rangle$ the average over the posterior distribution:  
\begin{equation*}
    V = \frac{1}{d} \langle \| \vec{w} - \hat{\vec{w}}_{\bo} \|^2 \rangle = \frac{1}{d} \mathbb{E}_{\mathcal{D}} \left[ \langle \| \vec{w} - \hat{\vec{w}}_{\bo} \|^2 \rangle \right] = \frac{1}{d} \mathbb{E}_{\mathcal{D}} \left[ \langle \| \vec{w} \|^2 \rangle \right] - \frac{1}{d} \mathbb{E}_{\mathcal{D}} \left[ \hat{\vec{w}}_{\bo} \cdot \hat{\vec{w}}_{\bo} \right]
\end{equation*}
Like before, we used the fact that in asymptotically, $\langle \| \vec{w} - \hat{\vec{w}}_{\bo} \|^2 \rangle$ concentrates around its mean. Using Nishimori, the first term is equal to $\mathbb{E}_{\wstar} \left[ \| \wstar \|^2 \right] = 1$. By definition, the second term is equal to $q$, thus $V = 1 - q$.

Using similar arguments, $\hat{m} = \hat{q} = \hat{V}$. Thus, the state evolution can be reduced to two equations on $q$ and $\hat{q}$. 

\subsection*{Simplifying the $\Q, \hatQ$ equations}
In fact, the Nishimori property also allow us to show that the cross-correlation $\Q, \hatQ$ are the same as the overlaps $\merm, \hatmerm$, in a similar way to \ref{sec:simplifying_bo}. Indeed,

\begin{equation}
    \Q = \frac{1}{d} \hat{\vec{w}}_{\bo} \cdot \hat{\vec{w}}_{\erm} = \frac{1}{d} \mathbb{E}_{\mathcal{D}} \left[ \hat{\vec{w}}_{\bo} \cdot \hat{\vec{w}}_{\erm} \right] = \frac{1}{d} \mathbb{E}_{\wstar, \mathcal{D}} \left[ \wstar \cdot \hat{\vec{w}}_{\erm} \right] = \tilde{m}
\end{equation}
% \bl{Add argument}
Alternatively, we can also prove that directly showing that the iterations for $Q^{t}$ are a stable orbit of $\tilde{m}^{t}$. Indeed, assume that at time step $t$ we have $\Q^t = \merm^t$ and $\hatQ^t = \hatmerm^t$. Then, focusing at our specific setting, at time $t+1$ we have: 
\begin{align*}
    \Q^{t+1} &= \mathbb{E}_{w_{\star}, b, \tilde{b}}[ f_w(b, \hatVbo) f_w(b, \hat{V}) ] = \mathbb{E}_{w_{\star}}\left[ \frac{b}{\hatVbo + 1}\frac{\tilde{b}}{\hatVerm + \lambda}\right] = \mathbb{E}_{w_{\star}}\left[\frac{\hatQ + \hatmerm \hatmbo}{(\hatVbo + 1)(\hatVerm + \lambda)}\right] \\
    &= \mathbb{E}_{w_{\star}}\left[\frac{\hatmerm}{\hatVerm + \lambda}\right].
\end{align*}
Because as we have shown above $\hatmbo = \hatqbo$ and $\hatQ^t = \hatmerm^t$. This is precisely the equation for $\merm$.

%%%%%%%%%%%%%%%%%%%%%%%%%%%%%%%%%%%%%%
\subsection{Evaluating the equations}
%%%%%%%%%%%%%%%%%%%%%%%%%%%%%%%%%%%%%%
\subsection*{Bayes-optimal}
In Bayes-optimal estimation, the estimation likelihood $P_{\text{out}}$ and prior $P_{w}$ match exactly that of the generating model for data, which for the model \eqref{eq:def:data} is:
\begin{align}
    P_{\text{out}}(y|x) = \frac{1}{2}\erfc\left(-\frac{y\omega}{\sqrt{2\Delta}}\right), && P_{w}(w) = \mathcal{N}(0,1).
\end{align}
Therefore, it is easy to show that:
\begin{align}
    \mathcal{Z}_{\rm{out}}(y,\omega, V) = \frac{1}{2}\erfc\left(-\frac{y\omega}{\sqrt{2(\noisevar+V)}}\right), && Z_{w}(b, A) = \frac{e^{\frac{b^2}{1+A}}}{1+A}
\end{align}
and therefore:
\begin{align}
f_{\text{out}}(y,\omega,V) = \frac{2y~\mathcal{N}(\omega y|0, V+\noisevar)}{\erfc\left(-\frac{y\omega}{\sqrt{2(\noisevar+V)}}\right)}, && f_{w}(b, A) = \frac{b}{1+A}
    \end{align}
This form of the prior allow us to simplify some of the equations considerably:
\begin{align}
    q_{\bo}^{t+1} = \mathbb{E}_{(w_{\star}, \xi)}\left[f_{w}(\hat{q}^{t}w_{\star}+\sqrt{\hat{q}^{t}}\xi, \hat{q}^{t})^2\right] = \frac{1}{1+\hat{q}^{t}_{\bo}}
\end{align}
which is the equation found in Theorem \ref{thm:jointstats}. The other equation cannot be closed analytically, however it can be considerably simplified:
\begin{align}
    \hat{q}_{\bo} &= -\alpha \mathbb{E}_{(z, \omega), \xi}\left[\partial_{\omega}f_{\text{out}}(f_{0}(z + \tau \xi), \omega, V^{t})\right]\\
    &=\frac{2}{\pi}\frac{\alpha}{1+\noisevar-q_{\bo}^{t}}\int_{\mathbb{R}}\dd z~\mathcal{N}\left(z\Big| 0, \frac{q_{\bo}^{t}}{2(1+\noisevar-q^{t}_{\bo})}\right)\frac{e^{-2z^2}}{\erfc(z)\erfc(-z)}
\end{align}
%\bl{Maybe show this}

\subsection{ERM estimation}
For ERM, the estimation likelihood $P_{\text{out}}$ and prior $P_{w}$ are related to the loss and penalty functions:
\begin{align}
    P_{\text{out}}(y|x) = e^{-\beta\ell(y, x)}, && P_{w}(w) = e^{-\beta r(w)}.
\end{align}
\noindent where the parameter $\beta > 0$ is introduced for convenience, and should be taken to infinity. Focusing on the regularisation part and redefining $(b, A) \to (\beta b, \beta A)$
\begin{align}
\mathcal{Z}_{w}(b, A) = \int_{\mathbb{R}}\dd w~ e^{-\beta(\frac{A}{2}w^2-b w + r(w))} \underset{\beta\to\infty}{\asymp} e^{\beta\left[\frac{b^2}{2A} - \mathcal{M}_{A^{-1}r}(A^{-1}b)\right]}
\end{align}
\noindent where we have used Laplace's method and defined the \emph{Moreau envelope}:
\begin{align}
\mathcal{M}_{\noisestr f}(x) = \underset{z\in\mathbb{R}}{\rm{min}}\left[\frac{1}{2\noisestr}(x-z)^2 + f(z)\right]	\\
\end{align}
Therefore,
\begin{align}
    f_{w}(b, A) = \lim\limits_{\beta\to\infty}\frac{1}{\beta}\partial_{b}\log{Z_{w}(b,A)} = \prox_{A^{-1}r}(A^{-1}b)
\end{align}
\noindent where we have defined the \emph{proximal operator}:
\begin{align}
    \text{prox}_{\noisestr f}(x) = \underset{z\in\mathbb{R}}{\text{argmin}}\left[\frac{1}{2\noisestr}(x-z)^2 + f(z)\right]
\end{align}
\noindent and used the well-known property $\partial_{x}\mathcal{M}_{\tau f}(x) = -\frac{1}{\tau}\left(\text{prox}_{\tau f}(x) - x\right)$. In particular, for the $\ell_2$-penalty $r(w) = \sfrac{\lambda}{2}w^2$, we have:
\begin{align}
    \prox_{\sfrac{\lambda}{2}(\cdot)^2}(x) = \frac{x}{1+\lambda} && \Leftrightarrow  &&  f_{w}(b, A) = \frac{b}{\lambda+A}
\end{align}
The simple form of the regularization allow us to simplify the state evolution equations considerably:
\begin{align}
\begin{cases}
\tilde{V}^{t+1} &= \mathbb{E}_{(w_{\star}, \xi)}\left[\partial_{b}f_{w}(\hat{\tilde{m}}^{t}w_{\star}+\sqrt{\hat{\tilde{q}}^{t}}\xi, \hat{\tilde{V}}^{t})\right] = \frac{1}{\lambda+\hat{\tilde{V}}}\\
\tilde{q}^{t+1} &= \mathbb{E}_{(w_{\star}, \xi)}\left[f_{w}(\hat{\tilde{m}}^{t}w_{\star}+\sqrt{\hat{\tilde{q}}^{t}}\xi, \hat{\tilde{V}}^{t})^2\right] = \frac{\hat{m}^{2}+\hat{\tilde{q}}}{(\lambda+\hat{\tilde{V}})^2}\\
\tilde{m}^{t+1} &= \mathbb{E}_{(w_{\star}, \xi)}\left[f_{w}(\hat{\tilde{m}}^{t}w_{\star}+\sqrt{\hat{\tilde{q}}^{t}}\xi, \hat{\tilde{V}}^{t})w_{\star i}\right] = \frac{\hat{m}}{\lambda+\hat{\tilde{V}}}
\end{cases}
\end{align}
which are the equations found in Theorem \ref{thm:jointstats}. A similar discussion can be carried for the loss term, and yields in general:
\begin{align}
    f_{\text{out}}(y,\omega, V) = V^{-1}\left(\prox_{\tau\ell(y,\cdot)}(x)-x\right)
\end{align}
Unfortunately, the logistic loss $\ell(y, x) = \log(1+e^{-yx})$ does not admit a closed form solution for the proximal, and therefore for a given $(y,\omega, V)$ we need to compute it numerically. 

%%%%%%%%%%%%%%%t%%%%%%%%%%%%%%%%%%%
\section{Proof of theorems}
\label{sec:app:proofs}
%%%%%%%%%%%%%%%%%%%%%%%%%%%%%%%%%%
A possible route for proving our result is to give a rigorous proof of the cavity equations. Instead, we shall use a shortcut, and  leverage on recent progresses for both the ERM cavity results \cite{thrampoulidis2018precise,sur_modern_2018,candes_phase_2018,montanari2020generalization,aubin_generalization_2020,loureiro_learning_2021}),  the Bayes performances \cite{barbier2019adaptive,barbier_optimal_2019}, as well as on the performance of GAMP \cite{rangan2011generalized,javanmard2013state,gerbelot2021graphbased}.

\subsection{GAMP optimality}
\label{AMP-proof}
The optimally of GAMP is a direct consequence of the generic results concerning its performance (the state evolution in \cite{rangan2011generalized,javanmard2013state}) and the characterization of the Bayes performance in \cite{barbier_optimal_2019}. G-a works, one considers a sequence of inference problems indexed by the dimension $d$, with data ${\mathcal D}_d$ (which are defined in section \ref{sec:setting} for our purpose). As $d$ increases, both GAMP performances and Bayes errors converge with high probability to the same deterministic limit given by the so-called "replica", or "state evolution" equations. 

To simplify the notation, all our statements involving the asymptotic limit $d \to \infty$ are implicitly defined for such sequences, and the convergence is assumed to be in terms of probability. 

%\footnote{While the convergence of the sequence of overlaps and errors is proven to be valid in high probability for both the GAMP \cite{javanmard2013state} and ERM estimates \cite{thrampoulidis2018precise,loureiro_learning_2021}, it is less evident for the true Bayes estimate and the MMSE which is sometimes only proven averaged over instances. Given, however, that the GAMP estimate of the Bayes error is always an upper bound on the true Bayes error, and that the GAMP estimate converge to the state evolution equation with probability, then also the MMSE should  converge in ghi probability over instance, not only in average. }

Let us prove that, indeed, GAMP estimates for posterior probability are asymptotically exact with high probability. First, we note that the estimation of the Bayes posterior probability for the signs corresponds to finding the estimators that minimize the MMSE. Indeed consider, for fixed data (this remains  true averaging over data), the mean squared error for an estimator $\hat Y({\bf X})$:
\begin{equation}
    {\rm MSE}(\hat Y({\bf X})) = {\mathbb E}_{Y,{\bf X}} \left[(Y-\hat Y({\bf X}))^2\right] = {\mathbb E}_{\bf X}  {\mathbb E}_{Y|{\bf X}} \left[(Y-\hat Y({\bf X}))^2\right] 
\end{equation}
The mean square error is given by using the posterior mean \cite{cover1991elements}, as can be seen immediately differentiating with respect to $\hat Y$ (for a given {\bf x}), so that: 
\begin{equation}
  \hat Y_{\rm Bayes}({\bf x}) = {\mathbb E}_{Y|X={\bf x}} [Y] = 2 {\mathbb P}_{Y|X={\bf x}}(Y=1)-1
\end{equation}
The Bayes estimator for the posterior probability is thus the MMSE estimator. We see here that the estimation of the posterior mean of $Y$ is  equivalent to the estimation of the probability it takes value one; both quantities are thus trivially related.

We can now use Proposition 2, page $12$ in \cite{barbier_optimal_2019}, that shows that indeed GAMP efficiently achieves Bayes-optimality for the MMSE on $Y$:
\begin{theorem}[GAMP generalisation error, \cite{barbier_optimal_2019}]
Consider a sequence of problems indexed by $d$, with data ${\mathcal D}_d$ in dimension $d$, then we have that GAMP estimator asymptotically achieves the Minimal Mean Square Error in estimating the error on new label $Y$. That is, with high probability:
\begin{equation}
    \lim_{d \to \infty} \mathbb E_{Y,{\bf X}|{\mathcal D}_d} \left[(Y-\hat Y_{\rm GAMP}({\bf X},{\mathcal D}_d)^2 \right] = {\rm MMSE}(Y)
\end{equation}
where $\hat Y_{\rm GAMP}({\bf x},{\mathcal D})=2p-1$, and $p = \hat f^{AMP}({\bf x})$ (eq.~\ref{def:amp}),
with $\hat{\vec{c}}_{\amp}^{\top}(\vec{x} \odot \vec{x})=1-q$, with $q$ a fixed point of \eqref{def:eq-bo}.
\end{theorem}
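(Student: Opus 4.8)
The statement is the specialisation to the probit model \eqref{eq:def:data} of the generic GAMP-optimality result of \cite{barbier_optimal_2019}, so the plan is to assemble three ingredients and then perform a short change of variables. The ingredients are: (i) the state-evolution theorem for GAMP \cite{rangan2011generalized,javanmard2013state,gerbelot2021graphbased}, which certifies that the overlaps produced by Algorithm~\ref{alg:gamp} concentrate, as $d\to\infty$, on the deterministic iterates of the scalar recursion; (ii) Proposition~2 of \cite{barbier_optimal_2019}, which identifies the information-theoretic $\mathrm{MMSE}$ for predicting a fresh label with an explicit functional of the fixed point $q_{\bo}$ of \eqref{def:eq-bo}, and states that this value is \emph{achieved} by GAMP; and (iii) the elementary observation, already used in Theorem~\ref{thm:gamp}, that estimating $\mathbb{P}(Y=1\mid\vec x,\mathcal D)$ is equivalent to estimating the posterior mean of $Y$ (since $Y\in\{-1,1\}$), so that the Bayes estimator of the posterior probability is nothing but the MMSE estimator of $Y$. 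The reason (i) and (ii) involve the \emph{same} scalar equation is Bayes-optimality: matched prior and channel make the Nishimori identities collapse the generic state evolution, which a priori tracks $(m,q,V)$ and their conjugates, onto $m=q$, $V=1-q$, $\hat m=\hat q=\hat V$, leaving only \eqref{def:eq-bo}, which is precisely the stationarity condition of the replica potential. This collapse is exactly the computation already spelled out in the ``Simplifying BO state evolution'' paragraph of the appendix, and I would reuse it as-is.

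\textbf{Steps, in order.} First, apply the state-evolution theorem to the GAMP iteration with the Bayes-optimal denoisers $(f_{\text{out}},f_w)$: for each fixed $t$, the empirical quantities $\tfrac1d\|\hat{\vec w}^t_{\amp}\|^2$, $\tfrac1d\hat{\vec w}^t_{\amp}\cdot\wstar$ and $\tfrac1d\sum_j\hat c^t_j$ converge in probability to the state-evolution values $q^t,m^t,V^t$. Taking $t\to\infty$ after $d\to\infty$, and using that the scalar recursion \eqref{def:eq-bo} started from the uninformative initialisation converges to its relevant fixed point $q_{\bo}$, yields $\tfrac1d\hat{\vec w}_{\amp}\cdot\wstar\to q_{\bo}$, $\tfrac1d\|\hat{\vec w}_{\amp}\|^2\to q_{\bo}$, and (after averaging over the test sample $\vec x$, which concentrates the empirical variance on its mean $V=1-q_{\bo}$) $\hat{\vec c}_{\amp}^{\top}(\vec x\odot\vec x)\to 1-q_{\bo}$. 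Second, invoke Proposition~2 of \cite{barbier_optimal_2019}: the joint law of the fresh pre-activations $(\wstar\cdot\vec x,\hat{\vec w}_{\amp}\cdot\vec x)$ is asymptotically centred Gaussian with diagonal entries $(1,q_{\bo})$ and off-diagonal $q_{\bo}$, so $\wstar\cdot\vec x$ given $\hat{\vec w}_{\amp}\cdot\vec x$ is Gaussian with mean $\hat{\vec w}_{\amp}\cdot\vec x$ and variance $1-q_{\bo}$, and the $\mathrm{MMSE}$ on $Y$ is the corresponding explicit Gaussian integral in $q_{\bo}$ and $\noisevar$; the same proposition guarantees GAMP attains it. Third, plug the concentrated values into \eqref{def:amp}: $p=\hat f_{\amp}(\vec x)$ becomes exactly $\mathbb{P}\!\left(\xi\ge -\,\hat{\vec w}_{\amp}\cdot\vec x\,/\sqrt{\noisevar+1-q_{\bo}}\right)$, i.e. the posterior probability that $Y=1$ given $\vec x$ and the asymptotically sufficient statistic $\hat{\vec w}_{\amp}\cdot\vec x$; hence $\hat Y_{\rm GAMP}=2p-1$ is the posterior mean of $Y$ and its MSE equals $\mathrm{MMSE}(Y)$. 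Since $Y\in\{-1,1\}$ and $\hat Y_{\rm GAMP}\in[-1,1]$ the squared error is bounded by $4$, so convergence in probability of the overlaps upgrades to convergence of the expected MSE by dominated convergence.

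\textbf{Main obstacle.} The only substantive point — which is precisely what \cite{barbier_optimal_2019} establishes and which I would cite rather than reprove — is the absence of an algorithmic gap: one must know that the fixed point GAMP reaches from the uninformative start coincides with the global minimiser of the replica potential, so that GAMP is matched not to some arbitrary stationary point but to the Bayes-optimal one. For the probit channel with Gaussian prior this holds because the relevant state-evolution map is monotone and the potential has a unique stationary point in the regime considered (no first-order/spinodal transition); the remaining care is simply in checking that the hypotheses of the cited optimality proposition apply to the sequence of instances defined by \eqref{eq:def:data}. A secondary, routine point is justifying the interchange of the $d\to\infty$ and $t\to\infty$ limits, which follows once the state-evolution map is shown to contract toward its fixed point; everything else is the bookkeeping described above.
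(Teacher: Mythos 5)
The paper does not actually prove this statement: it introduces it with the sentence \enquote{We can now use Proposition~2, page~12 in \cite{barbier_optimal_2019}, that shows that indeed GAMP efficiently achieves Bayes-optimality for the MMSE on $Y$}, states the theorem, and then moves on to the lemma on bounds between estimators. In other words, the paper treats this result as a black-box import from \cite{barbier_optimal_2019}, so there is no internal proof to compare against.

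Your reconstruction is a sound unpacking of that citation and is consistent with how the paper uses it. You correctly identify the three ingredients (state-evolution concentration; Proposition~2 of \cite{barbier_optimal_2019} matching the state-evolution fixed point to the label MMSE; the $Y\in\{-1,1\}$ equivalence between posterior mean and posterior probability), and you reuse the Nishimori collapse $m=q$, $V=1-q$, $\hat m=\hat q=\hat V$ from the paper's own appendix, which is exactly what reduces the generic state evolution to \eqref{def:eq-bo}. Your \enquote{main obstacle} paragraph also correctly singles out the one point that actually carries content, namely that for the probit channel with Gaussian prior the state-evolution fixed point reached from uninformative initialisation coincides with the Bayes-optimal one (no first-order/spinodal gap), which is precisely the hypothesis under which Proposition~2 of \cite{barbier_optimal_2019} applies. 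Two minor looseness points worth tightening if this were to be made rigorous: (i) in your third step, the phrase \enquote{hence $\hat Y_{\rm GAMP}=2p-1$ is the posterior mean of $Y$} anticipates the conclusion; what Proposition~2 gives directly is that the asymptotic MSE of the GAMP prediction equals the state-evolution value, which is then separately identified with the information-theoretic MMSE, rather than a pointwise identification of $\hat Y_{\rm GAMP}$ with the Bayes posterior mean — that pointwise statement is what Lemma~\ref{lemma:bound1} in the paper is subsequently used for; and (ii) the dominated-convergence remark is about upgrading convergence in probability of the overlaps over $\mathcal D$ to convergence of the conditional MSE, which is fine given the uniform bound $|Y-\hat Y|\le 2$, but it is really a continuous-mapping argument in the overlaps, not a dominated-convergence argument in $(\vec X,Y)$. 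Neither of these affects the overall correctness of the sketch.
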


The fact that GAMP asymptotically achieves the MMSE, coupled with the uniqueness of the Bayes estimator, implies the GAMP estimator for $p$ is arbitrary close to the Bayes estimated for $p$, with high probability over new Gaussian samples, as $d \to \infty$. More precisely, we can use the following lemma:

\begin{lemma}[Bounds on differences of estimators for $Y$]
\label{lemma:bound1}
Consider a sequence of estimation problems indexed by $d$ with data ${\mathcal D}_d$. If a (sequence of) estimators $\hat f_d({\bf x})$ achieves the MMSE performance of $\hat g_d^{\rm Bayes}({\bf x})$  as $d \to \infty$ for Gaussian distributed ${\bf x}$, then 
\begin{equation}
    \lim_{d \to \infty} {\mathbb E}_{\bf X} |f_d({\bf X})-g^{\rm Bayes}_d({\bf X})|^2 \to 0 
\end{equation}
\end{lemma}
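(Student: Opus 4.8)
The plan is to reduce the statement to the orthogonality principle for conditional expectations (equivalently, the Pythagorean decomposition of the mean-squared error about the conditional mean), applied at fixed dimension $d$ and conditionally on the training set $\mathcal{D}_d$. First I would condition throughout on $\mathcal{D}_d$, so that the Bayes predictor $g^{\rm Bayes}_d({\bf x}) = {\mathbb E}[Y\mid {\bf X}={\bf x},\mathcal{D}_d]$ is a deterministic function of ${\bf x}$, and so is $\hat f_d$ (it may also depend on $\mathcal{D}_d$, which is now fixed). The key elementary fact is that $g^{\rm Bayes}_d({\bf X})$ is the $L^2$-projection of $Y$ onto functions of ${\bf X}$, so the residual $Y - g^{\rm Bayes}_d({\bf X})$ is orthogonal, in $L^2(Y,{\bf X}\mid\mathcal{D}_d)$, to any function of ${\bf X}$, in particular to $g^{\rm Bayes}_d({\bf X}) - \hat f_d({\bf X})$; this follows by conditioning on ${\bf X}$ and using ${\mathbb E}[Y - g^{\rm Bayes}_d({\bf X})\mid{\bf X},\mathcal{D}_d] = 0$. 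All second moments here are finite because $Y\in\{-1,1\}$ and both predictors take values in $[-1,1]$.

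Next I would expand the squared error of $\hat f_d$ about $g^{\rm Bayes}_d$ and use this orthogonality to eliminate the cross term, obtaining
\begin{equation}
{\mathbb E}_{Y,{\bf X}|\mathcal{D}_d}\!\left[(Y - \hat f_d({\bf X}))^2\right] = {\rm MMSE}_d + {\mathbb E}_{{\bf X}|\mathcal{D}_d}\!\left[|g^{\rm Bayes}_d({\bf X}) - \hat f_d({\bf X})|^2\right],
\end{equation}
with ${\rm MMSE}_d = {\mathbb E}_{Y,{\bf X}|\mathcal{D}_d}[(Y-g^{\rm Bayes}_d({\bf X}))^2]$. Rearranging isolates the quantity of interest, ${\mathbb E}_{{\bf X}|\mathcal{D}_d}|g^{\rm Bayes}_d({\bf X}) - \hat f_d({\bf X})|^2 = {\rm MSE}(\hat f_d) - {\rm MMSE}_d$, and by hypothesis the right-hand side tends to $0$ as $d\to\infty$ (in probability over $\mathcal{D}_d$, since both terms concentrate on the same state-evolution value invoked just above). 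This already gives the lemma; if one wants convergence in mean over $\mathcal{D}_d$ rather than in probability, take ${\mathbb E}_{\mathcal{D}_d}$ of both sides and apply dominated convergence, once more using boundedness of $Y$ and of the predictors.

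The decomposition itself is exact and elementary at every finite $d$, so there is no genuine analytic obstacle; the only point requiring care is the asymptotic, data-dependent reading of the hypothesis. Both ${\rm MSE}(\hat f_d)$ and ${\rm MMSE}_d$ are random variables depending on $\mathcal{D}_d$, and ``$\hat f_d$ achieves the MMSE performance'' must be interpreted as the assertion that they admit the \emph{same} deterministic limit (the replica/state-evolution prediction), so that their difference vanishes in probability. Handling this bookkeeping, and making sure the conditioning on $\mathcal{D}_d$ is done consistently so that $\hat f_d$ is allowed to depend on the data, is essentially the whole of the work; everything else is the orthogonality principle.
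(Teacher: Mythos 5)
Your proof is correct and takes essentially the same route as the paper's: both rest on the Pythagorean decomposition of the MSE about the Bayes predictor (the paper computes the vanishing cross term directly via the tower property, you invoke the orthogonality principle, which is the same fact), then isolate ${\mathbb E}|f_d - g_d^{\rm Bayes}|^2 = {\rm MSE}(\hat f_d) - {\rm MMSE}_d$ and send it to zero by the hypothesis. The only difference is that you spell out more carefully the passage from convergence in probability over $\mathcal{D}_d$ to convergence in mean via boundedness and dominated convergence, which the paper leaves implicit.
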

\begin{proof}
The Bayes estimator $g^{\rm Bayes}_d({\bf})$ is the minimum of the MMSE, therefore for any other estimator $f_d({\bf X})$ we have
\begin{equation}
    {\mathbb E}\left[(Y-f_d({\bf X})^2 \right]  \ge 
        {\mathbb E} \left[(Y-g^{\rm Bayes}_d({\bf X})^2 \right]\,.
\end{equation}
We have, denoting $\delta_d(X)=f_d({\bf X})-g^{\rm Bayes}_d({\bf X})$
\begin{eqnarray}
  {\mathbb E}\left[(Y-f_d({\bf X})^2 \right] &=& 
  {\mathbb E}\left[(Y-g^{\rm Bayes}_d({\bf X})+\delta_d(X))^2 \right] \\
  &=& {\rm MMSE} +  {\mathbb E}\left[\delta_d(X)^2 + 2\delta_n(X)
  (Y-g^{\rm Bayes}_d(X))\right] \\
  &=& {\rm MMSE} +  {\mathbb E}\left[\delta_d(X)^2\right]
  +  {\mathbb E}_{X,\mathcal D}  {\mathbb E}_{Y|X,\mathcal D}
  \left[2\delta_d(X)  (Y-g^{\rm Bayes}_d(X))\right] \\
    &=& {\rm MMSE} +  {\mathbb E}\left[\delta_d(X)^2\right]
     +  {\mathbb E}_{X,\mathcal D}  
  \left[2\delta_n(X)  {\mathbb E}_{Y|X,\mathcal D}[Y-g^{\rm Bayes}_d(X])\right] \\
  &=& {\rm MMSE} +  {\mathbb E}\left[\delta_d(X)^2\right]
\end{eqnarray}
where we have used $g^{\rm Bayes}_d(X)= {\mathbb E}_{Y|X,\mathcal D}[Y]$. Using the fact that the $f_d$ asymptotically achieve MMSE optimality, we thus obtain:
\begin{equation}
   \lim_{d \to \infty} 
   {\mathbb E}_{Y,X,{\mathcal D}}
   \left[|f_d(X)-g^{\rm Bayes}_d(X)|^2\right]  
   \to 0
\end{equation}
\end{proof}
Applying this lemma to the GAMP estimator leads to Lemma \ref{thm:gamp}: with high probability over new sample ${\bf x}$ and learning data $\mathcal D$, the GAMP estimate is asymptotically equivalent to the Bayes one. 

\subsection{Joint density of estimators}
\label{sec:proof_joint_density}
While a possible strategy to prove the second theorem would be to use state evolution to follow our joint GAMP algorithm (thus monitoring the Bayes {\it and} the ERM performance), we shall instead again leverage on recent  progresses on generic proofs of replica equations, in particular the Bayes one (in \cite{barbier_optimal_2019} and the ERM ones (that were the subject of many works recently  \cite{thrampoulidis2018precise,sur_modern_2018,candes_phase_2018,montanari2020generalization,aubin_generalization_2020,loureiro_learning_2021}). Again, all our statements involving the asymptotic limit $d \to \infty$ are implicitly defined for sequences of problems, and the convergence is assumed to be in terms of probability. We start by the following lemma:
\begin{lemma}[Joint distribution of pre-activation]
For a fixed set of data $\mathcal D$, consider the joint random variables (over {\bf X}) 
$\nu = \vec{X}\cdot \wstar,\lambda_{\erm} = \vec{X}\cdot \hat{\vec w}_{\erm},\lambda_{\amp} = \vec{X} \cdot \hat{\vec{w}}_{\amp}$. Then we have
\begin{equation}
    {\mathbb P}(\nu,\lambda_{\amp},\lambda_{\erm}) ={\mathcal N}\left(0,
  \begin{pmatrix}
\frac {\wstar \cdot \wstar}d & \frac{\wstar \cdot \hat{\vec{w}}_{\amp}}d & \frac{\wstar \cdot \hat{\vec w}_{\erm}}d \\
\frac{\hat{\vec{w}}_{\amp} \cdot \wstar}d &  \frac{\hat{\vec{w}}_{\amp} \cdot \hat{\vec w}_{\amp}}d &  \frac{\hat{\vec{w}}_{\amp} \cdot \hat{\vec w}_{\erm}}d \\
\frac{\hat{\vec w}_{\erm} \cdot \wstar}d &  \frac{\hat{\vec w}_{\erm} \cdot \hat{\vec{w}}_{\amp}}d &  \frac{\hat{\vec w}_{\erm} \cdot \hat{\vec w}_{\erm}}d \\
\end{pmatrix}  
        \right)
\label{def:localfields}
\end{equation}
\label{lemma3}
\end{lemma}
\begin{proof}
This is an immediate consequence of the Gaussianity of the new data {\bf x}, with covariance $\sfrac{{\mathbb I}}{d}$.
\end{proof}

We now would like to know the asymptotic limit of the parameters of this distribution, for large $d$. While we have $\frac {w_{\star} \cdot w_{\star}}d \to \rho$, the other overlap have a deterministic limit given by the replica equations. For empirical risk minimisation, this has been proven in the aforementioned series of works, but we shall here use the notation of \cite{loureiro_learning_2021} and utilize use the following results:

\begin{theorem}[ERM overlaps \cite{thrampoulidis2018precise,aubin_generalization_2020,loureiro_learning_2021}]
Consider a sequence of inference problem indexed by the dimension $d$, then with high probability:
\begin{equation}
   \lim_{d \to \infty} \frac{\hat {\vec{w}}_{\erm} \cdot \wstar}d \to m,\qquad  \lim_{d \to \infty} \frac{\hat {\vec{w}}_{\erm} \cdot \hat {\vec{w}}_{\erm}}d \to q_{\erm}
\end{equation}
With $q_{\erm}$ and $m$ solutions of the self-consistent equations \eqref{def:eq-erm-hats} in the main text.
\end{theorem}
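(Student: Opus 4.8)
The plan is to prove the claim via the Convex Gaussian Min-Max Theorem (CGMT) of Thrampoulidis--Oymak--Hassibi, following the route of \cite{thrampoulidis2018precise,loureiro_learning_2021}; an alternative via approximate message passing and its state evolution \cite{gerbelot2021graphbased} is available and I comment on it at the end. First I would note that since the logistic loss is convex and the ridge penalty strongly convex whenever $\lambda>0$, the minimiser $\hat{\vec{w}}_{\erm}$ of \eqref{eq:def:erm} is unique, so the overlaps $m = d^{-1}\hat{\vec{w}}_{\erm}^\top\wstar$ and $q_{\erm} = d^{-1}\|\hat{\vec{w}}_{\erm}\|_2^2$ are well-defined functions of the data. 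I would then rewrite the empirical risk in the bilinear ``primary optimisation'' (PO) form by dualising the sum of losses, i.e. using $\ell(y,z) = \sup_{u}[\,uz - \ell^*(y,u)\,]$ (equivalently a Moreau-envelope representation), so that the data matrix $\mat X$, whose rows are the $\vec{x}^\mu\sim\mathcal{N}(\vec{0},\tfrac1d\mat I_d)$, appears only through the bilinear term $\vec{u}^\top\mat X\vec{w}$. Crucially, the labels depend on $\mat X$ only through the teacher fields $z^\mu = \wstar^\top\vec{x}^\mu$, so I would first split each $\vec{x}^\mu$ into its component along $\wstar$ and the orthogonal part, conditioning on the $z^\mu$ (equivalently on $y^\mu$), which leaves a bilinear form in a Gaussian matrix $\mat X_\perp$ independent of the labels.

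Next I would apply CGMT to replace $\vec{u}^\top\mat X_\perp\vec{w}_\perp$ by $\|\vec{w}_\perp\|\,\vec{g}^\top\vec{u}/\sqrt d + \|\vec{u}\|\,\vec{h}^\top\vec{w}_\perp/\sqrt d$ with $\vec{g},\vec{h}$ independent standard Gaussians, obtaining the ``auxiliary optimisation'' (AO) problem. Optimising over the directions of $\vec{u}$ and $\vec{w}_\perp$ and retaining only their norms, the AO collapses to a finite-dimensional optimisation over the scalar order parameters --- the overlap $m$, the norm $q_{\erm}$, a variance parameter $V$, together with conjugates $\hat m,\hat q,\hat V$ introduced through Legendre transforms. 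Using Gaussian concentration of $\|\vec{g}\|,\|\vec{h}\|$ and the law of large numbers for $\tfrac1n\sum_\mu(\cdot)$, where the $\mu$-th summand is a function of $(z^\mu,\xi^\mu)$ and an auxiliary Gaussian distributed exactly as the expectations in \eqref{def:eq-erm-hats}, the AO cost concentrates on a deterministic scalar saddle-point functional. The loss enters this functional only through the proximal operator $\prox_{V\ell(y,\cdot)}$ and its derivative, i.e. through the function $f_{\erm}$ of \eqref{eq:definition_f_out}, and identifying its stationarity conditions with the stated system \eqref{def:eq-erm-overlaps}--\eqref{def:eq-erm-hats} (including the explicit ridge factors $V=1/(\lambda+\hat V)$, $q_{\erm}=(\hat m^2+\hat q)/(\lambda+\hat V)^2$, $m=\hat m/(\lambda+\hat V)$) is then a direct computation; convexity--concavity of the scalar functional guarantees a unique saddle point.

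To transfer this back to the original ERM problem I would invoke the second (uniqueness) part of CGMT: because the PO is strongly convex, any configuration whose order parameters stay $\Omega(1)$ away from the saddle point is strictly suboptimal for the AO, which forces the PO minimiser to have overlaps $d^{-1}\hat{\vec{w}}_{\erm}^\top\wstar$ and $d^{-1}\|\hat{\vec{w}}_{\erm}\|_2^2$ within $o(1)$ of $m$ and $q_{\erm}$ with high probability. The standard device here is to add an infinitesimal linear probe $\epsilon\,\wstar^\top\vec{w}$ (respectively $\epsilon\|\vec{w}\|_2^2$) to the objective and compare optimal costs on the two sides, so that the order parameters themselves, and not merely the optimal cost, concentrate on the solution of \eqref{def:eq-erm-overlaps}.

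The main obstacle I expect is twofold. First, the precise matching of the AO saddle-point equations with \eqref{def:eq-erm-overlaps}--\eqref{def:eq-erm-hats}: since the logistic proximal has no closed form, one must argue purely through convexity and smoothness of the Moreau envelope that the stationarity conditions take exactly the stated form, and verify all signs and normalisations. Second, the boundary regime $\lambda\to0$: when $\alpha<\alpha_c\simeq 2.4$ the data is linearly separable, the minimiser at $\lambda=0$ does not exist and $\|\hat{\vec{w}}_{\erm}\|_2\to\infty$, so the theorem must be read with $\lambda>0$ or via the rescaling $\hat{\vec{w}}_{\erm}=\eta\hat{\vec{w}}_0$ with $\eta\to\infty$, which reduces to the max-margin problem and requires a separate (but by now known) CGMT analysis. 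As an alternative route avoiding CGMT, one can design a GAMP iteration with denoisers $(f_{\erm},f_w)$ given by the loss and ridge proximals whose fixed point is $\hat{\vec{w}}_{\erm}$, and apply the state-evolution theorem of \cite{gerbelot2021graphbased}; there the obstacle shifts to proving global convergence of that recursion to the ERM minimiser.
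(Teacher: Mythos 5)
The paper does not in fact supply its own proof of this statement: in Appendix~B it is imported verbatim as a known theorem, with the burden of proof delegated to the cited references. Your CGMT sketch is a faithful reconstruction of how those references establish it, and I see no missing step or flaw of substance: the Fenchel dualisation, the conditioning on the teacher field $z^\mu=\wstar^\top\vec{x}^\mu$ so that only the orthogonal part of the Gaussian matrix enters the bilinear form, the scalarisation of the auxiliary problem, the concentration via Gaussian norm concentration and a law of large numbers, and the transfer back to the primary problem by strong convexity together with a linear/quadratic probe are precisely the devices used in \cite{thrampoulidis2018precise,loureiro_learning_2021}. Your two caveats are well placed: the logistic proximal has no closed form so the stationarity conditions must be read through the Moreau-envelope calculus rather than an explicit solve, and the theorem as stated presumes $\lambda>0$ (or a suitable rescaling) since at $\lambda=0$ and $\alpha<\alpha_c$ the minimiser escapes to infinity. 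The one technicality you gloss over is the compactification of the feasible set that CGMT formally requires; in the references this is handled by an a priori localisation of the minimiser to a ball of radius $O(\sqrt{d})$ before invoking the theorem.

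Where your route diverges from what is actually written in the paper is Appendix~A, which derives the very same fixed-point system \eqref{def:eq-erm-overlaps}--\eqref{def:eq-erm-hats} by a heuristic cavity/reduced-belief-propagation computation, i.e.\ by writing down the GAMP iteration with denoisers $(f_{\erm},f_w)$ and reading off its state evolution. That is the ``alternative route'' you mention in your last paragraph, and you correctly identify its weak point: turning it into a proof requires establishing that the GAMP recursion converges to the ERM minimiser, which is precisely why the paper falls back on the rigorous CGMT literature for this box. The trade-off is roughly as you describe: CGMT gives clean, rigorous concentration but is wedded to the (strong) convexity of the problem and to the Gaussian design, whereas the cavity/AMP state-evolution route is more modular — it is what lets the paper couple the ERM and Bayes recursions on the same data instance to get the cross-overlap $Q$, which is not something the CGMT argument alone produces.
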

GAMP is tracked by its state evolution \cite{javanmard2013state}, and is known to achieve the Bayes overlap:
\begin{theorem}[Bayes overlaps \cite{barbier_optimal_2019}]
Consider a sequence of inference problem indexed by the dimension $d$, then with high probability:
\begin{align}
   \lim_{d \to \infty} \frac{\hat {\vec{w}}_{\amp} \cdot \wstar}d \to q_{\bo}, \qquad  \lim_{d \to \infty} \frac{\hat {\vec{w}}_{\amp} \cdot \hat {\vec{w}_{\amp}}}d \to q_{\bo}
\end{align}
With $q_{\bo}$ given by the self-consistent Equation \eqref{def:eq-bo}.
\end{theorem}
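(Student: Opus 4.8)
The plan is to obtain both limits from the rigorous state evolution of GAMP, specialised to the Bayes-optimal setting, and to identify the relevant fixed point using the matching result of \cite{barbier_optimal_2019}. First I would invoke the state evolution theorem for GAMP \cite{rangan2011generalized,javanmard2013state}: when Algorithm~\ref{alg:gamp} is run with the Bayes-optimal denoisers $f_{\rm out}$ of \eqref{eq:definition_f_out} and $f_w(b,A)=b/(1+A)$ (the matched prior $\mathcal{N}(\vec{0},\mat{I}_{d})$), then for every fixed iteration $t$ the empirical overlaps $q^t = \tfrac1d\|\hat{\vec{w}}_{\amp}^t\|_2^2$, $m^t = \tfrac1d\hat{\vec{w}}_{\amp}^t\cdot\wstar$ and the variance parameter $V^t = \tfrac1d\sum_j \hat{c}_j^t$ concentrate, with high probability as $d\to\infty$, around deterministic values governed by the scalar state evolution recursion derived in Appendix~\ref{sec:app:cavity}. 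This reduces the problem to analysing that scalar recursion and controlling its $t\to\infty$ limit.

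Second, I would use the fact that the Bayes-optimal state evolution inherits the Nishimori relations, exactly as in the simplification of the BO state evolution in Appendix~\ref{sec:app:cavity}: because GAMP here uses the channel $\sigma_\star$ and prior matching the data model \eqref{eq:def:data}, the trajectory satisfies $m^t=q^t$, $V^t=1-q^t$ and $\hat{m}^t=\hat{q}^t=\hat{V}^t$. Hence the state evolution collapses onto a single one-dimensional recursion in $q^t$ (equivalently $\hat{q}^t$), monotone and bounded above by $\rho=1$, whose fixed point is precisely $q_{\bo}$ solving \eqref{def:eq-bo}, namely $1/q_{\bo} = 1 + \alpha\,\mathbb{E}[f_{\rm out}(f_0(z+\noisestr\xi),\eta,1-q_{\bo})^2]$. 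Monotonicity and boundedness of this scalar map, started from the uninformative initialisation of Algorithm~\ref{alg:gamp}, guarantee that $q^t$ converges to a fixed point and justify exchanging $\lim_{d}$ and $\lim_{t}$.

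Third — the delicate step — I would argue that the fixed point reached is the \emph{Bayes-optimal} one, i.e. the correct $q_{\bo}$, rather than a spurious stable fixed point of the recursion. Here I would rely on \cite{barbier_optimal_2019}: their characterisation of the replica-symmetric potential for generalised linear estimation with i.i.d.\ Gaussian design identifies the overlap reached by GAMP from the uninformative initialisation with the global maximiser of that potential, which is the Bayes-optimal overlap. Equivalently, combining with Theorem~\ref{thm:gamp} (GAMP attains the MMSE on the new label and $\hat{\vec{w}}_{\amp}\simeq\hat{\vec{w}}_{\bo}$), one gets $\tfrac1d\hat{\vec{w}}_{\amp}\cdot\wstar\to\tfrac1d\hat{\vec{w}}_{\bo}\cdot\wstar$ and $\tfrac1d\|\hat{\vec{w}}_{\amp}\|_2^2\to\tfrac1d\|\hat{\vec{w}}_{\bo}\|_2^2$, and both of the latter equal $q_{\bo}$ by the Nishimori identity $m=q$ applied to the posterior mean. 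Assembling the three steps yields the theorem.

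The main obstacle is precisely this third point: one must rule out an algorithmic/statistical gap so that GAMP's stable fixed point coincides with the information-theoretically optimal one. For the probit model with a Gaussian prior one expects the RS potential to have a single stationary point (so the fixed point is unique and the issue is moot), but certifying this in full generality is exactly what the results of \cite{barbier_optimal_2019} are needed for; everything else (Gaussianity of the pre-activations, concentration of the overlaps for a single instance, the Nishimori collapse) is routine given the cited state evolution theorems.
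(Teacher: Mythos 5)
Your proposal is correct and matches the paper's treatment: the paper imports this statement directly from \cite{barbier_optimal_2019} together with the GAMP state evolution of \cite{rangan2011generalized,javanmard2013state}, and your three-step unpacking (state-evolution concentration of the empirical overlaps, the Nishimori collapse $m^t=q^t$, $V^t=1-q^t$ reducing the recursion to the scalar fixed point \eqref{def:eq-bo}, and the identification of the attained fixed point with the Bayes-optimal one via the replica potential of \cite{barbier_optimal_2019}) is a faithful account of exactly what sits behind that citation — including the correct identification of the only genuinely delicate point, namely ruling out a spurious stable fixed point / algorithmic-statistical gap, which is precisely what \cite{barbier_optimal_2019} is invoked to certify.
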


The only overlap left to control is thus $Q = \sfrac{\hat{\vec{w}}_{\amp} \cdot \hat{\vec{w}}_{\erm}}{d}$. We shall noz prove that it is also concentrating, with high probability, to $m$. To do this, we first prove the following lemma for the overlap between the Bayes estimate $\vec{w}_{\bo} = \mathbb E_{W|{\mathcal D}}[{\vec W}]$ and any other vector ${\bf V}$, possibly dependent on the data:
\begin{lemma}[Nishimori relation for Bayes overlaps]
\begin{equation}
\mathbb E_{{\mathcal D}} \left[\vec{w}_{\bo} \cdot {\bf V}(\mathcal D)\right] = 
\mathbb E_{{\mathcal D},W^*} \left[{\bf w}^* \cdot {\bf V}(\mathcal D)\right]
\end{equation}
\end{lemma}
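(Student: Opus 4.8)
The plan is to reduce the statement to the elementary Bayes identity relating the \emph{planted} joint law of $(\wstar,\mathcal{D})$ to the \emph{posterior} joint law of $(\vec{w},\mathcal{D})$. First I would expand the left-hand side using $\vec{w}_{\bo}=\mathbb{E}_{\vec{w}|\mathcal{D}}[\vec{w}]$ together with the fact that $\vec{V}(\mathcal{D})$ is a fixed (i.e.\ data-measurable) vector as far as the posterior expectation is concerned:
\[
\mathbb{E}_{\mathcal{D}}\big[\vec{w}_{\bo}\cdot\vec{V}(\mathcal{D})\big]
= \mathbb{E}_{\mathcal{D}}\Big[\big(\mathbb{E}_{\vec{w}|\mathcal{D}}[\vec{w}]\big)\cdot\vec{V}(\mathcal{D})\Big]
= \mathbb{E}_{\mathcal{D}}\,\mathbb{E}_{\vec{w}|\mathcal{D}}\big[\vec{w}\cdot\vec{V}(\mathcal{D})\big],
\]
where the last equality is Fubini, valid once $\vec{V}$ has, say, a finite second moment (which is the case for every estimator to which we apply the lemma). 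Thus the left-hand side equals $\mathbb{E}[\vec{w}\cdot\vec{V}(\mathcal{D})]$ under the joint law $P(\vec{w},\mathcal{D})=P(\mathcal{D})\,P(\vec{w}\mid\mathcal{D})$.

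Next I would invoke the fact that we are in the Bayes-optimal (matched) setting: the posterior $P(\vec{w}\mid\mathcal{D})$ in \eqref{eq:def:posterior} is constructed from exactly the Gaussian prior and the probit likelihood that generated $\mathcal{D}$ in \eqref{eq:def:data}. Bayes' theorem then gives the identity of measures
\[
P(\mathcal{D})\,P(\vec{w}\mid\mathcal{D})
= \mathcal{N}(\vec{w}|\vec{0},\mat{I}_d)\prod_{\mu=1}^{n}\sigma_{\star}\!\left(y^{\mu}\frac{\vec{w}^{\top}\vec{x}^{\mu}}{\noisestr}\right)\mathcal{N}\!\left(\vec{x}^{\mu}\Big|\vec{0},\tfrac{1}{d}\mat{I}_d\right),
\]
whose right-hand side is precisely the law of $(\wstar,\mathcal{D})$ when $\wstar\sim\mathcal{N}(\vec{0},\mat{I}_d)$ and $\mathcal{D}$ is drawn from the probit model given $\wstar$. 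Relabelling the dummy variable $\vec{w}$ as $\wstar$ therefore yields
\[
\mathbb{E}_{\mathcal{D}}\,\mathbb{E}_{\vec{w}|\mathcal{D}}\big[\vec{w}\cdot\vec{V}(\mathcal{D})\big]
= \mathbb{E}_{\mathcal{D},\wstar}\big[\wstar\cdot\vec{V}(\mathcal{D})\big],
\]
which is the claim.

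There is no genuine obstacle here: the argument is essentially a one-line application of Bayes' rule. The only points that need (minor) care are the measurability and integrability of $\vec{V}(\mathcal{D})$, which are what justify the interchange of expectations, and the emphasis that the identity is special to the matched model --- it fails as soon as the inference prior or channel differs from the generative ones (and when later one needs a statement involving two independent posterior replicas, one should appeal to the symmetry under permutation of replicas rather than to this lemma directly). Applying the lemma with $\vec{V}=\hat{\vec{w}}_{\bo}$ and then with $\vec{V}=\hat{\vec{w}}_{\erm}$ immediately gives $m=q_{\bo}$ and $Q=m$ respectively, and a parallel computation on $\langle\|\vec{w}\|^2\rangle$ gives $V=1-q_{\bo}$, as used in Appendix~\ref{sec:app:cavity} and in the proof of Theorem~\ref{thm:jointstats}; recall also that $\hat{\vec{w}}_{\amp}$ may be substituted for $\hat{\vec{w}}_{\bo}$ by Theorem~\ref{thm:gamp}.
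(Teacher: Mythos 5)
Your proof is correct and follows essentially the same route as the paper's: both reduce to the tower property of conditional expectation together with the matched-model Bayes identity $\mathbb{E}_{W^*|\mathcal{D}}[\vec{w}^*]=\vec{w}_{\bo}$, with you reading the chain from left to right and the paper from right to left. Your version is somewhat more explicit about \emph{why} the posterior $P(\vec{w}\mid\mathcal{D})$ coincides with the conditional of the planted signal (spelling out the equality of joint measures via Bayes' rule) and about the integrability needed for Fubini, which is a useful clarification but not a different argument.
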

\begin{proof}
The proof is an application of Bayes formula, and an example of what is often called a Nishimori equality in statistical physics:
\begin{eqnarray}
\mathbb E_{{\mathcal D},W^*} \left[{\bf w}^* \cdot {\bf V}(\mathcal D)\right] &=&
\mathbb E_{{\mathcal D}} E_{W^*|{\mathcal D}} \left[{\bf w}^* \cdot {\bf V}(\mathcal D)\right] \\
&=&  \mathbb E_{{\mathcal D}}  \left[(E_{W^*|{\mathcal D}} {\bf w}^*) \cdot {\bf V}(\mathcal D)\right] = \mathbb E_{{\mathcal D}} \left[\vec{w}_{\bo} \cdot {\bf V}(\mathcal D)\right]  
\end{eqnarray}
\end{proof}

From this lemma, we see immediately that, in expectation
\begin{equation}
    \lim_{d \to \infty} {\mathbb E}\left[\frac{\vec{w}_{\erm} \cdot \wstar}{d}\right] = \lim_{d \to \infty} {\mathbb E}\left[\frac{\vec{w}_{\erm} \cdot \vec{w}_{\bo}}{d}\right] = m
\end{equation}
Additionally, we  already know that the left hand side concentrates.  It is easy to see that the right hand side does as well:
\begin{lemma}[Concentration of the overlap $Q$]
\begin{equation}
  \lim_{d \to \infty} {\mathbb E}\left[\left(\frac{\vec{w}_{\bo} \cdot \vec{w}_{\erm}}{d}\right)^2\right] =  \lim_{d \to \infty} {\mathbb E}\left[\frac{\vec{w}_{\bo} \cdot \vec{w}_{\erm}}{d} \right]^2
\end{equation}
\end{lemma}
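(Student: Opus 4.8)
The plan is to show that the overlap $Q = \tfrac1d \vec{w}_{\bo} \cdot \vec{w}_{\erm}$ concentrates by exploiting the fact that $\vec{w}_{\bo}$ is itself a posterior average, so $Q$ can be rewritten as a genuine posterior expectation, and then invoking the standard concentration-of-overlaps machinery for Bayes-optimal inference together with the Lipschitz/pseudo-Lipschitz control already available for the ERM estimator from \cite{thrampoulidis2018precise,loureiro_learning_2021}. Concretely, writing $\vec{w}_{\bo} = \langle \vec{w} \rangle$ where $\langle \cdot \rangle$ denotes the posterior average over $\vec{w} \sim P(\vec{w}|\mathcal D)$, we have
\begin{equation}
\frac{\vec{w}_{\bo} \cdot \vec{w}_{\erm}}{d} = \Big\langle \frac{\vec{w} \cdot \vec{w}_{\erm}(\mathcal D)}{d} \Big\rangle,
\end{equation}
so that $Q$ is an average over the posterior of the scalar observable $\tfrac1d \vec{w}\cdot \vec{w}_{\erm}$. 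Since $\vec{w}_{\erm}$ is a fixed (data-dependent) vector with $\tfrac1d\|\vec{w}_{\erm}\|^2 \to q_{\erm}$ bounded, this observable is a bounded-norm linear statistic of $\vec{w}$.

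The key steps, in order, are: (i) use the Nishimori identity from the preceding lemma to identify $\mathbb E_{\mathcal D}[Q] = \mathbb E_{\mathcal D}[\tfrac1d \vec{w}_{\erm}\cdot\wstar] = m$; (ii) decompose the fluctuation of $Q$ into a ``posterior fluctuation'' piece and a ``disorder fluctuation'' piece, i.e. bound $\mathbb E[Q^2] - \mathbb E[Q]^2$ by $\mathbb E_{\mathcal D}\big[\mathrm{Var}_{\langle\cdot\rangle}(\tfrac1d\vec{w}\cdot\vec{w}_{\erm})\big] + \mathrm{Var}_{\mathcal D}\big(\langle \tfrac1d\vec{w}\cdot\vec{w}_{\erm}\rangle\big)$; (iii) for the first piece, replace $\vec{w}_{\erm}$ by an independent posterior replica via Nishimori (as done for $m = q_{\bo}$ in Appendix~\ref{sec:simplifying_bo}), reducing it to the concentration of the two-replica overlap $\tfrac1d\vec{w}^{(1)}\cdot\vec{w}^{(2)}$, which vanishes by the standard overlap concentration for Bayes-optimal GLMs established in \cite{barbier_optimal_2019}; (iv) for the second piece, use that $\mathcal D \mapsto \langle \tfrac1d\vec{w}\cdot\vec{w}_{\erm}(\mathcal D)\rangle$ is, up to the control of $\vec{w}_{\erm}$, a function with small-gradient dependence on each data sample $(\vec{x}^\mu,y^\mu)$, so Gaussian-Poincaré / bounded-differences gives $O(1/d)$ variance; here one uses the already-established concentration of $\vec{w}_{\erm}$ and its overlaps from \cite{thrampoulidis2018precise,loureiro_learning_2021}. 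Combining (iii) and (iv) yields $\mathbb E[Q^2] - \mathbb E[Q]^2 \to 0$, which is the claimed statement, and then with (i) gives $Q \to m$ in probability.

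The main obstacle I expect is step (iv): making rigorous the claim that $\langle \tfrac1d\vec{w}\cdot\vec{w}_{\erm}(\mathcal D)\rangle$ depends weakly on each individual sample. The ERM estimator $\vec{w}_{\erm}$ is not itself a posterior average, so the clean Nishimori-based arguments do not directly apply to it; one has to combine the Gaussian concentration of the Bayes free energy (controlling the posterior-average part) with the separately-established uniform concentration of $\vec{w}_{\erm}$ and its pseudo-Lipschitz statistics. An alternative, and arguably cleaner, route that sidesteps this is the one hinted at in the main text: observe that running GAMP for Bayes and GAMP for ERM in parallel is itself an instance of the general vector-AMP framework whose state evolution is rigorously established in \cite{gerbelot2021graphbased}, and the concentration of $Q^t$ at each finite iteration $t$ is then automatic from the AMP state-evolution theorem; one then needs only to argue that the GAMP iterates converge to the respective fixed points $\vec{w}_{\bo}$ and $\vec{w}_{\erm}$ as $t\to\infty$, which is already invoked elsewhere in the paper. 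Either way, once concentration is in hand, combining it with the Nishimori identity of the previous lemma immediately gives $Q = m$, closing the chain of overlaps needed for Theorem~\ref{thm:jointstats}.
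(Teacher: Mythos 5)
Your plan diverges from the paper's, and as written it has a real gap. The paper's proof is a short sandwich argument: it expresses $\mathbb{E}[Q^2]$ as a two-replica average over the posterior, uses the Nishimori identity to swap one posterior replica for $\wstar$, and then applies Cauchy--Schwartz to obtain $\mathbb{E}[Q^2] \le \mathbb{E}\big[ (\tfrac1d\wstar\cdot\vec{w}_{\erm})^2\big]$. The right-hand side concentrates to $m^2$ by the already-established ERM results, while $\mathbb{E}[Q^2] \ge \mathbb{E}[Q]^2 = m^2$ by the second-moment inequality and the preceding Nishimori lemma. This pins $\lim \mathbb{E}[Q^2]$ and $\lim \mathbb{E}[Q]^2$ to the same value without ever needing to decompose the variance.

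The gap in your argument is in step (iii): you propose to ``replace $\vec{w}_{\erm}$ by an independent posterior replica via Nishimori,'' citing the $m=q_{\bo}$ derivation. But that derivation replaces $\wstar$ (the planted signal, distributed as a posterior sample conditional on $\mathcal D$) by a posterior replica --- the Nishimori identity gives no license to replace $\vec{w}_{\erm}$, which is a deterministic function of the data and not sampled from the posterior. So the reduction of the in-posterior variance of $\tfrac1d\vec{w}\cdot\vec{w}_{\erm}$ to two-replica overlap concentration does not go through as stated; at best one has $\mathrm{Var}_{\langle\cdot\rangle}(\tfrac1d\vec{w}\cdot\vec{w}_{\erm}) = \tfrac1{d^2}\vec{w}_{\erm}^{\top}\mathrm{Cov}_{\langle\cdot\rangle}(\vec{w})\,\vec{w}_{\erm}$, and bounding this requires an operator-norm bound on the posterior covariance that you would have to establish separately. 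Step (iv) you already flag as unresolved, and it is indeed nontrivial: the map $\mathcal D \mapsto \vec{w}_{\erm}(\mathcal D)$ is not obviously Lipschitz in a way that would let Gaussian--Poincaré close the disorder-fluctuation piece without further work. Your alternative suggestion --- invoking the graph-based AMP state evolution of \cite{gerbelot2021graphbased} for the two coupled iterations --- is a legitimate and arguably more general route (and the paper itself flags it as such), but it is left as a sketch. The paper's Cauchy--Schwartz argument is considerably more economical: it trades your two-piece variance decomposition, with its attendant Lipschitz and operator-norm estimates, for a single inequality that sidesteps both issues by bootstrapping off the already-proved concentration of $\tfrac1d\wstar\cdot\vec{w}_{\erm}$.
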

\begin{proof}
The proof again uses Nishimori identity. 
\begin{eqnarray}
{\mathbb E}\left[\left(\frac{\vec{w}_{\bo} \cdot \vec{w}_{\erm}}{d}\right)^2\right] &=& {\mathbb E}\left[\left(\frac{\vec{w}_{\bo} \cdot \vec{w}_{\erm}}{d}\right)\left(\frac{\vec{w}_{\bo} \cdot \vec{w}_{\erm}}{d}\right)\right] \\ &=& 
{\mathbb E}_{\mathcal D}\left[\left(\frac{ {\mathbb E}_{W|\mathcal D} W  \cdot \vec{w}_{\erm}}{d}\right)\left(\frac{ {\mathbb E}_{W|\mathcal D} W\cdot \vec{w}_{\erm}}{d}\right)\right] \\
&=& 
{\mathbb E}_{\mathcal D}
{\mathbb E}_{W_1,W_2|\mathcal D}
\left[\left(\frac{W_1 \cdot \vec{w}_{\erm}}{d}\right)\left(\frac{ W_2\cdot \vec{w}_{\erm}}{d}\right)\right] \\
&=& 
{\mathbb E}_{\mathcal D,{\vec w}^*}
\left[\left(\frac{{\vec w}^* \cdot \vec{w}_{\erm}}{d}\right)\left(\frac{ {\mathbb E}_{W|\mathcal D} W\cdot \vec{w}_{\erm}}{d}\right)\right] \\
&=& 
{\mathbb E}_{\mathcal D,{\vec w}^*}
\left[\left(\frac{{\vec w}^* \cdot \vec{w}_{\erm}}{d}\right)\left(\frac{ {\vec w}_{\bo} \cdot \vec{w}_{\erm}}{d}\right)\right] 
\end{eqnarray}
Then, from Cauchy-Schwartz we have
\begin{eqnarray}
{\mathbb E}\left[\left(\frac{\vec{w}_{\bo} \cdot \vec{w}_{\erm}}{d}\right)^2\right]^2 &\le& {\mathbb E}\left[\left(\frac{\vec{w}_{\bo} \cdot \vec{w}_{\erm}}{d}\right)^2\right]
{\mathbb E}\left[\left(\frac{\vec{w}^* \cdot \vec{w}_{\erm}}{d}\right)^2\right] \\
{\mathbb E}\left[\left(\frac{\vec{w}_{\bo} \cdot \vec{w}_{\erm}}{d}\right)^2\right] &\le&
{\mathbb E}\left[\left(\frac{\vec{w}^* \cdot \vec{w}_{\erm}}{d}\right)^2\right]
\end{eqnarray}
and as $d \to \infty$, we can use the concentration of the right hand side to $m$ to obtain
\begin{eqnarray}
\lim_{d \to \infty} {\mathbb E}\left[\left(\frac{\vec{w}_{\bo} \cdot \vec{w}_{\erm}}{d}\right)^2\right] &\le&
m^2
\end{eqnarray}
so that, given the second moment has to be larger or equal to its (squared) mean: 
\begin{eqnarray}
\lim_{d \to \infty} {\mathbb E}\left[\left(\frac{\vec{w}_{\bo} \cdot \vec{w}_{\erm}}{d}\right)^2\right] &=& m^2
\end{eqnarray}
\end{proof}
We have thus proven that the overlap $Q$ concentrates in quadratic mean to $m$ as $d\to \infty$: with high probability, it is thus asymptotically equal to $m$. We shall now prove that $\vec{w}_{\bo}$ can be approximated by $\vec{w}_{\amp}$. In fact, given the concentration of overlap, it will be enough to prove that:
\begin{equation}
    \lim_{d \to \infty} {\mathbb E}_{{\mathcal D}_d}  \frac{\hat {\vec w}_{\amp}({\mathcal D}_d)\cdot {\vec w}_{\erm}({\mathcal D})}{d} = 
    \lim_{d \to \infty} {\mathbb E}_{{\mathcal D}_d}  \frac{\hat {\vec w}_{\bo}({\mathcal D}_d)\cdot {\vec w}_{\erm}({\mathcal D})}{d}
\end{equation}

This can be done in two steps. First, similarly as in section \ref{AMP-proof}, we use the fact that GAMP achieves Bayes optimality for the estimation of $W^*$ \cite{barbier_optimal_2019}. This leads to the following lemma

\begin{lemma}[Bounds on differences of estimators for ${\bf w}$]
\begin{equation}
    \lim_{d \to \infty} {\mathbb E}_{\mathcal D} \frac{\|\vec{w}_{\amp}-\vec{w}_{\bo}\|^2}d \to 0 
\end{equation}
\end{lemma}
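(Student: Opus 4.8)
The plan is to reduce the statement to overlap convergences that are already under rigorous control: the GAMP state evolution \cite{rangan2011generalized,javanmard2013state}, which identifies $\tfrac1d\|\vec{w}_{\amp}\|^2$ and $\tfrac1d\vec{w}_{\amp}\cdot\wstar$; the Bayes-optimality of GAMP for estimating $\wstar$ \cite{barbier_optimal_2019}; and the Nishimori identities for $\vec{w}_{\bo}=\mathbb{E}_{W|\mathcal D}[W]$ established above. Concretely, I would expand the square
\begin{equation}
\frac1d\,\mathbb{E}_{\mathcal D}\|\vec{w}_{\amp}-\vec{w}_{\bo}\|^2
= \frac1d\,\mathbb{E}_{\mathcal D}\|\vec{w}_{\amp}\|^2
- \frac2d\,\mathbb{E}_{\mathcal D}\big[\vec{w}_{\amp}\cdot\vec{w}_{\bo}\big]
+ \frac1d\,\mathbb{E}_{\mathcal D}\|\vec{w}_{\bo}\|^2,
\end{equation}
and argue that each of the three terms converges to the Bayes-optimal overlap $q_{\bo}$, the fixed point of \eqref{def:eq-bo}, so that the whole expression vanishes in the limit.

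The first term is exactly the GAMP Bayes-overlap statement recalled above: $\tfrac1d\|\vec{w}_{\amp}\|^2\to q_{\bo}$. The third term equals $\tfrac1d\mathbb{E}_{\mathcal D}[\vec{w}_{\bo}\cdot\vec{w}_{\bo}]$, which tends to $q_{\bo}$ by the Nishimori simplification of the Bayes state evolution (the $m=q$, $V=1-q$ reduction). For the cross term, since $\vec{w}_{\amp}=\vec{w}_{\amp}(\mathcal D)$ is a measurable function of the data, the Nishimori relation for Bayes overlaps applied with ${\bf V}=\vec{w}_{\amp}$ gives $\tfrac1d\mathbb{E}_{\mathcal D}[\vec{w}_{\bo}\cdot\vec{w}_{\amp}]=\tfrac1d\mathbb{E}_{\mathcal D,\wstar}[\wstar\cdot\vec{w}_{\amp}]\to q_{\bo}$, using that GAMP achieves the Bayes teacher-overlap. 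Collecting the three limits, $\tfrac1d\mathbb{E}_{\mathcal D}\|\vec{w}_{\amp}-\vec{w}_{\bo}\|^2\to q_{\bo}-2q_{\bo}+q_{\bo}=0$, which is the claim. This is the very same bias/cross-term cancellation used in Lemma~\ref{lemma:bound1}, transported from the estimation of the new label to that of $\wstar$. Combined with the previous lemma it also closes the argument for $Q$: by Cauchy--Schwarz, $\big|\tfrac1d(\vec{w}_{\amp}-\vec{w}_{\bo})\cdot\vec{w}_{\erm}\big|\le\sqrt{\tfrac1d\|\vec{w}_{\amp}-\vec{w}_{\bo}\|^2}\,\sqrt{\tfrac1d\|\vec{w}_{\erm}\|^2}\to 0$, so $Q=\tfrac1d\vec{w}_{\amp}\cdot\vec{w}_{\erm}$ and $\tfrac1d\vec{w}_{\bo}\cdot\vec{w}_{\erm}$ share the limit $m$.

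The only genuinely delicate ingredient — and the one I expect to be the main obstacle — is that the GAMP overlap $q_{\bo}$ (the fixed point of \eqref{def:eq-bo} reached by the state-evolution recursion from an uninformative initialisation) actually coincides with the information-theoretic Bayes-optimal overlap, i.e. that GAMP is not trapped at a suboptimal fixed point. This is precisely the output of \cite{barbier_optimal_2019}, which matches the state-evolution fixed point with the replica-predicted MMSE via the interpolation / I-MMSE method, together with the rigorous tracking of GAMP observables by its state evolution \cite{javanmard2013state,gerbelot2021graphbased}; for the probit model with Gaussian prior considered here no statistical-to-computational gap is present, so this matching holds. Everything else — the Nishimori evaluations and the Cauchy--Schwarz transfer — is routine and is what I would spell out in full.
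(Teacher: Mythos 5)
Your proof is correct, but it takes a genuinely different route from the paper's. You expand $\tfrac1d\,\mathbb{E}\|\vec{w}_{\amp}-\vec{w}_{\bo}\|^2$ into the three overlap terms and argue each converges to $q_{\bo}$, invoking GAMP state evolution for $\tfrac1d\|\vec{w}_{\amp}\|^2$ and $\tfrac1d\,\vec{w}_{\amp}\!\cdot\wstar$, and Nishimori twice (for $\tfrac1d\|\vec{w}_{\bo}\|^2$ via $\tfrac1d\mathbb{E}[\vec{w}_{\bo}\!\cdot\wstar]$, and for the cross term $\tfrac1d\mathbb{E}[\vec{w}_{\amp}\!\cdot\vec{w}_{\bo}]=\tfrac1d\mathbb{E}[\vec{w}_{\amp}\!\cdot\wstar]$). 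The paper instead expands $\tfrac1d\,\mathbb{E}\|\vec{w}_{\amp}-\wstar\|^2$ around $\vec{w}_{\bo}$: the cross term vanishes because $\mathbb{E}_{\wstar|\mathcal D}[\wstar-\vec{w}_{\bo}]=0$, giving the Pythagorean identity $\tfrac1d\,\mathbb{E}\|\vec{w}_{\amp}-\wstar\|^2 = \tfrac1d\,\mathbb{E}\|\vec{w}_{\bo}-\wstar\|^2 + \tfrac1d\,\mathbb{E}\|\vec{w}_{\amp}-\vec{w}_{\bo}\|^2$, and then the single scalar statement that GAMP achieves the MMSE on $\wstar$ forces the second term to vanish. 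Both routes rest on the same ingredients (Nishimori identities and the matching of GAMP to Bayes from \cite{barbier_optimal_2019}), but the paper's version is more economical — it uses one MMSE fact rather than three separate overlap limits, and it is the exact analogue of Lemma~\ref{lemma:bound1} transported from $Y$ to $\wstar$, so the bias–variance cancellation is proved once and reused. Your version is more computational but equally valid. One small imprecision: you attribute $\tfrac1d\,\mathbb{E}\|\vec{w}_{\bo}\|^2\to q_{\bo}$ to ``the Nishimori simplification of the Bayes state evolution''; state evolution tracks the GAMP iterates, not the true posterior mean, so this limit actually needs the information-theoretic result that the Bayes overlap coincides with the replica fixed point — which you do correctly identify as the key input from \cite{barbier_optimal_2019} in your closing paragraph, but which should be cited here rather than state evolution. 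Your concluding Cauchy--Schwarz step to transfer the bound to $Q=\tfrac1d\vec{w}_{\amp}\!\cdot\vec{w}_{\erm}$ matches the paper's subsequent lemma exactly.
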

\begin{proof}
The proof proceeds similarly as in lemma \ref{lemma:bound1}. Denoting $\delta {\vec w}({\mathcal D})=\vec{w}_{\amp}({\mathcal D})-\vec{w}_{\bo}({\mathcal D})$ we write
\begin{align}
    &{\mathbb E}_{{\mathcal D},{\vec W^*}} \frac{\|{\vec w}_{\amp}({\mathcal D})-{\vec w}^*\|_2^2}d =
     {\mathbb E}_{{\mathcal D},{\vec W^*}} \frac{\|{\vec w}_{\bo}({\mathcal D})+\delta {\vec w}({\mathcal D})-{\vec w}^*\|_2^2}d\\
     &=  {\mathbb E}_{{\mathcal D},{\vec W^*}} \frac{\|{\vec w}_{\bo}({\mathcal D})-{\vec w}^*\|_2^2}d +  {\mathbb E}_{{\mathcal D}} \frac{\|\delta {\vec w}({\mathcal D})\|_2^2}d + \frac 1d 2 {\mathbb E}_{{\mathcal D}} {\mathbb E}_{{\vec w}^*|{\mathcal D}} \left[\delta w({\mathcal D})({\vec w}^*-{\vec w}_{\bo}\right ] \\
     &=  {\mathbb E}_{{\mathcal D}} \frac{\|\delta {\vec w}({\mathcal D})\|_2^2}d
\end{align}
Using the optimality of GAMP for the MMSE yields the lemma.
\end{proof}
We can now prove the equality of overlaps
\begin{lemma}
\begin{equation}
   \lim_{d \to \infty} {\mathbb E}_{{\mathcal D}_d}  \frac{\hat {\vec w}_{\amp}({\mathcal D}_d)\cdot {\bf V}({\mathcal D})}{d} = 
    \lim_{d \to \infty} {\mathbb E}_{{\mathcal D}_d}  \frac{\hat {\vec w}_{\bo}({\mathcal D}_d)\cdot {\bf V}({\mathcal D})}{d}
    \end{equation}
\end{lemma}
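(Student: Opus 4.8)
The plan is to reduce the statement to the preceding lemma — the $L^2$ closeness $\mathbb E_{\mathcal D}\|\hat{\vec w}_{\amp}-\hat{\vec w}_{\bo}\|_2^2/d\to 0$ — by two applications of Cauchy--Schwarz. First I would write the difference of the two normalised overlaps as a single inner product,
\begin{equation}
\frac{\hat{\vec w}_{\amp}(\mathcal D)\cdot \vec V(\mathcal D)}{d} - \frac{\hat{\vec w}_{\bo}(\mathcal D)\cdot \vec V(\mathcal D)}{d} = \frac{\big(\hat{\vec w}_{\amp}(\mathcal D)-\hat{\vec w}_{\bo}(\mathcal D)\big)\cdot \vec V(\mathcal D)}{d},
\end{equation}
and bound it, for each fixed realisation of $\mathcal D$, by Cauchy--Schwarz in $\mathbb R^d$:
\begin{equation}
\left|\frac{\big(\hat{\vec w}_{\amp}-\hat{\vec w}_{\bo}\big)\cdot \vec V}{d}\right| \le \sqrt{\frac{\|\hat{\vec w}_{\amp}-\hat{\vec w}_{\bo}\|_2^2}{d}}\,\sqrt{\frac{\|\vec V\|_2^2}{d}}.
\end{equation}

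Second, I would take the expectation over $\mathcal D$ and apply Cauchy--Schwarz a second time, this time for the expectation over the random data:
\begin{equation}
\mathbb E_{\mathcal D}\left|\frac{\big(\hat{\vec w}_{\amp}-\hat{\vec w}_{\bo}\big)\cdot \vec V}{d}\right| \le \sqrt{\mathbb E_{\mathcal D}\frac{\|\hat{\vec w}_{\amp}-\hat{\vec w}_{\bo}\|_2^2}{d}}\,\sqrt{\mathbb E_{\mathcal D}\frac{\|\vec V\|_2^2}{d}}.
\end{equation}
The first factor tends to zero by the preceding lemma (itself a consequence of GAMP achieving the Bayes MMSE on $\wstar$, via the same computation as in Lemma~\ref{lemma:bound1}). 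Hence, provided the second factor stays bounded, the whole product vanishes as $d\to\infty$, giving not only the claimed equality of the two limits but the stronger statement that $\mathbb E_{\mathcal D}|\hat{\vec w}_{\amp}\cdot\vec V - \hat{\vec w}_{\bo}\cdot\vec V|/d\to 0$.

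The only point requiring care — and the main obstacle — is controlling the second factor, i.e. showing $\limsup_{d\to\infty}\mathbb E_{\mathcal D}\|\vec V(\mathcal D)\|_2^2/d<\infty$. For the two vectors the lemma is actually applied to this is immediate: for $\vec V=\wstar$ one has $\|\wstar\|_2^2/d\to\rho=1$ with bounded expectation, and for $\vec V=\hat{\vec w}_{\erm}$ one has $\|\hat{\vec w}_{\erm}\|_2^2/d=q_{\erm}^{(d)}\to q_{\erm}<\infty$; boundedness of $\mathbb E_{\mathcal D}\, q_{\erm}^{(d)}$ follows from the concentration / uniform-integrability already invoked for the ERM overlaps, or alternatively by splitting on the high-probability event where $q_{\erm}^{(d)}$ is within a constant of $q_{\erm}$ and noting that on the complement $q_{\erm}^{(d)}$ is still controlled by $\|\vec b\|_2^2/(d\lambda^2)$. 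Combining this with the two Cauchy--Schwarz bounds completes the proof; plugging $\vec V=\hat{\vec w}_{\erm}$ into the resulting identity, together with the Nishimori relation $\mathbb E_{\mathcal D}\,\hat{\vec w}_{\bo}\cdot\hat{\vec w}_{\erm}/d=\mathbb E_{\mathcal D,\wstar}\,\wstar\cdot\hat{\vec w}_{\erm}/d=m$ and the concentration of $Q$ proven above, then yields $Q=m$ as used in Theorem~\ref{thm:jointstats}.
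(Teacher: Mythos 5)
Your proposal is correct and takes essentially the same approach as the paper: both prove the lemma by Cauchy--Schwarz, reducing the claim to the preceding lemma's $L^2$ bound $\mathbb{E}_{\mathcal D}\|\hat{\vec w}_{\amp}-\hat{\vec w}_{\bo}\|_2^2/d\to 0$. You split the Cauchy--Schwarz into a pointwise step and an expectation step where the paper applies it once to the combined inner product, and you make explicit the boundedness of $\mathbb{E}_{\mathcal D}\|\vec V\|_2^2/d$, which the paper's one-line proof takes for granted; these are cosmetic refinements, not a different argument.
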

\begin{proof}
The proof is an application of Cauchy-Schwartz inequality:
\begin{eqnarray}
% \left| {\mathbb E}_{{\mathcal D}_d}  \frac{\hat {\vec w}_{\amp}({\mathcal D}_d)\cdot {\bf V}({\mathcal D})}{d} - 
% \frac{\hat {\vec w}_{\bo}({\mathcal D}_d)\cdot {\bf V}({\mathcal D})}{d}\right|
%&=& 
\left| {\mathbb E}_{{\mathcal D}_d} \left[ \frac{(\hat {\vec w}_{\amp}-\hat {\vec w}_{\bo})({\mathcal D}_d)\cdot {\bf V}({\mathcal D})}{d} \right]\right| 
&\le& 
\sqrt{{\mathbb E} \frac{\|{\bf V}\|_2^2}d  
{\mathbb E} \frac{\|{\vec w}_{\bo}-{\vec w}_{\amp}\|_2^2}d  
}
\end{eqnarray}
taking the limit $d \to \infty$ yields the lemma.
\end{proof}
Applying the lemma to the ERM estimator, and using the concentration of overlaps, finally leads to
\begin{lemma}[Asymptotic Joint distribution of pre-activation]
Asymptotically, and with high probability over data, the joint distribution of pre-activation is asymptotically given by
\begin{equation}
\lim_{d\to \infty}    {\mathbb P}(\nu,\lambda_{\amp},\lambda_{\erm}) ={\mathcal N}\left(0,\begin{pmatrix}
\rho & q_{\bo} & m \\
q_{\bo} & q_{\bo} & m \\
m & m & q_{\erm} \\
\end{pmatrix}\right)
\end{equation}
\end{lemma}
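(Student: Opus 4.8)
The plan is to assemble the final lemma from Lemma~\ref{lemma3} together with the four convergence results already established. By Lemma~\ref{lemma3}, for \emph{every} fixed dataset $\mathcal{D}$ the triple $(\nu,\lambda_{\amp},\lambda_{\erm})$ is exactly a centred Gaussian whose covariance is the $3\times 3$ matrix of normalised overlaps $\tfrac1d\wstar\cdot\wstar$, $\tfrac1d\wstar\cdot\hat{\vec{w}}_{\amp}$, $\tfrac1d\wstar\cdot\hat{\vec{w}}_{\erm}$, $\tfrac1d\|\hat{\vec{w}}_{\amp}\|^2$, $\tfrac1d\|\hat{\vec{w}}_{\erm}\|^2$, $\tfrac1d\hat{\vec{w}}_{\amp}\cdot\hat{\vec{w}}_{\erm}$. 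Hence it suffices to show that each of these six entries converges, with high probability over $\mathcal{D}$, to the corresponding entry of the target matrix; the statement then follows since convergence of the covariance of a Gaussian implies convergence of the law.

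Four of the entries I would dispose of immediately by quoting the results above: $\tfrac1d\wstar\cdot\wstar\to\rho$ by the law of large numbers; $\tfrac1d\hat{\vec{w}}_{\erm}\cdot\wstar\to m$ and $\tfrac1d\|\hat{\vec{w}}_{\erm}\|^2\to q_{\erm}$ by the ERM-overlap theorem of \cite{thrampoulidis2018precise,aubin_generalization_2020,loureiro_learning_2021}; and $\tfrac1d\hat{\vec{w}}_{\amp}\cdot\wstar\to q_{\bo}$, $\tfrac1d\|\hat{\vec{w}}_{\amp}\|^2\to q_{\bo}$ by GAMP state evolution \cite{javanmard2013state} together with the Bayes-overlap theorem of \cite{barbier_optimal_2019}; all with high probability. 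The only remaining entry is the cross-overlap $Q=\tfrac1d\hat{\vec{w}}_{\amp}\cdot\hat{\vec{w}}_{\erm}$, which I would handle in two steps. First, transfer from the algorithmic estimator to the exact posterior mean: by Cauchy--Schwarz and the $L^2$-closeness lemma $\tfrac1d\|\hat{\vec{w}}_{\amp}-\hat{\vec{w}}_{\bo}\|^2\to 0$,
\begin{equation}
\left|\frac{\hat{\vec{w}}_{\amp}\cdot\hat{\vec{w}}_{\erm}}{d}-\frac{\hat{\vec{w}}_{\bo}\cdot\hat{\vec{w}}_{\erm}}{d}\right|\le\sqrt{\frac{\|\hat{\vec{w}}_{\amp}-\hat{\vec{w}}_{\bo}\|^2}{d}}\;\sqrt{\frac{\|\hat{\vec{w}}_{\erm}\|^2}{d}}\xrightarrow{d\to\infty}0
\end{equation}
in probability, the second factor converging to $\sqrt{q_{\erm}}$. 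Second, identify the limit of $\tfrac1d\hat{\vec{w}}_{\bo}\cdot\hat{\vec{w}}_{\erm}$: the Nishimori relation for Bayes overlaps gives $\mathbb{E}\,\tfrac1d\hat{\vec{w}}_{\bo}\cdot\hat{\vec{w}}_{\erm}=\mathbb{E}\,\tfrac1d\wstar\cdot\hat{\vec{w}}_{\erm}\to m$, and the concentration lemma shows $\tfrac1d\hat{\vec{w}}_{\bo}\cdot\hat{\vec{w}}_{\erm}$ converges in quadratic mean to $m$. Chaining the two steps yields $Q\to m$ with high probability, which completes the identification of the covariance matrix, and hence of the limiting Gaussian, proving the lemma.

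The main obstacle is exactly this last entry: unlike the other overlaps, $Q$ is not delivered directly by a state-evolution statement, so one is forced to route through the exact posterior mean $\hat{\vec{w}}_{\bo}$ via the Nishimori identity and then pay the cost of replacing $\hat{\vec{w}}_{\bo}$ by the tractable $\hat{\vec{w}}_{\amp}$, which is licit only because GAMP is asymptotically Bayes-optimal in squared error on the weights. A secondary point requiring care is upgrading the in-expectation Nishimori statement to a genuine high-probability statement; this is precisely what the quadratic-mean concentration lemma for $Q$ supplies.
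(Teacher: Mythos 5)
Your proposal is correct and follows essentially the same route as the paper: it assembles the limiting covariance from Lemma~\ref{lemma3} and the known ERM/Bayes overlap limits, handles the cross-overlap $Q$ by combining the Nishimori identity with a concentration argument, and uses the $L^2$-closeness of $\hat{\vec{w}}_{\amp}$ to $\hat{\vec{w}}_{\bo}$ together with Cauchy--Schwarz to transfer the result to the GAMP estimator. The only difference from the paper is the order of the last two steps (you replace AMP by BO first, then invoke Nishimori, whereas the paper identifies the BO--ERM overlap first and only then passes to AMP), which is a cosmetic reordering with no effect on the argument.
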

To obtain Theorem \ref{thm:jointstats}, one simply applies the change of variable 
\begin{align}
(\nu,\lambda_{\amp},\lambda_{\erm}) &\to (f_{\star}(\nu), \hat{f}_{\amp}(\lambda_{\amp}, \hat{f}_{\erm}(\lambda_{\erm}) \\
&= \left(\sigma_{\star}(\sfrac{\nu}{\tau}), \sigma_{\star}(\sfrac{\lambda_{\amp}}{\tau'}), \sigma(\lambda_{\erm})) \right)
\end{align}

\subsection{Proof of Theorem \ref{thm:calibration}}
\label{sec:proof_thm_calibration}

\paragraph{Proof of Equation \eqref{eq:thm_cal_erm}} Consider the local fields $(\nu, \lambda_{\erm}, \lambda_{\amp})$ defined in Equation \eqref{def:localfields}. As shown above, these local fields follow a Gaussian distribution with covariance matrix $\Sigma$ given in Equation \eqref{eq:def_sigma}. Then, $(\nu, \lambda_{\erm})$ follows a bivariate Gaussian and the density of $\nu$ conditioned on $\hat{f}_{\erm}(\vec{x}) = \sigma(\lambda_{\erm}) = p$ follows the Gaussian distribution with mean $\mu = \frac{m}{q_{\erm}}\sigma^{-1}(p)$ and variance $v^2 = \rho - \frac{m^2}{q_{\erm}}$. Then, 
\begin{align}
    \mathbb{E}_{x} \left[ f_{\star}(\vec{x}) | \hat{f}_{\erm}(\vec{x}) = p \right] &= \int d\nu \frac{1}{2} \erfc\left(- \frac{\nu}{\sqrt{2\noisevar}}\right) \mathcal{N}(\nu | \mu, v^2) \\
    &= \frac{1}{2} \erfc\left(- \frac{\mu}{\sqrt{2 \left( \noisevar + v^2 \right)}}\right) = \frac{1}{2} \erfc\left(-\frac{\frac{m}{q_{\erm}}\sigma^{-1}(p)}{\sqrt{2(1 - \frac{m^2}{q_{\erm}} + \noisevar)}}\right)\\
    &= \sigma_{\star} \left( \frac{\frac{m}{q_{\erm}}\sigma^{-1}(p)}{\sqrt{1 - \frac{m^2}{q_{\erm}} + \noisevar}} \right)
\end{align}
which yields Equation \eqref{eq:thm_cal_erm}.
We used the property that, for any $a, b$, 
\begin{equation}
    \int \erf(ax + b) \mathcal{N}(x | \mu, \sigma^2) \dd{x} = \erf\left(\frac{a\mu + b}{\sqrt{1 + 2 a^2 \sigma^2}}\right)
\end{equation}

\paragraph{Proof of Equation \eqref{eq:thm_cal_bo}}
We use the same computation as in the previous paragraph: since the conditioned on the Bayes local field $\hat{f}_{\bo}(\vec{x}) = \sigma_{\star}(\frac{\lambda_{\amp}}{\sqrt{\noisevar + 1 - q_{\bo}}}) = p$, the teacher local field is Gaussian with mean $\mu = \sqrt{\noisevar + 1 - q_{\bo}}\sigma_{\star}^{-1}(p)$ and variance $v^2 = 1 - q_{\bo}$. As before, we have \begin{align}
    \mathbb{E}_{\vec{x}} \left[ f_{\star}(\vec{x}) | \hat{f}_{\bo}(\vec{x}) = p \right] &= \sigma_{\star}\left( \frac{\mu}{\sqrt{\noisevar + v^2}} \right) \\
    &= \sigma_{\star}\left(\frac{\sqrt{\noisevar + 1 - q_{\bo}} \sigma_{\star}^{-1}(p) }{\sqrt{\noisevar + 1 - q_{\bo}}} \right) = p 
\end{align}
Hence the result of Equation \eqref{eq:thm_cal_bo}.

\paragraph{Proof of Equation \eqref{eq:thm_cal_erm_bo}}
The proof follows the same structure as the previous paragraphs: conditioned on $\sigma(\lambda_{\erm}) = p$, the law of $\lambda_{\amp}$ is $\mathcal{N}(\frac{m}{q_{\erm}}\sigma^{-1}(p), q_{\bo} - \frac{m^2}{q_{\erm}})$ and 
\begin{align}
    \mathbb{E}_{\vec{x}}\left[ \hat{f}_{\bo}(\vec{x}) | \hat{f}_{\erm}(\vec{x}) = p \right] &= \int \sigma_{\star}\left( \frac{- x }{\sqrt{\noisevar + 1 - q}}\right) \mathcal{N}(x | \frac{m}{q_{\erm}}\sigma^{-1}(p), q_{\bo} - \frac{m^2}{q_{\erm}}) \\
    &= \sigma_{\star} \left( \frac{ \frac{m}{q_{\erm}}\sigma^{-1}(p)  }{\sqrt{\noisevar + 1 - q_{\bo} + (q_{\bo} - \frac{m}{q_{\erm}})}} \right) \\
    &= \sigma_{\star}\left( \frac{\frac{m}{q_{\erm}}\sigma^{-1}(p)}{\sqrt{1 - \frac{m^2}{q_{\erm}} + \noisevar}} \right) = \mathbb{E}_{x} \left[ f_{\star}(\vec{x}) | \hat{f}_{\erm}(\vec{x}) = p \right]
\end{align}

%%%%%%%%%%%%%%%%%%%%%%%%%%%%%%%%%%%%%%%%%%%%%%%%%%%%%%%%%%%%%%%%%%%%%%%%%%%%%%%
%%%%%%%%%%%%%%%%%%%%%%%%%%%%%%%%%%%%%%%%%%%%%%%%%%%%%%%%%%%%%%%%%%%%%%%%%%%%%%%

\newpage

\section{Additional figures}
\label{sec:additional_figures}

%\subsection{Bayes Optimal uncertainty supplement}

%\begin{figure}[t!]
%     \centering
%     \includegraphics[width = \columnwidth]{Figures/multiple_densities_teacher_bo.pdf}
%     \caption{Density between Bayes confidence $\hat{f}_{\bo}$ (x-axis) and the teacher confidence $f_{\star}$ (y-axis) for multiple values of $\alpha, \noisestr$: the rows correspond  respectively to $\alpha = 0.1, 1, 10, 100$ from top to bottom, and the columns correspond respectively to $\noisestr = 0.1, 0.5, 2$. Generalisation errors of the Bayes estimator are in written on top of the corresponding plot. The best possible generalisation errors, achieved if the teacher weights are known, for $\noisestr = 0.1, 0.5, 2$ are respectively $\varepsilon^{\star}_g = 0.032, 0.148, 0.352$.}
%     \label{fig:multiple_densities}
%\end{figure}

%Figure~\ref{fig:multiple_densities} depicts the same densities as Figure~\ref{fig:teacher_bo_density} for several different values of the sample complexity $\alpha$ and noise $\noisestr$. The corresponding test error is given for information. We see, for instance, that at small $\alpha$ the BO confidence is low, close to $0.5$, because not much can be learned from very few samples. The teacher confidence does not depend on $\alpha$, and is low for growing $\noisestr$. At large $\alpha$, on the other hand, the BO confidence is getting well correlated with the teacher one. At larger $\alpha$ and small noise the BO test error is getting smaller and the corresponding confidence close to $1$ or $0$ (depending on the label). 

\subsection{Logistic regression uncertainty supplement}

Figure \ref{fig:multiple_densities_bo_erm} complements Figure \ref{fig:erm_joint_density_lambda_0} from the main text by showing the same plot as the right panel in Figure~\ref{fig:erm_joint_density_lambda_0}  for other values of sample complexity $\alpha$ and noise $\tau$. We observe that at zero regularization the logistic regression is overconfident in all the depicted cases, in particular so at small $\alpha$ and small noise.  

\begin{figure}[ht!]
    \centering
    \includegraphics[width=0.8\textwidth]{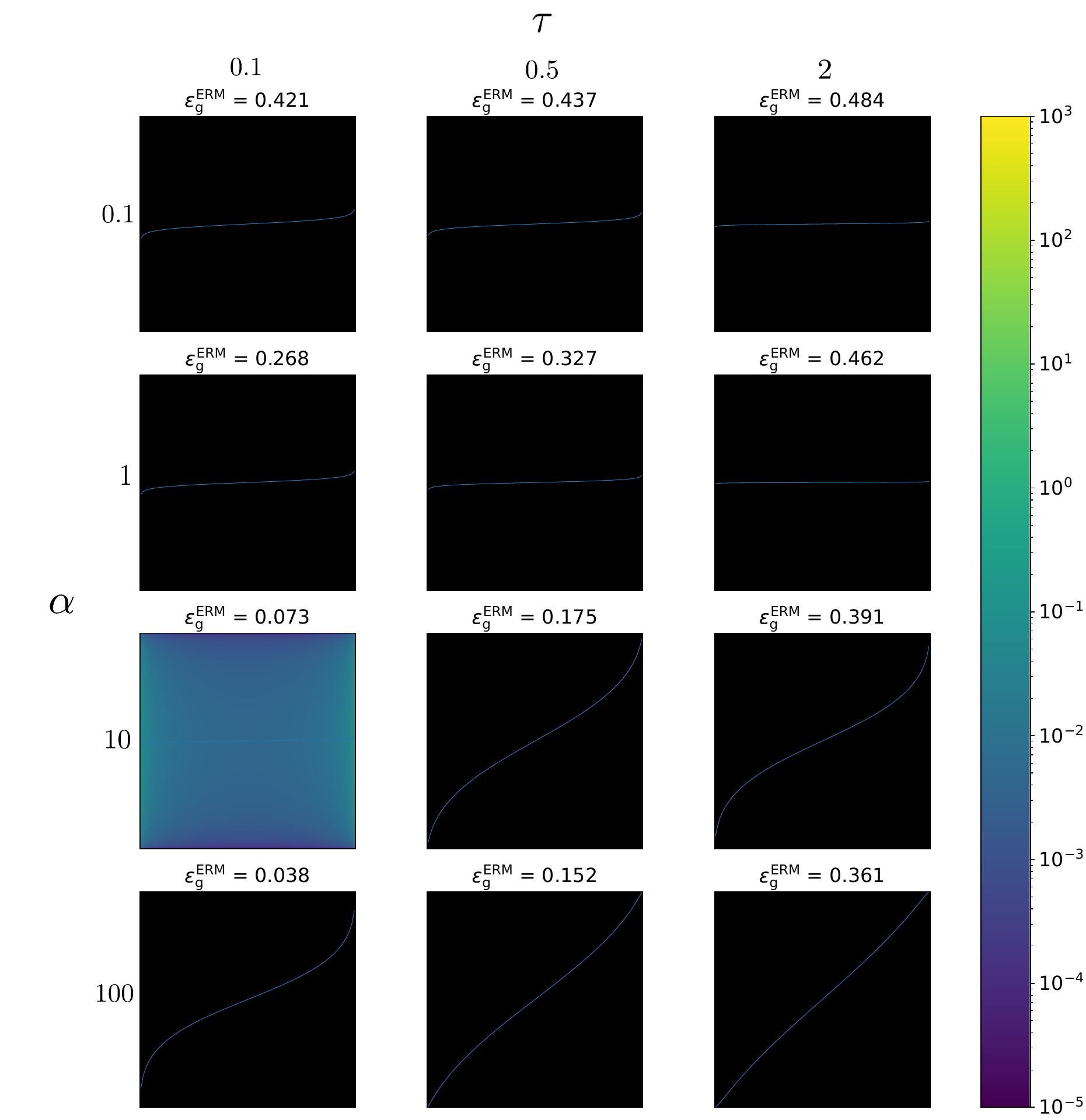}
    \caption{Joint density of $\hat{f}_{\erm}$ (x-axis) and $\hat{f}_{\bo}$ (y-axis), at $\lambda = 0^{+}$. Blue curve is the mean of $\hat{f}_{\bo}$ at fixed $\hat{f}_{\erm}$.
    The test error of ERM is indicated above the corresponding plot. The test errors of Bayes for the same parameters are indicated in Figure \ref{fig:multiple_densities_main}.}
    \label{fig:multiple_densities_bo_erm}
\end{figure}

\subsection{Choosing optimal regularization supplement}
Here we give additional illustration related to the section \ref{sec:choosing_lambda_optimally} in the Main text.

In figure \ref{fig:calibration_lambda}, the calibration $\Delta_p$ is shown as a function of $\lambda$ at different levels $p$ and different noise $\sigma$. First observe that as $\lambda$ grows the logistic regression is going from overconfident $\Delta_p>0$ to underconfident $\Delta_p<0$. For $\lambda \rightarrow \infty$, we have $\Delta_p \rightarrow p - 1$.
Further, we observe that the value of $\lambda$ at which the calibration is zero (the best calibration) has only mild dependence on the value of $p$.  
Finally, the vertical lines mark the values of regularization that minimize the validation error $\lambdaerror$, and loss $\lambdaloss$. We see that $\lambdaloss$ is closer to the well-calibrated region, and that at small $\alpha$ this difference in more pronounced.

\begin{figure}[h!]
    \centering
    \includegraphics[width = 0.5\columnwidth]{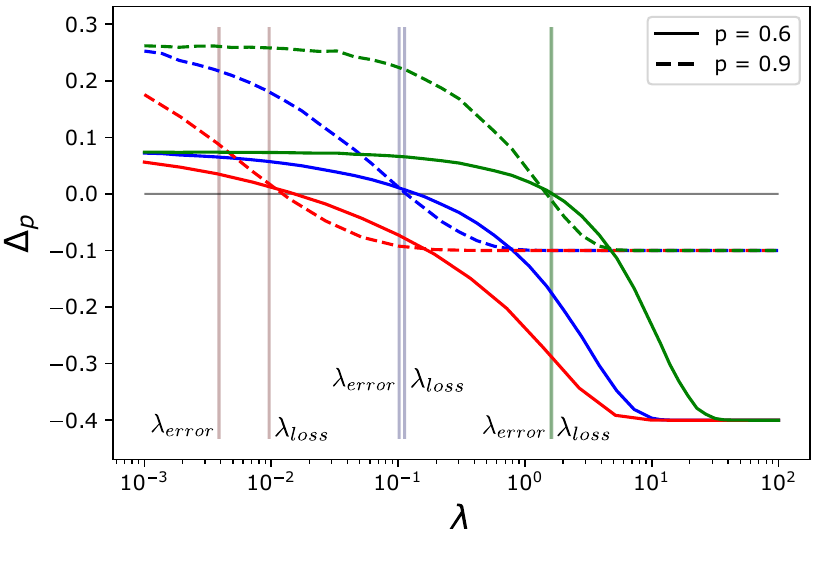}
    \caption{Calibration $\Delta_p$ for $p = 0.9$ and $p = 0.6$ as a function of $\lambda$, for $\noisestr = 0$ (red curve) , $\noisestr = 0.5$ (blue curve), and $\noisestr = 2$ (green curve), at $\alpha = 5$. Vertical lines correspond to $\lambdaerror$ and $\lambdaloss$ defined in \ref{sec:choosing_lambda_optimally}. For $\noisestr = 2$, $\lambdaerror$ and $\lambdaloss$ differ by only $10^{-2}$ and look indistinguishable on the plot.}
    \label{fig:calibration_lambda}
\end{figure}

The left panel of Figure \ref{fig:optimal_lambda} compares $\lambdaerror$ and $\lambdaloss$ when $\noisestr = 0.5$. 
%It appears there that in the range of $\alpha$ considered, $\lambdaloss > \lambdaerror$. However, this inequality is not true in general. For example, at $\alpha = 100, \noisestr = 0.5$, $\lambdaloss = 1.570$ and $\lambdaerror = 1.573$.
The right panel of Figure \ref{fig:optimal_lambda}
%Figure~\ref{fig:error_loss_close} 
then shows that the test error at $\lambdaloss$ and $\lambdaerror$ are extremely close, with the difference being plot in the insert. 

\begin{figure}[ht!]
    \centering
    \subfigure[]{\includegraphics[scale=0.5]{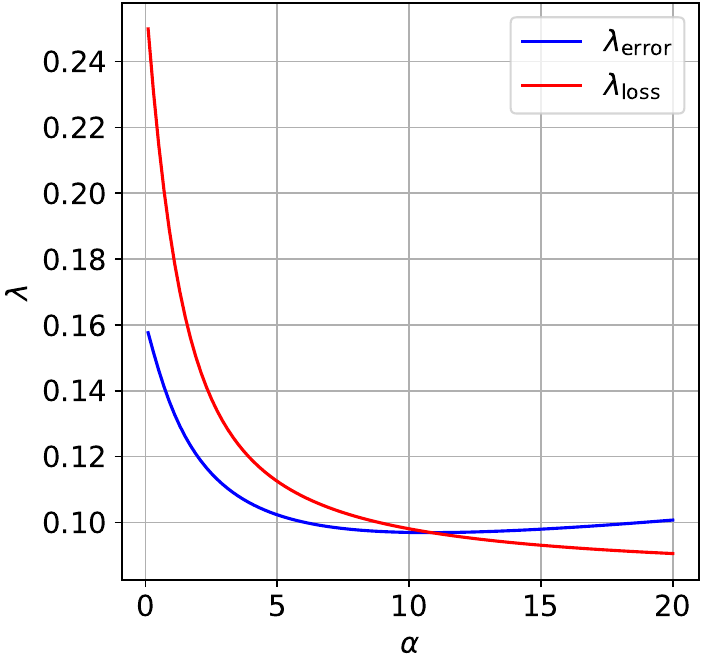}}
    \subfigure[]{\includegraphics[scale=0.5]{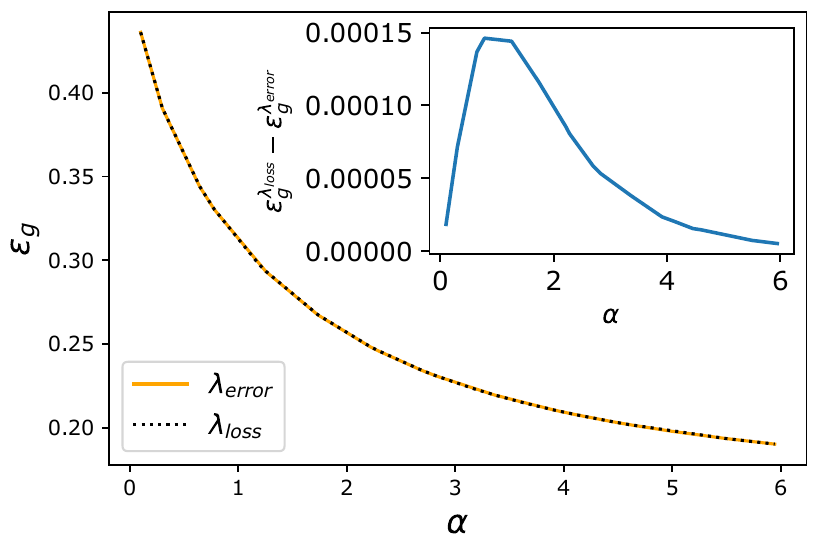}}
    \caption{Left: optimal penalization for logistic regression as a function of the sample complexity $\alpha$, for $\noisestr = 0.5$. 
    Right: Test error at optimal $\lambda$ for $\sigma = 0.5$, as a function of $\alpha$. Orange line (respectively black dotted line) corresponds to $\lambda$-error (respectively $\lambda$-loss). The two curves are indistinguishable on the plot. The blue curve in the inset shows $\varepsilon_g^{\lambdaloss} - \varepsilon_g^{\lambdaerror}$ as a function of $\alpha$: it appears that the difference is around $\sim 10^{-4}$.
    %generalisation loss and error as a function of $\lambda$ for $\alpha = 5, \noisestr = 0.5$.
    }
    \label{fig:optimal_lambda}
    %\centering
    %\subfigure[]{\includegraphics[width=\columnwidth]{Figures/optimal_lambda_and_landscapes.pdf}}
    %\caption{Left: optimal penalization for logistic regression as a function of the sample complexity $\alpha$, for $\noisestr = 0.5$. Right: generalisation loss and error as a function of $\lambda$ for $\alpha = 5, \noisestr = 0.5$.}
    %\label{fig:optimal_lambda}
\end{figure}

%\begin{figure}[hb!]
%    \centering
%    \includegraphics[width=0.8\columnwidth]{Figures/test_error_sigma_0.5.pdf}
%    \caption{Test error at optimal $\lambda$ for $\sigma = 0.5$, as a function of $\alpha$. Orange line (respectively black dotted line) corresponds to $\lambda$-error (respectively $\lambda$-loss). The two curves are indistinguishable on the plot. The blue curve in the inset shows $\varepsilon_g^{\lambdaloss} - \varepsilon_g^{\lambdaerror}$ as a function of $\alpha$: it appears that the difference is around $\sim 10^{-4}$.}
%    \label{fig:error_loss_close}
%\end{figure}

Figure \ref{fig:multiple_densities_lambda} depicts the joint density of $\hat{f}_{\erm}$ (x-axis) and $\hat{f}_{\bo}$ (y-axis) for several values of the regularization $\lambda$ and the noise~$\tau$. As $\lambda$ increases, we observe that the logistic regression changes from overconfident to underconfident, 
    %$\Delta_p \rightarrow 1$ {\color{purple}LZ: It was rather $\Delta_p \rightarrow p-1$, no?} for $p > 0.5$, 
    as we could also observe in figure \ref{fig:calibration_lambda}.

\begin{figure}[ht!]
    \centering
    \includegraphics[width=0.8\textwidth]{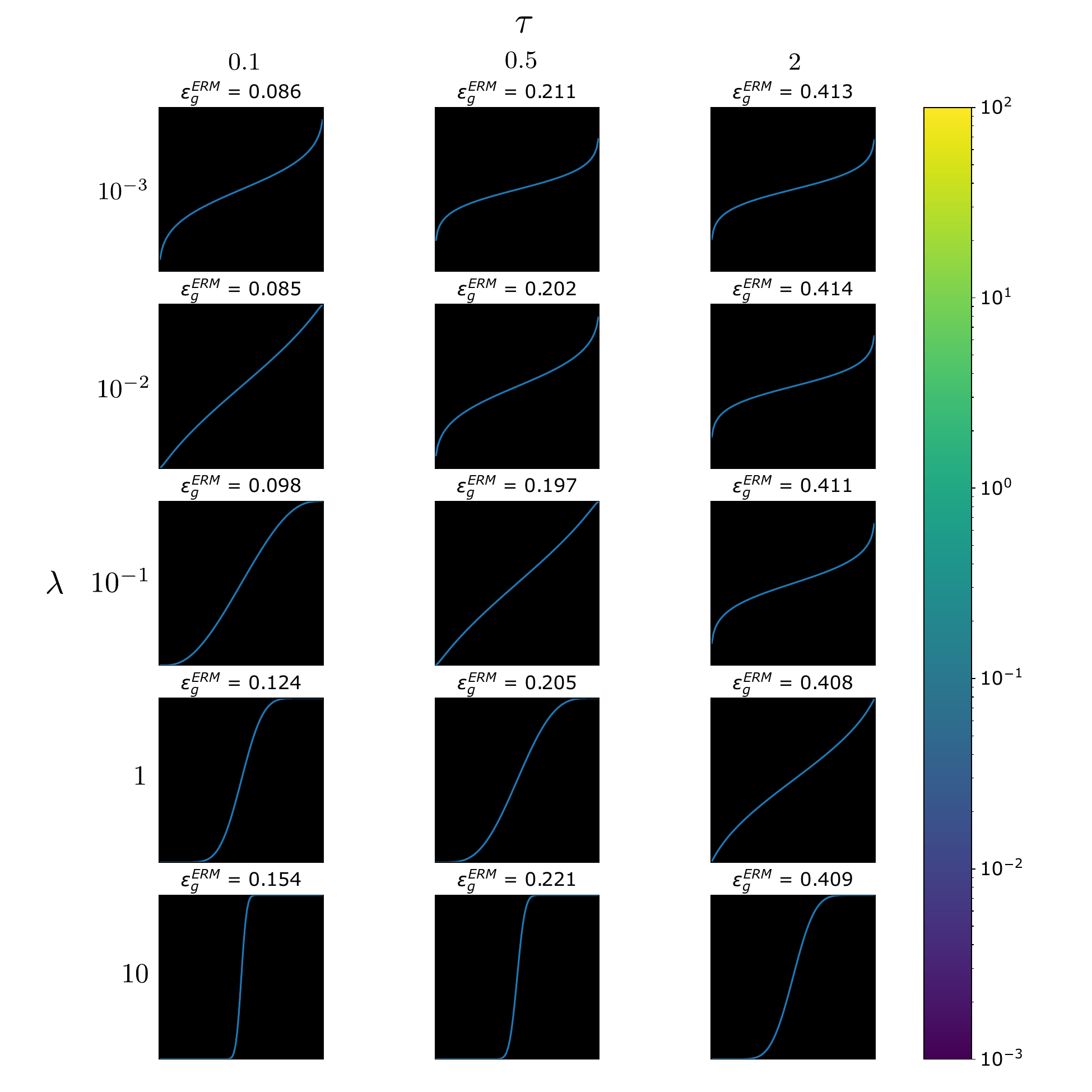}
    \caption{Joint density of $\hat{f}_{\erm}$ (x-axis) and $\hat{f}_{\bo}$ (y-axis) at $\alpha = 5$. The best possible test errors are respectively $\varepsilon^{\star}_g = 0, 0.148, 0.352$ for $\noisestr = 0, 0.5, 2$. For the Bayes estimator with $\alpha = 5$, the test errors are $\varepsilon^{\bo}_g = 0.083, 0.198, 0.402$}
    \label{fig:multiple_densities_lambda}
\end{figure}

Next in Figure \ref{fig:multiple_densities_bo_erm_error_loss}  we depict the densities for $\lambdaerror$ and $\lambdaloss$ for different values of $\alpha$ and $\tau$. We observe an overall improvement in the calibration for these optimal regularizations. 
%further corroborate the claim that logistic regression run at $\lambdaerror$ improves both the test error and the calibration with respect to $\lambda=0$. Further at $\lambdaloss$ the calibration is even better without loosing much on the test error. Compare visually Figure \ref{fig:multiple_densities_bo_erm_error} for $\lambdaerror$, and Figure \ref{fig:multiple_densities_bo_erm_loss} for $\lambdaloss$ to the Figure \ref{fig:multiple_densities_bo_erm} for $\lambda=0$. We see the calibration line is clearly improved (closer $y=x$) with respect to the non-regularized regression. 

\begin{figure}[h!]
    \centering
    \includegraphics[width=0.49\textwidth]{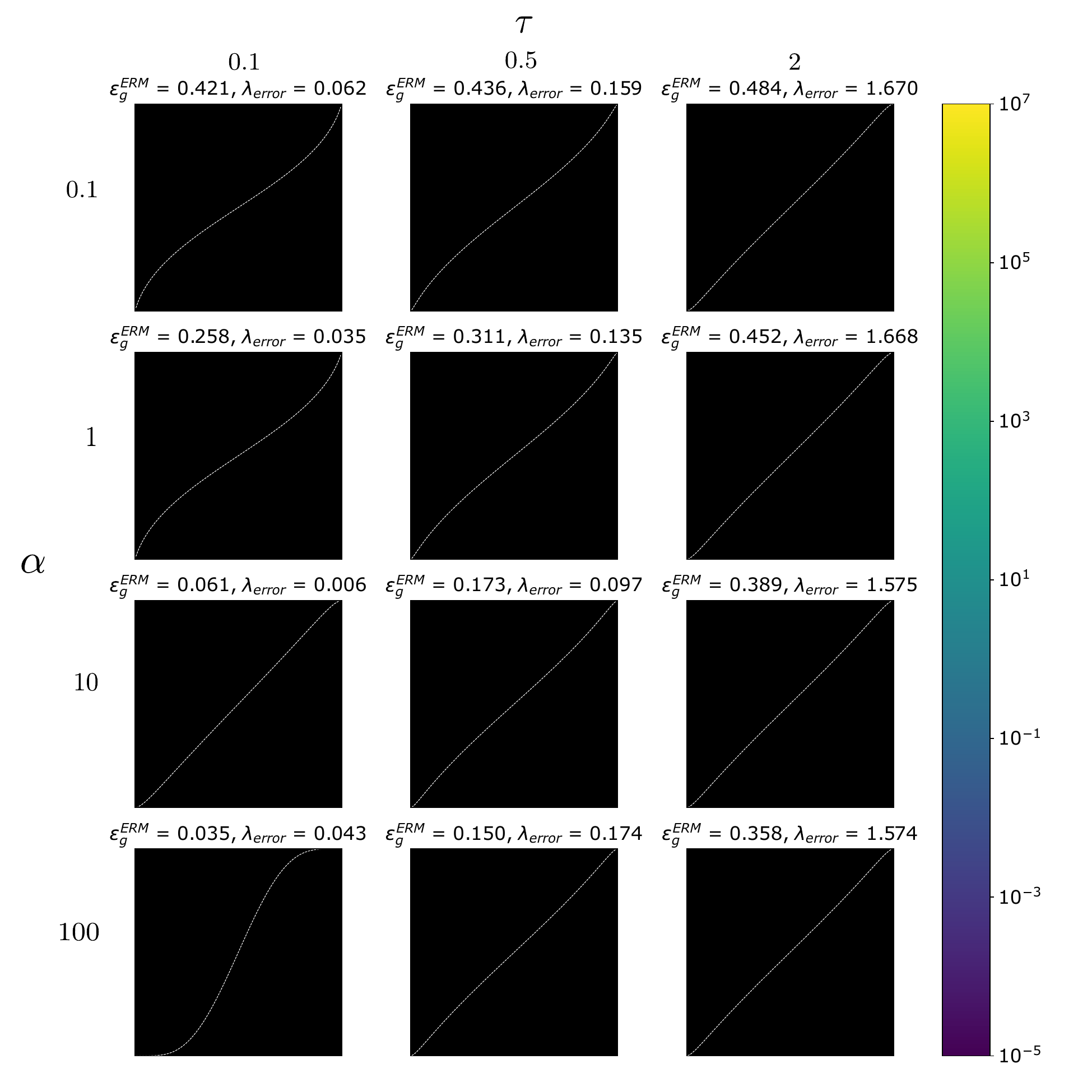}
    \includegraphics[width=0.49\textwidth]{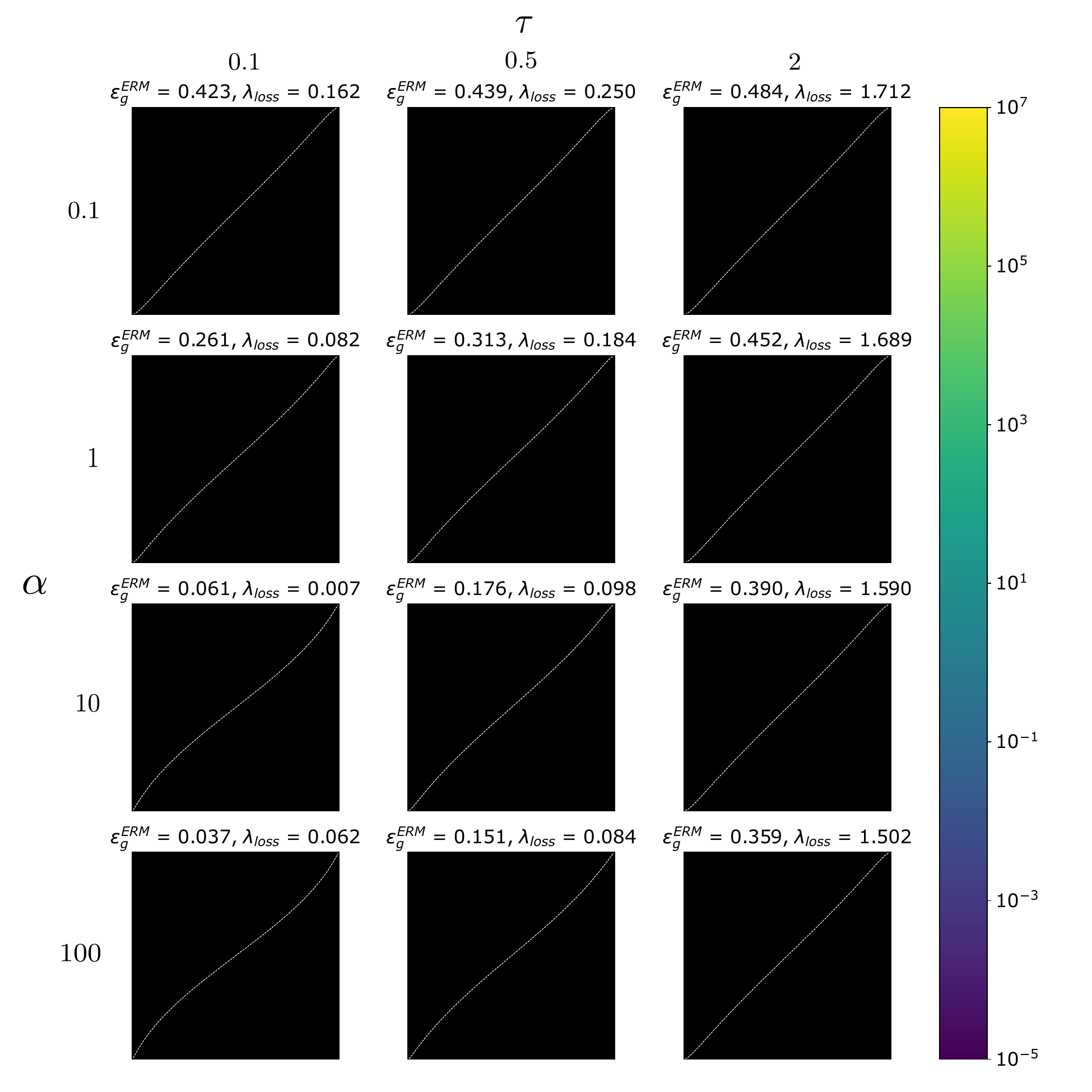}
    \caption{Left: Joint density $\rho_{\erm, \bo}$, at $\lambda = \lambdaerror$.
    $\lambdaerror$ and the test error of ERM are indicated above the corresponding plot. Right:Joint density $\rho_{\erm, \bo}$, at $\lambda = \lambdaloss$.
    $\lambdaloss$ and the test error of ERM are indicated above the corresponding plot. }
    \label{fig:multiple_densities_bo_erm_error_loss}
   % \label{fig:multiple_densities_bo_erm_loss}
\end{figure}

%\begin{figure}[h!]
%    \centering
%    \includegraphics[width=0.8\textwidth]{Figures/multiple_densities_lambda_loss.pdf}
%    \caption{Joint density $\rho_{\erm, \bo}$, at $\lambda = \lambdaloss$.
%    $\lambdaloss$ and the test error of ERM are indicated above the corresponding plot.}
%    \label{fig:multiple_densities_bo_erm_loss}
%\end{figure}

% ======== 

%\newpage
%\section{Behaviour at large $\alpha$ and $\lambda = 0$}

\newpage 
\section{Comparison to the data generated by logit model}
\label{app:logit}

As mentioned before, our state evolution equations can be adapted to data generated by the logit model, as studied in \cite{bai_dont_2021}. The likelihood is defined in Equation~\eqref{eq:logistic_data_model}. Since this change only concerns the data distribution, Algorithm~\ref{alg:gamp} is unchanged. However, state evolution is changed in the update of $\hat{m}, \hat{q}, \hat{V}$: the partition function $\mathcal{Z}_0$ is now 
\begin{equation}
    \mathcal{Z}_0(y, \omega, V) = \int \dd{z} ~\sigma(y \times z) \mathcal{N}(z | \omega, V)
\end{equation}

Note also that the expression of the calibration is now 
\begin{equation}
    \Delta_p = p - \int \dd{x} \sigma(x) \mathcal{N}(x | \sfrac{m}{q} \times \sigma^{-1}(p), \rho - \sfrac{m^2}{q})
    \label{eq:calibration_logistic_data}
\end{equation}

\subsection{Behaviour at $\lambda = 0^{+}$}

In \cite{bai_dont_2021}, it has been shown that as the sampling ratio $\alpha$ goes to $\infty$, the unpenalized logistic classifier is calibrated when the data is generated by the \textit{logit} model
\begin{equation}
    \mathbb{P}(y_{\star} = 1) = \sigma(\mathbf{w}_{\star} \cdot \mathbf{x}) \label{eq:logistic_data_model}
\end{equation}

In this section, we numerically recover the results from \cite{bai_dont_2021} i.e  the unpenalized logistic estimator is calibrated asymptotically and the calibration decreases as $\sfrac{1}{\alpha}$. Figure~\ref{fig:logistic_lambda_0_calibration} plots the calibration at $p = 0.75, 0.9$ and $0.99$ for $\alpha \in [10, 10^4]$. One can observe a decay of $\Delta_p$ with a power law, which confirms that with logistic data, the unpenalized logistic classifier is asymptotically calibrated at all levels. Fitting a linear model on these curves gives slopes equal to $-0.99, -1.00, -1.04$ for $p = 0.75, 0.9, 0.99$ respectively, which numerically validates the $\sfrac{1}{\alpha}$ rate derived in \cite{bai_dont_2021}.

\begin{figure}[!ht]
    \centering
    \includegraphics[width = 0.5 \textwidth]{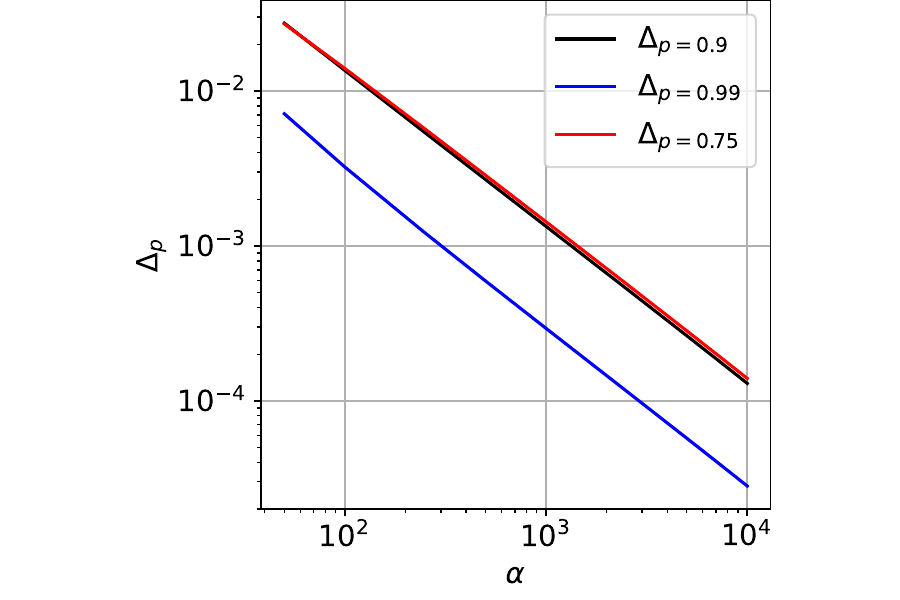}
    \caption{Calibration $\Delta_p$ at $p = 0.75, 0.9$ and $0.99$ of logistic regression with the logit model as a function of $\alpha$. The plots are given in log-log scale. On this scale, the curves have respective slopes $-0.99, -1.01, -1.04$}
    \label{fig:logistic_lambda_0_calibration}
\end{figure}

We compare here to the calibration with probit data, at $\noisestr = 0.5$. In particular, we exhibit that the logistic classifier cannot be calibrated at all levels $p$. Indeed, as $\alpha \to \infty$, it can be noted that $\cos(\hat{\mathbf{w}}_{\erm}, \wstar) =  \sfrac{m^2}{q} \to_{\infty} 1$. Moreover, we observe that $\sfrac{m}{q} = \sfrac{m^2}{q} \times \sfrac{1}{m} \to_{\infty} m_{\infty} \coloneqq \lim m $.
Using the expression for calibration from Theorem \ref{thm:calibration}, we get that for $p > \sfrac{1}{2}$,
\begin{equation}
    \Delta_p \to_{\infty} p - \sigma_{\star}\left( \frac{\sigma^{-1}(p)}{\tau \times m_{\infty}} \right)
\end{equation}
And deduce that 
\begin{equation}
    \Delta_p = 0 \Leftrightarrow \frac{\sigma_{\star}^{-1}(p)}{\sigma^{-1}(p)} = \frac{1}{\tau \times m_{\infty}}
\end{equation}
Noting $r(p) \coloneqq \frac{\sigma_{\star}^{-1}(p)}{\sigma^{-1}(p)}$, we get the condition 
\begin{equation}
    p = r^{-1}(\frac{1}{\tau \times m_{\infty}})
\end{equation}
With $\noisestr = 0.5$, we numerically get $m_{\infty} \simeq 3.53 \Rightarrow \noisestr \times m_{\infty} \simeq 1.76$
The level $p_{0}$, defined as the only $p  > \sfrac{1}{2}$ such that $\Delta_{p} = 0$, is thus
\begin{equation}
    p_0 = r^{-1}(\frac{1}{\tau \times m_{\infty}}) \simeq r^{-1}(0.57) \simeq 0.937 
\end{equation}
For $\sfrac{1}{2} < p < p_0$ (respectively $1 > p > p_0$), $\Delta_p > 0$ (respectively $\Delta_p < 0$). This can be observed in Figure \ref{fig:large_alpha_calibration} where we have plotted $\Delta_p$ for several levels. For $p \neq p_0$, the calibration seems to converge a finite value. On the other hand, at $p = p_0$, $\Delta_p$ converges to 0 as a power-law.

\begin{figure}[h!]
    \centering
    \subfigure{\includegraphics[width=0.42 \textwidth]{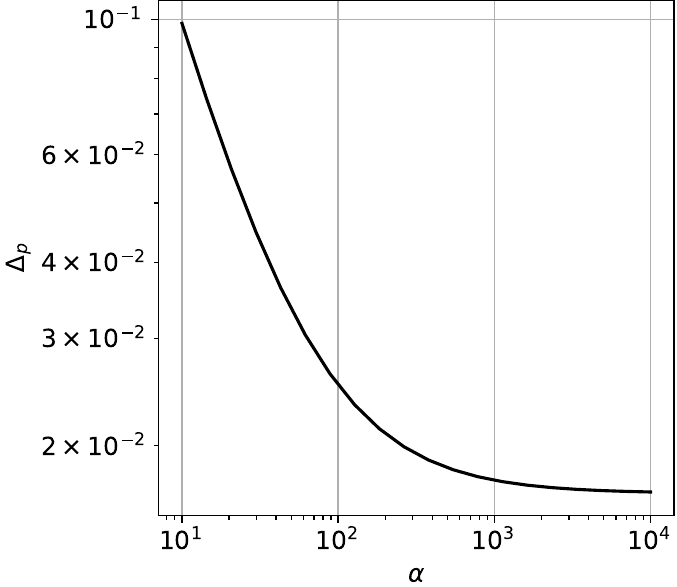}}
    \subfigure{\includegraphics[width=0.40 \textwidth]{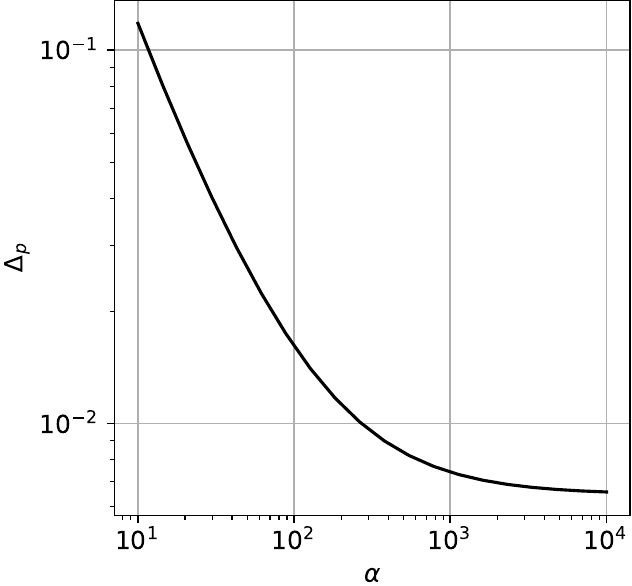}}
    
    \subfigure{\includegraphics[width=0.42 \textwidth]{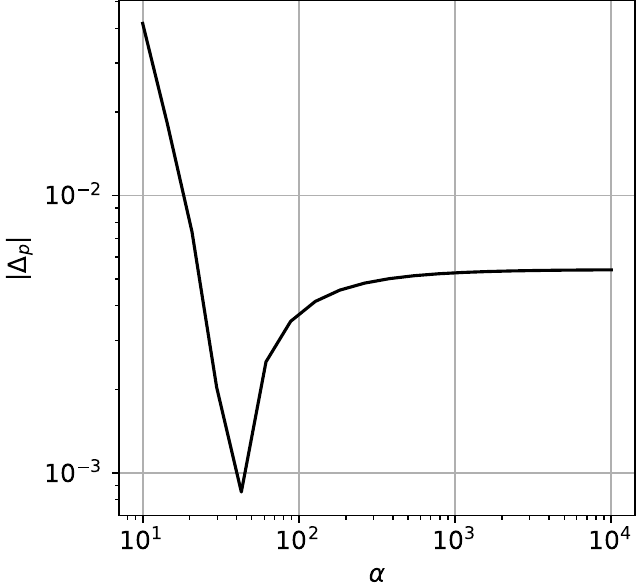}}
    \subfigure{\includegraphics[width=0.40 \textwidth]{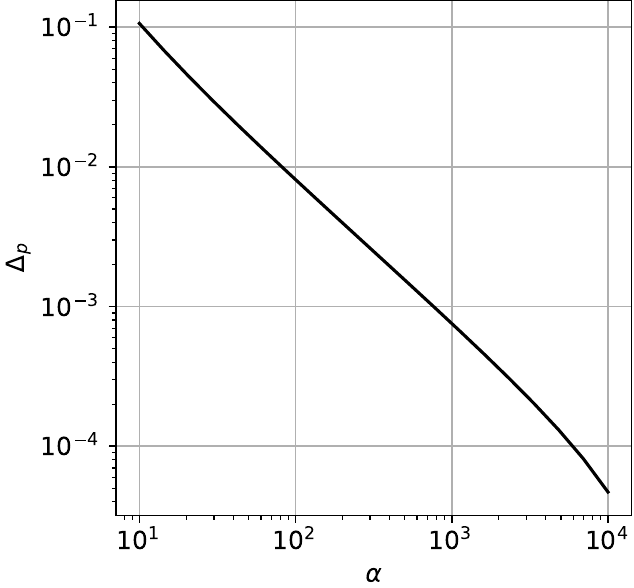}}
    \caption{Calibration for logistic regression with $\lambda = 0^{+}, \noisestr = 0.5$, at four different levels: $p = 0.75$ (Top left), $p = 0.9$ (Top right), $p = 0.99$ (Bottom left) and $p = p_0 \simeq 0.937$ (Bottom right). For $p = 0.99$, $\Delta_p$ becomes negative around $\alpha \simeq 50$ so the absolute value of $\Delta_p$ has been plotted instead. C seems to converge to zero for $p = p_0$ only.}
    \label{fig:large_alpha_calibration}
\end{figure}

\subsection{Behaviour a $\lambda = 1$, $\lambda_{\rm error}$ and $\lambda_{\rm loss}$}

In this section, we adapt the theoretical results of Figure~\ref{fig:calibration_optimal_lambda} to the logit data model: we compute $\lambda_{\rm error}$ and $\lambda_{\rm loss}$ and plot their respective test errors and calibration. Note the definition of the test error and loss in this setting: 
\begin{align}
    \begin{cases}
        \varepsilon_{g} &= \sum_y \mathbb{E}_{\xi \sim \mathcal{N}(0, 1)} \left[ \mathcal{Z}_0(y, \sfrac{m}{\sqrt{q}}\xi, 1 - \sfrac{m^2}{q}) \delta(\sign(\xi) = y) \right] \\
        \mathcal{L}_{g} &= - \sum_y \mathbb{E}_{\xi \sim \mathcal{N}(0, 1)} \left[ \mathcal{Z}_0(y, \sfrac{m}{\sqrt{q}}\xi, 1 - \sfrac{m^2}{q}) \log \sigma(y \times \sqrt{q}\xi) \right]
    \end{cases}
\end{align}

% Note: we include the factor 0.5 of the Gaussian in the empirical risk, so we don't add it to the lambda. In other words, lambda = 1 is indeed the MAP (and not lambda = 0.5) (we use the same convention as sklearn)

Moreover, with the logit data model, the empirical risk at $\lambda = 1$, now has a Bayesian interpretation. The risk corresponds to the logarithm of the posterior distribution on $\vec{w}$, up to a normalization constant, because $\vec{w}_{\star}$ is sampled from a Gaussian with identity covariance. At $\lambda = 1$, the empirical risk minimizer $\hat{\vec{w}}_{\rm erm}$ is the Maximum A Posteriori (MAP). In this section, we compare the performance of logistic regression with the two different optimal regularizations and with $\lambda = 1$.

The left panel of Figure~\ref{fig:optimal_lambda_logistic} shows the value of $\lambda_{\rm error}$ and $\lambda_{\rm loss}$. As with the probit model, $\lambda_{\rm loss} > \lambda_{\rm error}$. Note also that both optimal values are bigger than $1$ for this range of $\alpha$. The right panel shows their respective test error $\varepsilon_{g}$. As with the probit model, $\lambda_{\rm error}$ has a lower error than $\lambda_{\rm loss}$. Not surprisingly, $\lambda = 1$ has worse test error than both optimal $\lambda$.
Left panel of Figure~\ref{fig:optimal_lambda_logistic_calibration} shows the calibration with the three different regularizations at $p = 0.75$. We observe that $\lambda = 1$ yields an overconfident estimator (consistent with the fact that $\lambda_{\rm error}$ and $\lambda_{\rm loss}$ are both bigger than $1$), and as  before, $\lambdaloss$ is less confident than $\lambdaerror$. 
Remark that an underconfident estimator is not necessarily better than an overconfident one, and the calibration $\Delta_p$ is only a measure on one level $p$. To compare the different estimators more fairly, we can thus use a metric called \textit{Expected Calibration Error} defined as 
\begin{equation}
    {\rm ECE} \coloneqq \mathbb{E}_{\hat{f}(\mathbf{x})}\left( | \Delta_{\hat{f}(\mathbf{x})} | \right) = \int \dd{p} |\Delta_p| \frac{\mathcal{N}(\sigma^{-1}(p) | 0, q_{\erm})}{p(1 - p)}
\end{equation}
The ECE measures the average of $|\Delta_p|$ at all levels $p$ weighted by the probability that $\hat{f}(\mathbf{x}) = p$. In other words, at a given level $p$, if $\mathbb{P}(\hat{f}(\mathbf{x}) = p) = 0$, the ECE of the estimator will not be affected by the calibration of the estimator at $p$.
The right panel of Figure \ref{fig:optimal_lambda_logistic_calibration} plots the ECE as a function of $\alpha$ for $\lambda = 1$, $\lambdaerror$ and $\lambdaloss$. We again observe that $\lambdaloss$ has a lower ECE than $\lambdaerror$, which confirms that optimizing $\lambda$ for the test loss yields a more calibrated estimator. Moreover, $\lambda = 1$ yields an estimator with the worst ECE, which is coherent with the left panel: at $p = 0.75$, the absolute value of its calibration is higher than $\lambda_{\rm error}$ and $\lambda_{\rm loss}$. 
Our numerical results show that even if we know the prior distribution on the posterior and the likelihood, using only a point estimate for the parameter (here the maximum a posteriori) yields an overconfident estimator.

\begin{figure}[h!]
    \centering
    \subfigure{\includegraphics[width = 0.32 \textwidth]{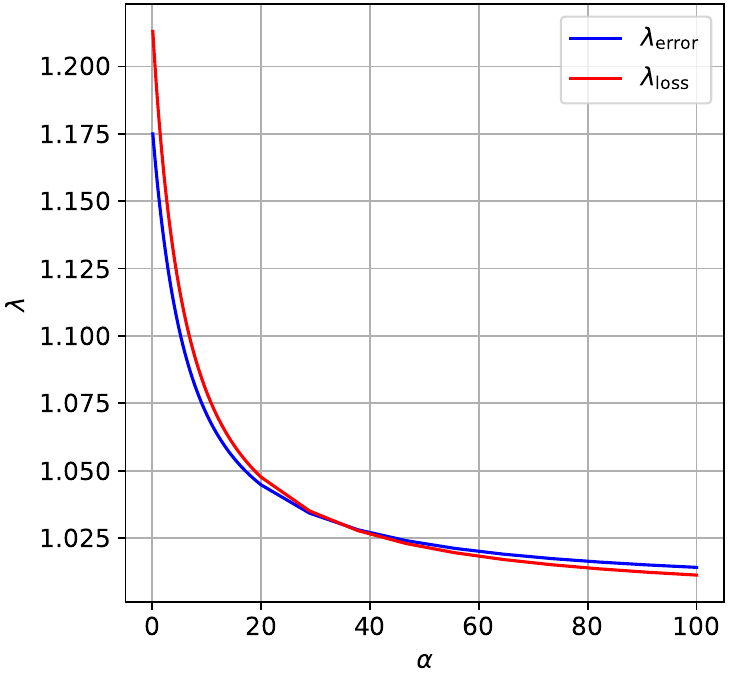}}
    \subfigure{\includegraphics[width = 0.32 \textwidth]{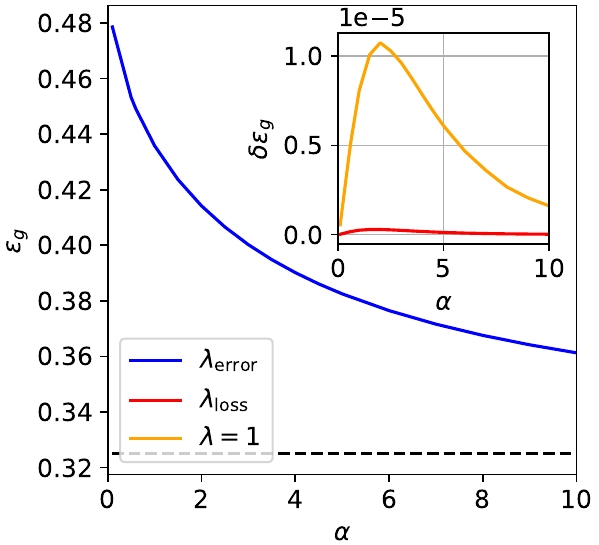}}
    \caption{Left: Values of $\lambda_{\rm error}$ and $\lambda_{\rm loss}$ as a function of $\alpha$ for the logistic data model. Center: Values of the test error $\varepsilon_g$ for $\lambda_{\rm error}$ (blue curve) . The inset plots the difference of test error $\delta \varepsilon_{g, {\rm loss}} \coloneqq \varepsilon_g(\lambda_{\rm loss}) - \varepsilon_g(\lambda_{\rm error})$ (red curve) and $\delta \varepsilon_{g, 1} \coloneqq \varepsilon_g(\lambda = 1) - \varepsilon_g(\lambda_{\rm error})$ (orange curve). Right: Calibration at $p = 0.75$ of logistic regression on logistic data, for $\lambda = 1$, $\lambdaerror$ and $\lambdaloss$. The curves are given by running state evolution. }
    \label{fig:optimal_lambda_logistic}
\end{figure}

\begin{figure}[h!]
    \centering
   \subfigure{\includegraphics[width = 0.4 \textwidth]{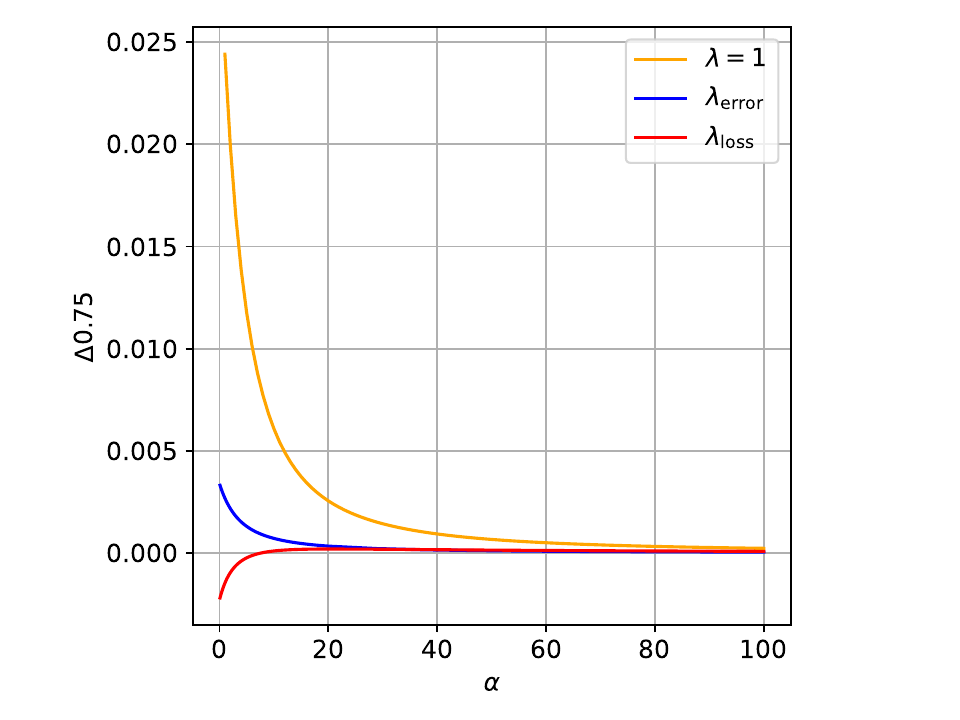}}
\subfigure{\includegraphics[width = 0.4 \textwidth]{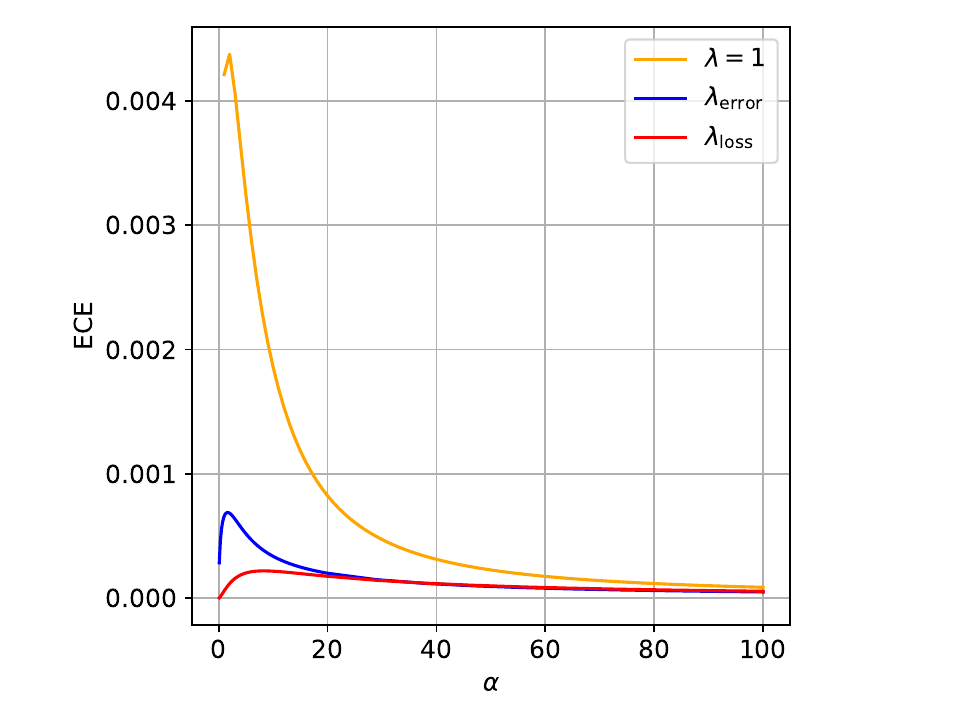}}
    \caption{Left: Calibration at $p = 0.75$ of logistic regression on logistic data, for $\lambda = 1$, $\lambdaerror$ and $\lambdaloss$. The curves are given by running state evolution. Right: Expected Calibration Error (ECE) for $\lambda = 1$, $\lambdaerror$, $\lambdaloss$. The lower ECE, the better.}
    \label{fig:optimal_lambda_logistic_calibration}
\end{figure}

\end{document}